\documentclass[twoside,11pt]{article}
%

%
%
\usepackage[abbrvbib, preprint]{jmlr2e}
\usepackage{jmlr2e}
\usepackage{amsmath}
\usepackage{enumerate}
\usepackage{graphicx}
\usepackage{subfigure}
\usepackage{amssymb}
\usepackage{setspace}
\usepackage{mathrsfs}
\usepackage{color}
\usepackage{multirow} 
\usepackage{array}
\usepackage{pdfpages}
\usepackage{parskip}
\usepackage{tabularx}
\usepackage{arydshln}
\usepackage{booktabs}
\usepackage{threeparttable}
\usepackage{placeins}
\usepackage{makecell}
\usepackage[cmyk]{xcolor}


\usepackage{lastpage}
\jmlrheading{26}{2025}{1-\pageref{LastPage}}{x/xx; Revised x/xx}{x/xx}{xx-xxxx}{Xiao-Song Yang, Qi Zhou and Xuan Zhou }


\ShortHeadings{Minimum Width of Deep Narrow Networks}{Yang, Zhou and Zhou}
\firstpageno{1}

\begin{document}

\title{Minimum Width of Deep Narrow Networks for Universal Approximation}
\author{\name Xiao-Song Yang \email yangxs@hust.edu.cn \\
       \addr School of Mathematics and Statistics\\
       Huazhong University of Science and Technology\\
       Wuhan 430074, P.R. China\\
       Hubei Key Laboratory of Engineering Modeling and Scientifc Computing\\
       Huazhong University of Science and Technology\\
       Wuhan 430074, P.R. China
       \AND
       \name Qi Zhou \email qizhou1037@hust.edu.cn\\
       \addr School of Mathematics and Statistics\\
       Huazhong University of Science and Technology\\
       Wuhan 430074, P.R. China
       \AND
       \name Xuan Zhou \email xuanzhou1037@hust.edu.cn\\
       \addr School of Mathematics and Statistics\\
       Huazhong University of Science and Technology\\
       Wuhan 430074, P.R. China
       }

\editor{My editor}

\maketitle

\begin{abstract}
	Determining the minimum width of fully connected neural networks has become a fundamental problem in recent theoretical studies of deep neural networks. In this paper, we study the lower bounds and upper bounds of the minimum width required for fully connected neural networks in order to have universal approximation capability, which is important in network design and training. We show that $w_{min}\leq\max(2d_x+1, d_y)$ also holds true for networks with ELU, SELU activation functions, and the upper bound of this inequality is attained when $d_y=2d_x$, where $d_x$, $d_y$ denote the input and output dimensions, respectively. Besides, we show that $d_x+1\leq w_{min}\leq d_x+d_y$ for networks with LeakyReLU, ELU, CELU, SELU, Softplus activation functions, by proving that ReLU activation function can be approximated by these activation functions. In addition, in the case that the activation function is injective or can be uniformly approximated by a sequence of injective functions (e.g., ReLU), we present a new proof of the inequality $w_{min}\ge d_y+\mathbf{1}_{d_x<d_y\leq2d_x}$ by constructing a more intuitive example via a new geometric approach based on Poincar\'e-Miranda Theorem. 
\end{abstract}
\begin{keywords}
 Minimum Width, Neural Networks, Universal Approximation, Activation Function, Poincar\'e-Miranda Theorem
\end{keywords}

\section{Introduction}\label{sec_1}
It is well known now that the architecture of deep neural networks (DNNs), particularly the width and depth of DNNs, significantly influences the performance of DNNs. Since empirical observations have shown that width has a significant impact on the performance of neural networks, a lot of research have been dedicated to the effect of width on the expressive power of neural networks in terms of approximating various functions as well as expected. In this line of research, deep narrow neural networks, which refer to the networks with large depth but extremely small width, have been an active research area of DNNs. 

Mathematically, the expressive power of neural networks is characterized by the so called Universal Approximation Property (UAP), which refers to the ability of neural networks to approximate a wide range of functions. Recent studies have shown that, to guarantee the UAP, the minimum network width of DNNs depends not only on the input and output dimensions, but also on the choice of activation function. 
\subsection{Why Minimum Width is Important}\label{sec_1.1}
Recent studies have shown the advantages of reducing network width and the disadvantages of excessively large network width in DNNs training and their generalization. Based on the Neural Tangent Kernel (NTK) theory, \cite{NTK} proved that networks with smaller width appear to converge faster than wider ones. Besides, it is shown that narrow networks require fewer parameters for UAP (\cite{Yarotsky}) and memorization (\cite{Park_memorization}; \cite{Vardi}) than shallow networks. Besides, several works have shown that narrow neural networks may exhibit better generalization ability (\cite{ICLR2025_narrowbetter}), and that once a DNN achieves the sufficient capacity, width is harmful to its fitting and performance (\cite{limitation_of_large_width}). To reduce training time and memory costs while improving the ability of generalization, a key problem along this line of research is to reduce the network width as much as possible. 

However, several studies have shown that the network width can not be arbitrarily reduced while preserving the approximation ability. \cite{Johnson} proved that networks with injective continuous activation functions and network width less than $d_x+1$ can not approximate arbitrary function. \cite{Cai} proved that for an arbitrary activation function, network width must exceed or equal to $\max(d_x, d_y)$ to achieve the UAP. \cite{Kim} proved that network width must exceed or equal to $d_y+\mathbf{1}_{d_x<d_y\leq2d_x}$ for injective activation functions. Therefore, the minimum width $w_{min}$, which is defined as the minimum sufficient hidden layer width that guarantees the UAP in the function space $C(\mathbb{R}^m, \mathbb{R}^n)$, is significant for the DNN design and training.

Consequently, the minimum width is of both theoretical and practical significance. It provides a guidance for training a neural network while minimizing the consumption of computational resources. 
\subsection{Our Contributions}\label{sec_1.2}
In this paper, we investigate the minimum width for lots of activation functions. For reader's convenience, the state-of-the-art (SOTA) results on the minimum width $w_{min}$ with our contributions, are summarized in Table \ref{table_summary_results}. In this table, an activation function is referred to as uniformly continuous if it is injective or can be uniformly approximated by a sequence of injective functions, and the variants of ReLU are LeakyReLU, ELU, CELU, SELU, Softplus.

Our main contributions can be summarized into the following several parts.
\begin{itemize}
	\item We show the inequality of minimum width that $w_{min}\leq \max(2d_x+1, d_y)$ (Theorem \ref{thm_variant_LR_width}) for networks with ELU and SELU. We further show that the upper bound $w_{min}=\max(2d_x+1, d_y)$ is attained when $d_y=2d_x$ (Theorem \ref{thm_LRwidth}). These results extend the range of activation functions in (\cite{Hwang}).
	\item We show that $d_x +1\leq w_{min}\leq d_x+d_y$ (Theorem \ref{thm_variant_ReLU_width}) for networks with variants of ReLU (LeakyReLU, ELU, CELU, SELU, Softplus), which extends the range of activation functions in (\cite{Hanin}).
	\item[*] In addition, we construct a more intuitive example via a new geometric approach based on the Poincar\'e-Miranda Theorem, thus giving a new proof of the inequality $w_{min}\geq d_y+\mathbf{1}_{d_x<d_y\leq 2d_x}$ obtained by \cite{Kim}, for the case that the activation function is injective or uniformly approximated by a sequence of injective functions.
\end{itemize}

\begin{table}[htbp]
	\small
	\centering
	\caption{Summary of SOTA results on the minimum width for different activation functions. $K$ denotes a compact domain with dimention $d_x$, and $d_y$ denotes the dimension of the range space.}
	\label{table_summary_results}
	\begin{threeparttable}  
		\scalebox{0.9}{
		\begin{tabular}{|c | c | c | c|}
			\hline
			Reference & Domain & Activation function $\sigma$ & Upper/lower bounds \\
			\hline
			\hline
			\cite{Hanin} & $C(K, \mathbb{R}^{d_y})$ & ReLU & $d_x+1 \leq w_{min}\leq d_x+d_y$\\
			\hline
			\cite{Johnson}& $C(K, \mathbb{R})$& uniformly continuous\tnote{1} & $w_{min}\geq d_x+1$\\
			\hline
			\multirow{2}{*}{\cite{Kidger}}& $C(K, \mathbb{R}^{d_y})$& continuous nonpoly & $w_{min}\leq d_x+d_y+1$\\
			\cline{2-4}
			& $C(K, \mathbb{R}^{d_y})$& nonaffine poly & $w_{min}\leq d_x+d_y+2$\\
			\hline
			\cite{Park} & $C(K, \mathbb{R}^{d_y})$ & ReLU+STEP & $w_{min}=\max(d_x+1, d_y)$\\
			\hline
			\multirow{1}{*}{\cite{Cai}}& $C(K, \mathbb{R}^{d_y})$& Arbitrary & $w_{min}\geq \max(d_x, d_y)$\\
			\hline
			\multirow{6}{*}{\cite{Hwang}} & $C(K, \mathbb{R}^{d_y})$ & LeakyReLU & $w_{min}\leq \max(2d_x+1, d_y)$\\
			\cline{2-4}
			& $C(K, \mathbb{R}^{d_y})$ & ReLU & $w_{min}\leq \max (2d_x+1, d_y)+1$\\
			\cline{2-4}
			& $C(K, \mathbb{R}^{d_y})$ & continuous nonpoly & $w_{\text{min}}\leq \max(2d_x+1, d_y)+2$\\
			\cline{2-4}
			& $C([0, 1]^2, \mathbb{R}^2)$ & ReLU & $w_{min}=4$\\
			\cline{2-4}
			& $C([0, 1]^2, \mathbb{R}^2)$ & LeakyReLU & $w_{min}=4$\\
			\cline{2-4}
			& $C([0, 1]^2, \mathbb{R}^2)$ & uniformly continuous\tnote{1} & $w_{min}\geq 4$\\
			\hline
			\cite{Kim} & $C(K, \mathbb{R}^{d_y})$ & uniformly continuous\tnote{1} & $w_{min}\geq d_y+\mathbf{1}_{d_x<d_y\leq2d_x}$\\
			\hline
			\textbf{Ours} (Theorem \ref{thm_variant_LR_width}) & $C(K, \mathbb{R}^{d_y})$ & ELU,  SELU & $w_{min}\leq \max(2d_x+1, d_y)$ \\
			\cline{2-4}
			\textbf{Ours} (Theorem \ref{thm_LRwidth})& $C([0, 1]^m, \mathbb{R}^{2m})$ & \makecell{ELU, SELU} & $w_{min}=2m+1$\\
			\cline{2-4}
			\textbf{Ours} (Theorem \ref{thm_variant_ReLU_width})& $C(K, \mathbb{R}^{d_y})$ & variants of ReLU\tnote{2} & $d_x+1\leq w_{min}\leq d_x+d_y$\\
			\hline
		\end{tabular}
	}
		\begin{tablenotes}    
			\footnotesize              
			\item[1] They are injective or uniformly approximated by a sequence of injective functions.
			\item[2] Variants of ReLU are LeakyReLU, ELU, CELU, SELU, Softplus. 
		\end{tablenotes} 
	\end{threeparttable}  
\end{table}
\subsection{Related Work}\label{sec_1.3}
We briefly review some studies on the UAP. \cite{UAP_sigmoidal} proved that wide and shallow neural networks with a single hidden layer and sigmoidal activation functions are dense in the space of continuous functions. \cite{UAP_nonpoly} extended this result by demonstrating that the universal approximation property holds for a broader class of non-polynomial activation functions.

Besides such prior studies focused on the approximation ability of wide and shallow networks, several works have investigated that whether a deep narrow neural network can approximate arbitrary functions in $C(\mathbb{R}^{d_x}, \mathbb{R}^{d_y})$, and established upper bounds for the minimum width $w_{min}$ for various activation functions. For the given $d_x$-dimensional input and $d_y$-dimensional output, \cite{Hanin} proved that $w_{min}\leq d_x+d_y$ for ReLU neural networks. \cite{Park} proved that for a new combination of activation functions, networks with width $\max(d_x+1, d_y)$ satisfy the UAP. \cite{Kidger} proved that for continuous and nonpolynomial activations, networks with $d_x+d_y+1$ width can approximate arbitrary continuous functions, while the width result is changed to $d_x+d_y+2$ for nonaffine polynomial activation functions. \cite{Hwang} proved that networks with $\max(2d_x+1, d_y)+\alpha$ width can approximate an arbitrary continuous function in $C(\mathbb{R}^{d_x}, \mathbb{R}^{d_y})$, where $\alpha\in\{0, 1, 2\}$ is a non-negative constant depending on the activation function. 

\subsection{Notations}\label{sec_1.4}
In this subsection, we present the notations used throughout this paper. The notations adopted here are primarily from (\cite{Hwang}; \cite{petersen_DL}).
\begin{itemize}
	\item \textbf{Set of Matrices}: $\mathbb{F}^{m \times n}$ denotes the set of $m \times n$ matrices over a field $\mathbb{F}$, e.g., $\mathbb{F} = \mathbb{R}$ or $\mathbb{C}$. We use $\mathbb{F}^k$ as a shorthand for $\mathbb{F}^{k \times 1}$ and refer to it as the column vector space over $\mathbb{F}$.

	\item \textbf{Set of Full-Rank Matrices}: $\mathbb{F}^{m \times n}_{Full}$ denotes the set of full-rank $m \times n$ matrices over the field $\mathbb{F}$. 
	
	\item \textbf{Column Vector}: $x = (x_1, x_2, \dots, x_n)$ denotes an $n$-dimensional column vector unless otherwise stated, where $x_i$ is the $i$-th component of $x$. Particularly, $\mathbb{R}_+^n$ and $\mathbb{N}_+^n$ denote the sets of $n$-dimensional column vectors whose components are positive real numbers and positive natural numbers, respectively.
	
    \item \textbf{Image}: The image of $A$ under the map $f$, denoted by $f(A)$, is defined as the set of all values attained by $f$ when its argument varies over $A$, i.e., $f(A)=\{f(x)\vert x\in A\}$.
	
	\item \textbf{Compactly Approximate}: $\mathcal{B}\prec \mathcal{A}$ means that $\mathcal{A}\subset C(\mathbb{R}^m, \mathbb{R}^n)$ compactly approximates $\mathcal{B}\subset C(\mathbb{R}^m, \mathbb{R}^n)$ , if for any $f\in \mathcal{B}$, any compact set $K\subset\mathbb{R}^n$, and any $\varepsilon>0$, there exists $ g\in\mathcal{A}$ such that: $\sup_{x\in K}\vert\vert f(x)-g(x)\vert\vert_\infty <\varepsilon$. 
	
	\item \textbf{Equivalence}: $\mathcal{A}\sim\mathcal{B}$ means that $\mathcal{A}\prec \mathcal{B}$ and $\mathcal{B}\prec \mathcal{A}$ are both true.

	\item \textbf{Affine Transformation}: $T_{W, b}$ denotes the affine transformation $T_{W, b} : \mathbb{R}^m \rightarrow \mathbb{R}^n$, defined as
	\begin{align*}
		T_{W, b}(x):=Wx+b, \enspace W\in \mathbb{R}^{n\times m}, b\in \mathbb{R}^{n}.
	\end{align*}
	
	\item \textbf{Set of Affine Transformations}: $\text{Aff}_{m, n}$ denotes the set of affine transformations from $\mathbb{R}^m$ to $\mathbb{R}^n$. $\text{IAff}_{m, n}$ denotes the subset of $\text{Aff}_{m, n}$ consisting of transformations that possess either a left inverse or a right inverse.
	
	\item \textbf{Iterated function}: $f^n$ denotes the iterated function that is obtained by composing $f:\mathbb{R}\rightarrow\mathbb{R}$ with itself $n$ times, i.e., $f^n(x)=f(f^{n-1}(x))=f(\cdots(f(x)))$.
\end{itemize}

\section{Prelimilary}\label{sec_2}
In this section, we introduce the definitions about DNN, and the properties of weight matrices. Throughout this paper, our discussion is based on these definitions.
\subsection{Prelimilary of Deep Neural Networks}\label{sec_2.1}
Before our formal discussion on the minimum width, we first introduce the architecture of a Neural Network (NN) (\cite{petersen_DL}). 
\begin{definition}[Architecture of a Neural Network]\label{defn_NN}
	Let $L\in \mathbb{N}_+$, $d_0$, $\cdots$, $d_{L+1}\in \mathbb{N}_+$, then a function $\Phi_\sigma:\mathbb{R}^{d_0}\rightarrow\mathbb{R}^{d_{L+1}}$ is called a neural network, if there exist matrices $W_i\in \mathbb{R}^{d_{i+1}\times d_i}$, vectors $b_i\in \mathbb{R}^{d_i}$ and activation functions $\sigma_i:\mathbb{R}^{d_{i+1}} \rightarrow \mathbb{R}^{d_{i+1}}$, $i=0, \cdots,L+1$, such that
	\begin{align}
		x^{(0)}&:=x,  \label{eq_x0}\\
		x^{(k+1)}&:=\sigma_k(T_{W_k, b_k}(x^{(k)})) \enspace (\forall k\in\{0, 1,\cdots, L-1 \}), \label{eq_xk}\\
		\Phi_\sigma(x)&:= T_{W_L, b_L}(x^{(L)}) \enspace (\forall x\in \mathbb{R}^{d_0}). \label{eq_NN}
	\end{align}
	When the input is fixed to $x$, we refer to $x^{(0)}$ in Equation (\ref{eq_x0}) as the input layer, and $\Phi_\sigma(x)$ in Equation (\ref{eq_NN}) as the output layer, then $x^{(i)}\enspace(i\in \{1, 2, \cdots, L\})$ are referred to as the hidden layers. The parameters $W_k$ and $b_k$ in Equation ($\ref{eq_xk}$) are called weight matrix and bias vector, respectively.
\end{definition}
Then, according to Definition \ref{defn_NN}, the depth and width of a NN are defined as follows.
\begin{definition}[Depth and Width of a NN]\label{defn_depthandwidth}
	In Definition \ref{defn_NN},  $L$ is referred to as the depth, corresponding to the number of hidden layers. $d_{max} = \max_{i=1, \dots, L} d_i$ is called the width, representing the maximum size of the hidden layers. 
\end{definition}

In Definition \ref{defn_NN}, the activation functions refer to a class of functions $\sigma:\mathbb{R}\rightarrow\mathbb{R}$, which are formally defind as follows.
\begin{definition}[Activation Function]\label{defn_activatefunc}
	Activation function $\sigma:\mathbb{R}\rightarrow\mathbb{R}$ is a piecewise $C^1$-funcion that possesses at least one point $\alpha \in \mathbb{R}$ such that $\sigma'(\alpha) \neq 0$. For notational convenience, it can be applied to vector-valued functions as componentwise operators:
	\begin{align*}
		\sigma(x_1, \cdots, x_n):= (\sigma(x_1), \cdots, \sigma(x_n)).
	\end{align*}
\end{definition}
Several commonly used activation functions are listed in Appendix~\ref{appendix_activationfunc}. Some activation functions, denoted by $\sigma_\beta$, are controlled by an additional parameter $\beta\in\Lambda$, but they still belong to the same class (e.g., LeakyReLU, ELU). We write $\sigma:= \{\sigma_\beta\vert \beta\in \Lambda\}$, which represents the collection of such activation funcions. For example, the set of ELU and LeakyReLU activation functions can be denoted as
\begin{align*}
	ELU:= \{ELU_\beta\vert \beta\in \mathbb{R}_+\}, LeakyReLU:= \{LeakyReLU_\beta\vert \beta\in \mathbb{R}_+\}.
\end{align*}
When the activation function and the size of each layer $x^{(i)} \enspace(i\in \{1, 2, \cdots, L\})$ are given, the set of NNs satisfying these conditions is defined as follows.
\begin{definition}[Set of NNs]\label{defn_set_NN}
	For a given set of activation functions $\sigma=\{\sigma_\beta\vert \beta\in \Lambda\}$, we denote the corresponding set of NNs by $N(\sigma; d_0,  \cdots, d_L, d_{L+1})$. It is defined as follows:
	\begin{align}
		N(\sigma; d_0, \cdots, d_L, d_{L+1}):= &\{\Phi_\sigma:\mathbb{R}^{d_0} \rightarrow \mathbb{R}^{d_{L+1}} \big\vert T_{W_i, b_i} \in \text{Aff}_{d_i, d_{i+1}},\nonumber\\
		&\Phi_\sigma= T_{W_L, b_L} \circ \sigma_L \circ \cdots \circ \sigma_1 \circ T_{W_0, b_0}, \sigma_i\in \sigma\}.
	\end{align}
	If all matrices $W_i(i\in \{0, 1, \cdots, L\})$ are of full rank, we denote the set of NNs by
	\begin{align}
		IN(\sigma; d_0, \cdots, d_L, d_{L+1}):= &\{\Phi_\sigma:\mathbb{R}^{d_0} \rightarrow \mathbb{R}^{d_{L+1}} \big\vert T_{W_i, b_i} \in \text{IAff}_{d_i, d_{i+1}},\nonumber\\
		&\Phi_\sigma= T_{W_L, b_L} \circ \sigma_L \circ \cdots \circ \sigma_1 \circ T_{W_0, b_0}, \sigma_i\in \sigma\}.
	\end{align}
	For notational convenience, we also write
	\begin{align*}
		N(\sigma; d_0,  \cdots, d_L, d_{L+1})&:=N(\{\sigma\}; d_0,  \cdots, d_L, d_{L+1}),\\
		IN(\sigma; d_0,  \cdots, d_L, d_{L+1})&:=IN(\{\sigma\}; d_0,  \cdots, d_L, d_{L+1}),
	\end{align*} 
	when $\sigma$ is a specific activation function (e.g., $ReLU$, $ELU_{0.5}$). 
	
	The set of NN with input dimension $d_x$, output dimension $d_y$ and at most $n$ intermediate dimensions are defined as
	\begin{align*}
		N^\sigma_{d_x, d_y, n}:= \bigcup\limits_{L\in N_+} \bigcup\limits_{1\leq d_1, \cdots, d_L\leq n} N(\sigma; d_x, d_1, \cdots, d_L, d_y).
	\end{align*}
\end{definition}
In fact, the expression of $N^\sigma_{d_x, d_y, n}$ can be simplified by the following proposition.
\begin{proposition}[Equivalence between Sets of NNs]\label{prop_expandNN}
	$N_{m, n, k}^{\sigma} = N(\sigma; m, k, \cdots,k , n)$.
\end{proposition}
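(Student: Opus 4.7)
The plan is to prove the equality by double inclusion. The inclusion $N(\sigma; m, k, \ldots, k, n) \subseteq N^\sigma_{m, n, k}$ is immediate from Definition~\ref{defn_set_NN}, because choosing $d_1 = \cdots = d_L = k$ in the union defining $N^\sigma_{m, n, k}$ already produces all fixed-width-$k$ networks of arbitrary depth $L$.

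For the nontrivial direction, I would take an arbitrary $\Phi_\sigma \in N^\sigma_{m, n, k}$, so $\Phi_\sigma \in N(\sigma; m, d_1, \ldots, d_L, n)$ for some depth $L$ and widths $1 \le d_i \le k$, and exhibit a depth-$L$ network with every hidden layer of width exactly $k$ that computes the same function. The construction pads each hidden layer with $k - d_i$ dummy neurons whose outputs are annihilated downstream. Concretely, I embed each original weight matrix in the top-left block of a zero-padded matrix,
\begin{align*}
    \tilde W_0 = \begin{pmatrix} W_0 \\ 0 \end{pmatrix}, \qquad \tilde W_i = \begin{pmatrix} W_i & 0 \\ 0 & 0 \end{pmatrix} \text{ for } 1 \le i \le L-1, \qquad \tilde W_L = \begin{pmatrix} W_L & 0 \end{pmatrix},
\end{align*}
with biases $\tilde b_i = \begin{pmatrix} b_i \\ 0 \end{pmatrix}$ for $i < L$ and $\tilde b_L = b_L$, reusing the same activations $\sigma_i \in \sigma$.

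A short induction on $i$ then verifies that the hidden state of the padded network at layer $i$ equals $(x^{(i)}, \sigma(0), \ldots, \sigma(0))^\top$: the zero rows of $\tilde W_{i-1}$ together with the zero entries of $\tilde b_{i-1}$ send the padded coordinates to $0$ before activation and hence to the constant $\sigma(0)$ afterwards, while the zero columns of $\tilde W_i$ guarantee that these constants contribute nothing to layer $i+1$. At the output layer this yields $\tilde W_L \tilde x^{(L)} + \tilde b_L = W_L x^{(L)} + b_L = \Phi_\sigma(x)$, establishing the reverse inclusion. There is no genuine obstacle here, as the argument is essentially a bookkeeping exercise; the only care needed is to apply the padding consistently at both the incoming and outgoing sides of every hidden layer and to exploit that $\sigma$ acts componentwise, so the dummy coordinates take a single constant value which the subsequent zero columns annihilate.
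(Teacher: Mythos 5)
Your proof is correct and follows essentially the same route as the paper: the trivial inclusion plus zero-padding each hidden layer to width $k$ with blocks of zeros in the weight matrices and biases. The only difference is that you spell out the inductive bookkeeping (dummy coordinates become $\sigma(0)$ and are annihilated by the zero columns), which the paper leaves implicit; this is a harmless, indeed slightly more careful, elaboration of the same argument.
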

\begin{proof}
	The inclusion $N(\sigma; m, k, \cdots,k , n) \subset N_{m, n, k}^{\sigma}$ follows directly from the definitions.
	
	Suppose $f_\sigma:\mathbb{R}^m\rightarrow \mathbb{R}^n$ in $N_{m,n,k}^\sigma$ is represented by
	\begin{align*}
		f_\sigma=T_{W_L, b_L}\circ \sigma\cdots \sigma \circ T_{W_0, b_0}, \enspace W_i\in \mathbb{R}^{d_{i+1}\times d_i}, \enspace b_i\in\mathbb{R}^{d_{i+1}},
	\end{align*}
	where $d_i\leq k$ holds for $i\in\{1, \cdots, L\}$. Define extened weight matrices $\widetilde{W}_i$ and bias vectors $\widetilde{b}_i$ by zero-padding as
	$$\widetilde{W}_i=
	\left(
	\begin{array}{ccc:c}
		& \multirow{3}{*}{$W_i$} & & \\
		& & & 0 \\
		& & &  \\
		\hdashline
		 & 0 &  & 0
	\end{array}
	\right),\enspace
	\widetilde{b}_i=
	\left(
	\begin{array}{c}
		\multirow{3}{*}{$b_i$}\\
		\\
		\\
		\hdashline
		0
	\end{array}
	\right),
	$$
	then the extended NN
	\begin{align*}
		\Phi_\sigma = T_{\widetilde{W}_L, \widetilde{b}_L}\circ \sigma\cdots \sigma \circ T_{\widetilde{W}_0, \widetilde{b}_0} \enspace(\widetilde{W}_i\in \mathbb{R}^{k\times k}, \widetilde{b}_i\in\mathbb{R}^k)
	\end{align*} 
	belongs to $N(\sigma; m, k,\cdots, k, n)$. Hence, it can be concluded that $N_{m, n, k}^\sigma\subset N(\sigma; m, k, \cdots, k, n)$.
\end{proof}
Based on the definitions mentioned above, we now formally define the minimum width, as stated in the following definition. 
\begin{definition}[Minimum Width]\label{defn_minimumwidth}
	The indicator $w_{min}(m, n, \sigma)$ is defined as follows:
	\begin{align*}
		w_{min}(m, n, \sigma):=\min\{ l\in \mathbb{N}\big\vert C(\mathbb{R}^m, \mathbb{R}^n)\prec N_{m, n, l}^{\sigma}\},
	\end{align*}
	it means that any compact set $K\subset \mathbb{R}^m$, a continuous map $f\in C(K, \mathbb{R}^n)$ can be uniformly approximated by $N_{m, n, w_{min}(m, n, \sigma)}^{\sigma}$.
\end{definition}
In Section \ref{sec_3}, we adopt geometric approaches to study the minimum width of networks from the mathematical perspective.
\subsection{Properties of Weight Matrices in DNNs}\label{sec_2.2}
In this subsection, we investigate the properties of weight matrices, showing that any neural network can be approximated by the networks in $IN(\sigma; d_0, \cdots, d_L, d_{L+1})$. By measure theory (\cite{Measure}) and the properties of a real-analytic function, the following theorem shows that the set of full rank matrices is of full Lebesgue measure and dense in $\mathbb{R}^{m\times n}$.
\begin{theorem}[Density of Full-Rank Matrices]\label{thm_fullrank_dense}
	$\mathbb{R}^{m\times n}_{Full}$ is of full Lebesgue measure and dense in $\mathbb{R}^{m\times n}$. 
\end{theorem}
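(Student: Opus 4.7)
The plan is to exhibit a non-trivial real-analytic (in fact polynomial) function on $\mathbb{R}^{m \times n}$ whose zero set is exactly the rank-deficient matrices, and then invoke the classical fact that the zero set of a non-trivial real-analytic function has Lebesgue measure zero and empty interior. That immediately yields both assertions simultaneously.

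Concretely, I would set $r := \min(m,n)$ and, assuming without loss of generality $m \le n$ (the other case is symmetric by transposition), define
\begin{equation*}
p : \mathbb{R}^{m \times n} \to \mathbb{R}, \qquad p(A) := \det(A A^{\top}).
\end{equation*}
Since $p$ is a polynomial in the entries of $A$, it is real-analytic on $\mathbb{R}^{m \times n}$. The Cauchy--Binet formula expresses $p(A)$ as the sum of squares of all $m \times m$ minors of $A$, so $p(A) = 0$ if and only if every maximal minor vanishes, which by linear algebra is equivalent to $A \notin \mathbb{R}^{m \times n}_{Full}$. To see that $p$ is not identically zero, exhibit one full-rank matrix, e.g. $A_0 = (I_m \mid 0) \in \mathbb{R}^{m \times n}$, for which $p(A_0) = \det(I_m) = 1$. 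The other case $m > n$ is handled analogously with $p(A) := \det(A^{\top} A)$.

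The remaining step is to recall two standard consequences for a non-trivial real-analytic function $p$ on $\mathbb{R}^{N}$ (here $N = mn$). First, its zero set $Z(p) = \{A : p(A) = 0\}$ has Lebesgue measure zero; this can be proved by induction on $N$ using Fubini together with the fact that a non-trivial analytic function of one real variable has isolated zeros. Second, $Z(p)$ has empty interior, because if $p$ vanished on an open ball then all its derivatives would vanish there, forcing $p \equiv 0$ on $\mathbb{R}^N$ by the identity theorem for real-analytic functions. Taking complements: $\mathbb{R}^{m \times n}_{Full} = \mathbb{R}^{m \times n} \setminus Z(p)$ has full Lebesgue measure and is dense. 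Both facts can also be cited from the measure-theoretic reference already given in the excerpt.

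There is essentially no obstacle beyond invoking these classical facts; the only point that requires a moment of thought is choosing the polynomial $p$ so that its zero set coincides exactly with the rank-deficient set, for which the Cauchy--Binet identification with the sum of squared maximal minors is the cleanest verification. I would also briefly note that the same argument shows $\mathbb{R}^{m \times n}_{Full}$ is open (as the complement of the closed set $Z(p)$), which, while not requested, strengthens the density statement to ``open and dense.''
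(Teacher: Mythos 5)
Your proposal is correct, and it shares the paper's key lemma (the zero set of a not-identically-zero real-analytic function has Lebesgue measure zero) but packages the rank-deficiency condition differently. The paper, after reducing to $m\le n$, applies the lemma minor by minor: for each fixed $m\times m$ submatrix, the set where its determinant vanishes is a null set in $\mathbb{R}^{m\times n}$, and the rank-deficient locus (where \emph{all} maximal minors vanish) is then covered by such null sets, so the full-rank set has full measure and hence is dense. You instead encode rank deficiency as the zero set of a \emph{single} polynomial $p(A)=\det(AA^{\top})$, using Cauchy--Binet to identify $p$ with the sum of squares of all maximal minors, and apply the lemma once. Your route is somewhat cleaner: it avoids the combinatorial bookkeeping over $\binom{n}{m}$ submatrices (where the paper's phrasing of the full-rank set as the complement of a \emph{union} of null sets is slightly off, since the rank-deficient set is the intersection of the per-minor singular sets, though containment in any single one already suffices), it gives density directly from the empty-interior property of analytic zero sets rather than deducing it from full measure, and it yields the extra observation that $\mathbb{R}^{m\times n}_{Full}$ is open. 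The paper's version, on the other hand, needs nothing beyond the real-analyticity of the ordinary determinant on $\mathbb{R}^{m\times m}$ and a product-measure remark, so it is marginally more elementary in the algebraic identities it invokes.
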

\begin{proof}
	Without loss of generality, we assume that $m \leq n$. A matrix $M \in \mathbb{R}^{m \times n}$ is rank-deficient if and only if every $m \times m$ submatrix $M_1$ of $M$ satisfies $\det(M_1) = 0$. \cite{zeroset_func} proved that for a real-analytic function defined on a connected open domain $U \subset \mathbb{R}^d$, if the function is not identically zero, then its zero set has Lebesgue measure zero. Since the determinant function $\det : \mathbb{R}^{m \times m} \to \mathbb{R}$ is real-analytic, it follows that the set
	\begin{align*}
		\{M_1\vert M_1 \in \mathbb{R}^{m \times m}, \det(M_1) = 0\} \times \mathbb{R}^{(n-m) \times m}
	\end{align*}
	has Lebesgue measure zero in $\mathbb{R}^{m \times n}$. Since each matrix $M \in \mathbb{R}^{m \times n}$ contains at most $\binom{n}{m}$ submatrices $M_1$ of size $m \times m$, it follows that the set of full-rank matrices $\mathbb{R}^{m \times n}_{Full}$ is the complement of the union of at most $\binom{n}{m}$ null sets. Therefore, $\mathbb{R}^{m\times n}_{Full}$ is of full Lebesgue measure and dense in $\mathbb{R}^{m\times n}$.
\end{proof}
By Theorem \ref{thm_fullrank_dense}, the set $\mathbb{R}^{d_{i+1}\times d_i}_{Full}$ is dense in $\mathbb{R}^{d_{i+1}\times d_i}$. Therefore, for each $i\in\{0, 1, \cdots, L\}$, we can find $W_i^\Phi\in\mathbb{R}^{d_{i+1}\times d_i}_{\text{Full}}$ arbitrarily close to $W_i^\Psi$, such that any NN $\Psi_\sigma \in N(\sigma; d_0, \cdots, d_L, d_{L+1})$ can be approximated by $IN(\sigma; d_0, \cdots, d_L, d_{L+1})$.
\begin{theorem}[UAP of Networks with Full-Rank Weight Matrices]\label{thm_INNapprox}
	For a given compact domain $K$, any NN can be approximated by the set of NNs with full rank weight matrices. In other words,
	\begin{align*}
		N(\sigma; d_0, \cdots, d_L, d_{L+1}) \prec IN(\sigma; d_0, \cdots, d_L, d_{L+1}).
	\end{align*}
\end{theorem}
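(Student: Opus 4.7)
The strategy is to apply Theorem \ref{thm_fullrank_dense} at each layer: given an arbitrary $\Psi_\sigma \in N(\sigma; d_0, \ldots, d_{L+1})$ with weight matrices $W_i^\Psi$ and biases $b_i$, replace each $W_i^\Psi$ by a nearby full-rank matrix $W_i^\Phi \in \mathbb{R}^{d_{i+1} \times d_i}_{Full}$, keep the biases $b_i$ and activations $\sigma_i$ unchanged, and verify that the resulting $\Phi_\sigma \in IN(\sigma; d_0, \ldots, d_{L+1})$ uniformly approximates $\Psi_\sigma$ on the prescribed compact set $K$. Theorem \ref{thm_fullrank_dense} guarantees that such approximating full-rank matrices exist arbitrarily close to any target $W_i^\Psi$, so the existence of the candidate $\Phi_\sigma$ is immediate; the work lies in controlling the propagation of the weight perturbation through the layers.

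Fix $\varepsilon > 0$ and $K \subset \mathbb{R}^{d_0}$ compact, and denote the intermediate outputs of $\Psi_\sigma$ by $x_\Psi^{(k)}$. Because $K$ is compact and the affine maps and each $\sigma_k$ are continuous, each image set $K_k := \{x_\Psi^{(k)}(x) : x \in K\}$ lies in a closed ball of some finite radius $R_k$. For any $\delta > 0$ choose $W_i^\Phi \in \mathbb{R}^{d_{i+1}\times d_i}_{Full}$ with $\|W_i^\Phi - W_i^\Psi\|_\infty < \delta$ for every $i \in \{0,1,\ldots,L\}$. Write $e_k := \sup_{x\in K} \|x_\Phi^{(k)}(x) - x_\Psi^{(k)}(x)\|_\infty$, with $e_0 = 0$. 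The affine step at layer $k$ satisfies
\begin{equation*}
  \|W_k^\Phi u + b_k - (W_k^\Psi v + b_k)\|_\infty \le (\|W_k^\Psi\|_\infty + \delta)\, \|u - v\|_\infty + \delta\, \|v\|_\infty,
\end{equation*}
applied to $u = x_\Phi^{(k)}(x)$ and $v = x_\Psi^{(k)}(x)$, while the activation step contributes a factor bounded by the modulus of continuity $\omega_k$ of $\sigma_k$ on an enlarged compact neighborhood of $T_{W_k^\Psi, b_k}(K_k)$. Iterating these two estimates through the $L$ hidden layers and then applying the final affine map $T_{W_L^\Phi, b_L}$ yields an upper bound on $\sup_{x \in K} \|\Phi_\sigma(x) - \Psi_\sigma(x)\|_\infty$ that is a composition of continuous functions of $\delta$ vanishing at $\delta = 0$. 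Choosing $\delta$ small enough makes this bound smaller than $\varepsilon$, which is precisely the compact approximation condition $\mathcal{A}\prec\mathcal{B}$ from the Notations.

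The main obstacle is ensuring, along the induction, that the perturbed intermediate outputs $x_\Phi^{(k)}$ remain inside a set on which $\sigma_k$ is uniformly continuous, so that $\omega_k$ is well defined and tends to $0$ at $0$. This is handled by observing that, for $\delta$ small enough, the affine estimate keeps $x_\Phi^{(k)}$ within any fixed enlargement of $K_k$ (say, the closed ball of radius $R_k + 1$); on such a compact set each continuous $\sigma_k$ is uniformly continuous, so the requisite modulus of continuity exists and the forward-propagation argument closes. Combining all the resulting estimates, $\sup_{x\in K}\|\Phi_\sigma(x) - \Psi_\sigma(x)\|_\infty$ can be made arbitrarily small, giving $N(\sigma; d_0, \ldots, d_{L+1}) \prec IN(\sigma; d_0, \ldots, d_{L+1})$ as claimed.
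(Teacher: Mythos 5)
Your proposal is correct and follows essentially the same route as the paper: perturb each weight matrix to a nearby full-rank one (via Theorem \ref{thm_fullrank_dense}), keep the biases, and use compactness of $K$ together with continuity to get uniform closeness of the outputs. The paper simply asserts the existence of the required $\delta$ by continuity of the network in its weights, whereas you spell out the layer-by-layer propagation estimate; this is added detail, not a different argument.
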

\begin{proof}
	Let $K\subset \mathbb{R}^{d_0}$ be a fixed compact domain, any NN $\Psi_\sigma\in N(\sigma; d_0, \cdots, d_L, d_{L+1})$ can be presented as:
	\begin{align*}
		\Psi_\sigma=T_{W_L^\Psi, b_L^\Psi} \circ \sigma\circ \cdots \circ \sigma\circ T_{W_0^\Psi, b_0^\Psi}.
	\end{align*}
	Since $K$ is compact, for any $\varepsilon>0$, there exists $\delta>0$ such that if all the matrices $W_i^\Phi\subset\mathbb{R}_{Full}^{d_{i+1}\times d_i}$ satisfy that $\vert\vert W_i^\Phi-W_i^\Psi\vert\vert_\infty <\delta$, $i=0, 1, \cdots, L$, and the bias vectors are chosen as $b_i^\Phi=b_i^\Psi$, $i=0, 1, \cdots, L$, then
	\begin{align*}
		\Phi_\sigma=T_{W_L^\Phi, b_L^\Phi} \circ \sigma\circ \cdots \circ \sigma\circ T_{W_0^\Phi, b_0^\Phi},\\
		\vert \vert \Phi_\sigma(x)-\Psi_\sigma(x)\vert\vert_\infty<\varepsilon\enspace(\forall x\in K).
	\end{align*}
	By Theorem \ref{thm_fullrank_dense}, $\mathbb{R}^{d_{i+1}\times d_i}_{Full}$ is dense in $\mathbb{R}^{d_{i+1}\times d_i}$, so we can always find a $W_i^\Phi\subset\mathbb{R}^{d_{i+1}\times d_i}_{Full}$ that close enough to the $W_i^\Psi$. So any $\Psi_\sigma\in N(\sigma; d_0, \cdots, d_L, d_{L+1})$ can be approximated by $IN(\sigma; d_0, \cdots, d_L, d_{L+1})$. 
\end{proof}
Consequently, it is sufficient to consider the networks in $IN(\sigma; m, d_1, \cdots, d_L, n)$ to investigate the lower bounds of $w_{min}(m, n, \sigma)$ for different activation functions. Since all weight matrices of a NN $\Phi\in IN(\sigma; m, d_1, \cdots, d_L, n)$ are non-degenerate, it becomes easier for us to analyze the geometric properties of the outputs as discussed in Subsection \ref{sec_3.1}.

\section{Minimum Width for Universal Approximation}\label{sec_3}
Throughout this section, we investigate the minimum width $w_{min}(m, n, \sigma)$ for some activation functions $\sigma$, when the input dimension and the output dimension are $m$ and $n$, respectively. 

In Subsection \ref{sec_3.1}, we construct a more intuitive example via a new geometric approach based on the Poincar\'e-Miranda Theorem (\cite{Poincare-Miranda}), and show the lower bound of minimum width that 
\begin{align*}
	w_{min}(m, n, \sigma) \geq n+\mathbf{1}_{m<n\leq 2m}
\end{align*}
for networks with injective activation functions and those uniformly approximated by a sequence of injective functions (Theorem \ref{thm_injectivewidth}), which is consistent with the statement in (\cite{Kim}). 

In Subsection \ref{sec_3.2}, we show our main results of minimum width, for networks with variants of ReLU (LeakyReLU, ELU, CELU, SELU, Softplus). We prove the inequality of minimum width
\begin{align*}
	w_{min}(m, n, \sigma)\leq \max(2m+1, n)
\end{align*}
for networks with ELU and SELU (Theorem \ref{thm_variant_LR_width}), which extends the range of activation functions in (\cite{Hwang}). Furthermore, we prove the equality of minimum width
\begin{align*}
	w_{min}(m, 2m, \sigma)= 2m+1
\end{align*}
for networks with ELU and SELU (Theorem \ref{thm_LRwidth}). In addition, we show the result
\begin{align*}
	m+1\leq w_{min}(m, n, \sigma)\leq m+n
\end{align*}
for networks with variants of ReLU (Theorem \ref{thm_variant_ReLU_width}), which extends the range of activation functions in (\cite{Hanin}).

\subsection{A New Proof of the Minimum Width for Injective Activation Functions}\label{sec_3.1}
\cite{Kim} proved the existence of the example that can not be approximated by networks with injective activation functions and width less than $d_y+\mathbf{1}_{d_x<d_y\leq 2d_x}$, by Brouwer Fixed Point Theorem, Tietze Extension Lemma and the Pasting Lemma (\cite{Munkres}). In this subsection, we construct a new and intuitive example, as stated in Equation (\ref{eq_gstar}). Then we give a new proof of the lower bound of minimum width, for networks with injective activation functions and those uniformly approximated by a sequence of injective functions, by utilizing this intuitive example.

From the definition of the neural networks (Definition \ref{defn_NN}), it follows that a network is essentially a composition of linear and nonlinear transformations. The input dataset can be represented well by the set of points in Euclidean space. In view of elementary geometry theory, linear layers translate, rotate, rescale and reflect the datasets, while the activation layers introduce geometric distortion to the datasets, resulting in curved structure. Since the geometric properties of NNs are closely related to topological embeddings (\cite{Munkres}), we begin by introducing the definition of the topological embedding.
\begin{definition}[Topological Embedding]
	If $f:X\rightarrow Y$ is an injective continuous map, and $f^{\prime}:X\rightarrow f(X)$ obtained by restricting the range of $f$ happens to be a homeomorphism of $X$ with $f(X)$, then $f:X\rightarrow Y$ is called an embedding.
\end{definition}
By the definition of NNs (Definition \ref{defn_NN}), for an injective activation function $\sigma$ and an input $K\subset \mathbb{R}^{d_x}$, it is evident that if $m\leq d_1\leq\cdots\leq d_L$, any network $\Phi_\sigma\in IN(\sigma; m, d_1, \cdots, d_L, n)$ is a topological embedding. It motivates us to construct examples with self-intersections, which contradict the properties of topological embeddings. We begin our discussion by introducing the following Poincar\'e-Miranda Theorem.
\begin{theorem}[Poincar\'e-Miranda Theorem]\label{thm_PM}
	Let $I^n=[a_1, b_1]\times \cdots \times [a_n, b_n]\subset \mathbb{R}^n$, consider a continuous mapping
	\begin{align*}
		f:I^n\rightarrow \mathbb{R}^n, \enspace f(x)=(f_1(x), \cdots, f_n(x)), \enspace x=(x_1, \cdots, x_n),
	\end{align*}
	if it satisfies
	\begin{align}
		h_if_i(x_1, \cdots, a_i, \cdots,  x_n)\geq 0, \enspace h_if_i(x_1, \cdots, b_i, \cdots, x_n)\leq 0\enspace(\forall x\in I^n)\label{eq_PMthm}
	\end{align}
	for $i=1, 2, \cdots, n$, where $h_i\in \{-1, 1\}$, then there exists $\bar{x}\in I^n$ such that $f(\bar{x})=0$.
\end{theorem}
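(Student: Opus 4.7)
The plan is to derive the Poincar\'e-Miranda Theorem from Brouwer's fixed point theorem applied to the cube $I^n$. First, by replacing each component $f_i$ with $h_i f_i$ (an operation that preserves continuity and the zero set of $f$), I reduce to the case $h_i = 1$ for all $i$, so that $f_i \geq 0$ on the face $\{x_i = a_i\}$ and $f_i \leq 0$ on the face $\{x_i = b_i\}$.

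Next, I would define a continuous self-map of $I^n$ by clipping an Euler-type step componentwise,
\begin{equation*}
g_i(x) \;:=\; \max\bigl(a_i,\ \min(b_i,\ x_i + f_i(x))\bigr), \qquad i = 1, \ldots, n.
\end{equation*}
Each coordinate projection onto $[a_i, b_i]$ is a continuous retraction, so $g$ is continuous and maps $I^n$ into itself. Since $I^n$ is a compact convex subset of $\mathbb{R}^n$, Brouwer's fixed point theorem yields some $\bar{x} \in I^n$ with $g(\bar{x}) = \bar{x}$, and it remains to show that $f(\bar{x}) = 0$.

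The main step is a three-way case analysis for each coordinate $i$ based on the location of the unclipped value $y_i := \bar{x}_i + f_i(\bar{x})$. If $y_i \in [a_i, b_i]$, then the projection is inactive and the fixed-point identity $\bar{x}_i = y_i$ forces $f_i(\bar{x}) = 0$. If $y_i < a_i$, then $g_i(\bar{x}) = a_i$ gives $\bar{x}_i = a_i$, so $\bar{x}$ lies on the face $\{x_i = a_i\}$; the boundary hypothesis then forces $f_i(\bar{x}) \geq 0$ and hence $y_i \geq a_i$, contradicting $y_i < a_i$. The case $y_i > b_i$ is dispatched symmetrically using the hypothesis on $\{x_i = b_i\}$. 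Since every coordinate must fall in the first case, we conclude $f(\bar{x}) = 0$.

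The main obstacle is conceptually mild once the setup is in place: it is essentially the recognition that the two sign hypotheses are precisely what forbids the nontrivial branches of the projection at a fixed point. Brouwer's theorem itself is used as a black box. An alternative route would be a topological-degree argument via the straight-line homotopy from $f$ to the map $x \mapsto c - x$, where $c$ is the center of $I^n$ (the two maps have matching sign behavior on each face under the hypotheses, so the homotopy is nonvanishing on $\partial I^n$), but this merely trades one topological input for another without shortening the proof on a rectangular domain.
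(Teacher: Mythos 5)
The paper does not prove this statement at all: it is the classical Poincar\'e--Miranda Theorem, which the authors simply quote with a citation and then use (via the product-form condition in Equation (\ref{eq_PMthm_product})) to establish Lemma \ref{lem_appro_indimn}. Your argument is therefore not comparable to an in-paper proof, but it is a correct, self-contained derivation of the cited result. The reduction to $h_i=1$ is harmless since multiplying the $i$-th component by $h_i$ preserves continuity and the zero set; the clipped map $g_i(x)=\max\bigl(a_i,\min(b_i,\,x_i+f_i(x))\bigr)$ is a continuous self-map of the compact convex cube, so Brouwer applies; and the three-way case analysis is watertight: if the clipping were active below, the fixed-point relation forces $\bar{x}_i=a_i$, whence the face hypothesis $f_i(\bar{x})\geq 0$ gives $\bar{x}_i+f_i(\bar{x})\geq a_i$, contradicting the activation of that branch, and symmetrically at $b_i$; hence every coordinate falls in the inactive case and $f(\bar{x})=0$. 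This is the standard Brouwer-based proof and it correctly handles the non-strict inequalities in the hypothesis, which is exactly the form the paper needs. Your aside about the degree-theoretic homotopy to $x\mapsto c-x$ is the one place that would need more care (with non-strict sign conditions the homotopy can vanish on $\partial I^n$ unless one first perturbs or disposes of boundary zeros), but since you do not rely on it, the proof as given stands.
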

Moreover, it is evident that the condition in Equation (\ref{eq_PMthm}) can be modified to
\begin{align}
	&[f_i(x_1,\cdots, a_i, \cdots, x_n)- f_i(x_1, \cdots, x_i, \cdots, x_n)]*  \nonumber\\
	&[f_i(x_1, \cdots, b_i, \cdots, x_n)- f_i(x_1, \cdots, x_i, \cdots, x_n)]\leq 0 \enspace (\forall x\in I^n). \label{eq_PMthm_product}
\end{align}

By Poincar\'e-Miranda Theorem, we can determine whether the output of a continuous map has self-intersections, as stated in the following lemma.
\begin{lemma}[Sufficient Conditions for Self-intersections]\label{lem_appro_indimn}
	Consider a continuous map $g:[0, 1]^m\rightarrow \mathbb{R}^{2m}$, defined as 
	\begin{align*}
		g(t)=(g_1(t), g_2(t), \cdots,g_{2m-1}(t), g_{2m}(t)),
	\end{align*}
	where $t=(t_1, t_2, \cdots, t_m)\in [0, 1]^m$. If there exist a point $P\in g([0, 1]^m)$ and an open neighborhood $U$ of $P$ such that
	\begin{enumerate}[(1)]
		\item $g^{-1}(g([0, 1]^m)\cap \overline{U})=\prod\limits_{i=1}^{m}([a_{i1}, a_{i2}] \cup [b_{i1}, b_{i2}])$ with $[a_{i1}, a_{i2}]\cap [b_{i1}, b_{i2}]=\emptyset$ for each $i\in \{1, 2,\cdots, m\}$,
		\item $[g_{2k-1}(x_1, \cdots, a_{k1}, \cdots, x_m) - g_{2k-1}(y_1, \cdots, y_k, \cdots, y_n)]*[g_{2k-1}(x_1, \cdots, a_{k2}, \cdots, x_m)-g_{2k-1}(y_1, \cdots, y_k, \cdots, y_m)]<0$ holds for all $x_i\in[a_{i1}, a_{i2}], y_j\in [b_{j1}, b_{j2}]$,
\item $[g_{2k}(x_1, \cdots, x_k, \cdots, x_m) - g_{2k}(y_1, \cdots, b_{k1}, \cdots, y_m)]*[g_{2k}(x_1, \cdots,x_k, \cdots, x_m)-g_{2k}(y_1, \cdots, \linebreak b_{k2}, \cdots, y_m)]<0$ holds for all $x_i\in[a_{i1}, a_{i2}], y_j\in [b_{j1}, b_{j2}]$.
	\end{enumerate}
	Then there exists $\varepsilon>0$, such that for any continous map $f:[0, 1]^m\rightarrow \mathbb{R}^{2m}$, if it satisfies that 
	\begin{align*}
		\sup_{t\in [0, 1]^m}\vert\vert f(t)-g(t)\vert \vert_{\infty}<\varepsilon,
	\end{align*}
	then there exist $\bar{t}_1\neq \bar{t}_2$ such that $f(\bar{t}_1)=f(\bar{t}_2)$, i.e., $f:[0, 1]^m\rightarrow\mathbb{R}^{2m}$ can not be injective.
\end{lemma}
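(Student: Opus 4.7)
The plan is to apply the Poincar\'e-Miranda Theorem (Theorem \ref{thm_PM}) to the $2m$-to-$2m$ auxiliary map
\[
F^f:B_1\times B_2\to\mathbb{R}^{2m},\qquad F^f(x,y):=f(x)-f(y),
\]
where $B_1:=\prod_{i=1}^{m}[a_{i1},a_{i2}]$ and $B_2:=\prod_{j=1}^{m}[b_{j1},b_{j2}]$. Condition (1) gives $[a_{i1},a_{i2}]\cap[b_{i1},b_{i2}]=\emptyset$ for every $i$, so $B_1\cap B_2=\emptyset$, and any zero $(x^*,y^*)$ of $F^f$ would automatically satisfy $x^*\neq y^*$ with $f(x^*)=f(y^*)$, exactly the self-intersection required. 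I would pair the $k$-th $x$-coordinate $x_k$ with the component $F^f_{2k-1}$ and the $k$-th $y$-coordinate $y_k$ with $F^f_{2k}$; this is the matching that routes hypotheses (2) and (3) into the sign-change form demanded by Poincar\'e-Miranda.

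Next I would verify the Poincar\'e-Miranda hypothesis for the analogous $g$-version $F^g(x,y):=g(x)-g(y)$. On the face $\{x_k=a_{k1}\}\subset B_1\times B_2$ the component $F^g_{2k-1}$ equals $g_{2k-1}(x_1,\ldots,a_{k1},\ldots,x_m)-g_{2k-1}(y)$, and on $\{x_k=a_{k2}\}$ it equals $g_{2k-1}(x_1,\ldots,a_{k2},\ldots,x_m)-g_{2k-1}(y)$. Condition (2) states that the product of these two continuous quantities is strictly negative throughout the connected set $B_1\times B_2$, so each factor has a constant nonzero sign and the two signs are opposite; this yields a well-defined choice $h_{2k-1}\in\{-1,+1\}$. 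Condition (3) gives the analogous fixed sign $h_{2k}$ for $F^g_{2k}$ on the faces $\{y_k=b_{k1}\}$ and $\{y_k=b_{k2}\}$. Because $B_1\times B_2$ is compact and the $4m$ face-quantities above are continuous and nowhere zero, there exists a uniform $\delta>0$ bounding each of them away from zero in absolute value.

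I would then set $\varepsilon:=\delta/4$. Any continuous $f:[0,1]^m\to\mathbb{R}^{2m}$ with $\sup_{t\in[0,1]^m}\|f(t)-g(t)\|_\infty<\varepsilon$ satisfies $\|F^f-F^g\|_\infty<2\varepsilon<\delta$ on $B_1\times B_2$, so on each of the $4m$ boundary faces the sign of $F^f$ agrees with that of $F^g$. Hence the Poincar\'e-Miranda hypothesis in the form of Theorem \ref{thm_PM}, with the same signs $h_1,\ldots,h_{2m}$, holds for $F^f$, producing some $(x^*,y^*)\in B_1\times B_2$ with $F^f(x^*,y^*)=0$; the disjointness $B_1\cap B_2=\emptyset$ then forces $x^*\neq y^*$, completing the argument.

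The main obstacle, as I see it, is bookkeeping rather than analysis: one must check that the specific indexing (the variable $x_k$ against $F_{2k-1}$, and $y_k$ against $F_{2k}$) makes conditions (2) and (3) slot directly into the classical Poincar\'e-Miranda form, rather than into the product form (\ref{eq_PMthm_product}) which would involve a single base point of $g$ on both sides. A secondary but essential step is the compactness-plus-connectedness argument that converts the pointwise strict inequalities into a uniform gap $\delta$, so that the perturbation from $g$ to $f$ can be absorbed into a single tolerance $\varepsilon$.
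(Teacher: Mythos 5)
Your proposal is correct and follows essentially the same route as the paper: both apply the Poincar\'e--Miranda Theorem to the auxiliary difference map $(x,y)\mapsto f(x)-f(y)$ on the product of the two boxes, with $x_k$ paired to component $2k-1$ and $y_k$ to component $2k$, and both obtain the tolerance $\varepsilon$ from compactness of that domain. The only (cosmetic) difference is that you establish sign stability on the faces via a uniform gap $\delta$ bounding the $g$-face quantities away from zero, whereas the paper expands the perturbed products explicitly and bounds them below zero using the constant $M$; the underlying argument is the same.
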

\begin{proof}
	The proof is provided in Appendix \ref{appendix_appro_indimn}.
\end{proof}
When the activation function $\sigma$ is injective, any $\Phi_\sigma\in IN(\sigma; m, k, \cdots, k, n)$ is a topological embedding whenever $m\leq k\leq n$. By Lemma \ref{lem_appro_indimn}, we show the sufficient condition for the existence of self-intersections.
\begin{theorem}[Sufficient Condition for Inapproximation]\label{thm_m<n<=2m}
	For an injective activation function $\sigma$, when $m<n\leq 2m$, we have $C(\mathbb{R}^m, \mathbb{R}^{n})\nprec N_{m, n, n}^{\sigma}$.
\end{theorem}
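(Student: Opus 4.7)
The plan is to witness $C(\mathbb{R}^m,\mathbb{R}^n)\nprec N^\sigma_{m,n,n}$ by exhibiting an explicit $g^\star\in C(\mathbb{R}^m,\mathbb{R}^n)$ that is bounded away, on the compact set $[0,1]^m$, from every network with hidden widths at most $n$. By Proposition~\ref{prop_expandNN} and Theorem~\ref{thm_INNapprox}, it suffices to rule out approximation by $IN(\sigma;m,n,\ldots,n,n)$. For such a network with injective $\sigma$ and full-rank weights, the first affine map $T_{W_0,b_0}:\mathbb{R}^m\to\mathbb{R}^n$ is injective (since $n\ge m+1$), each subsequent $n\times n$ weight matrix is invertible, and the componentwise $\sigma$ is injective, so the composition is a topological embedding. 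The task therefore reduces to producing $g^\star$ whose self-intersections persist under small uniform perturbations.

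Setting $m':=n-m\in\{1,\ldots,m\}$, I would define
\begin{align*}
g^\star(t_1,\ldots,t_m):=\bigl(\tilde g_1(t_1,\ldots,t_{m'}),\ldots,\tilde g_{2m'}(t_1,\ldots,t_{m'}),\,t_{m'+1},\ldots,t_m\bigr),
\end{align*}
where $\tilde g:[0,1]^{m'}\to\mathbb{R}^{2m'}$ is a concrete folded map (a figure-eight curve when $m'=1$, an analogous cube fold for larger $m'$) satisfying conditions (1)--(3) of Lemma~\ref{lem_appro_indimn} with $m$ replaced by $m'$, at some point $P$ and separated boxes $A=\prod_{i=1}^{m'}[a_{i1},a_{i2}]$, $B=\prod_{i=1}^{m'}[b_{i1},b_{i2}]$ in $[0,1]^{m'}$. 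In particular $\tilde g(u_1^\star)=\tilde g(u_2^\star)=P$ for distinct $u_1^\star\in A$, $u_2^\star\in B$, so $g^\star$ possesses an entire $(2m-n)$-dimensional locus of double points $\{(P,v):v\in[0,1]^{2m-n}\}$.

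The heart of the argument is to show that for sufficiently small $\varepsilon>0$ and any continuous injective $f:[0,1]^m\to\mathbb{R}^n$ with $\sup_t\|f(t)-g^\star(t)\|_\infty<\varepsilon$, one produces distinct points in $[0,1]^m$ with the same $f$-image. I would apply the Poincar\'e--Miranda template behind Lemma~\ref{lem_appro_indimn} to the displacement map
\begin{align*}
H(u,u',v):=f(u,v)-f(u',v),\qquad (u,u',v)\in A\times B\times[0,1]^{2m-n},
\end{align*}
a continuous map from an $n$-dimensional box into $\mathbb{R}^n$. The first $2m'$ components of $H$ approximate $\tilde g(u)-\tilde g(u')$, so conditions (2) and (3) of Lemma~\ref{lem_appro_indimn}, transferred to $H$ via closeness to $g^\star$, supply the sign reversals on the $u$- and $u'$-faces required by Theorem~\ref{thm_PM}.

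The chief obstacle is the last $2m-n$ components of $H$: in the model $g^\star$ they vanish identically because the two points share $v$, so closeness to $g^\star$ only gives $|H_{\mathrm{last}}|\le 2\varepsilon$ rather than a sign change across the $v$-faces. My plan to overcome this is to enrich $g^\star$ by replacing its trivial tail coordinates with $g^\star_{2m'+j}(t):=t_{m'+j}+\psi_j(t_1,\ldots,t_{m'})$ for $j=1,\ldots,2m-n$, where the continuous functions $\psi_j$ agree at $u_1^\star$ and $u_2^\star$ (preserving the double-point structure) but are chosen so that $H_{2m'+j}$ acquires a definite sign on opposite $v$-faces of $A\times B\times[0,1]^{2m-n}$. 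A parallel route is a degree-theoretic argument: the sheets $g^\star(A\times[0,1]^{2m-n})$ and $g^\star(B\times[0,1]^{2m-n})$ meet transversely along a $(2m-n)$-dimensional locus in $\mathbb{R}^n$, and transverse intersections of complementary total codimension are topologically stable under $C^0$-small perturbations, forcing any close $f$ to exhibit a sheet intersection as well. Either route closes the Poincar\'e--Miranda-style argument, yielding $(u^\star,u'^\star,v^\star)$ with $u^\star\neq u'^\star$ and $f(u^\star,v^\star)=f(u'^\star,v^\star)$, in contradiction with the injectivity of $f$, and thereby establishing the theorem.
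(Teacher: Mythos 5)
Your overall strategy (reduce to full-rank networks, note that injectivity of $\sigma$ makes every such network a topological embedding of $[0,1]^m$, and force a perturbation-stable self-intersection via Poincar\'e--Miranda) is the same as the paper's, and your case $n=2m$ would indeed reduce to Lemma~\ref{lem_appro_indimn}. But for $m<n<2m$ your argument has a genuine gap, which you yourself flag and then do not close. With the tail of $g^\star$ taken as $t_{m'+j}$ (or as $t_{m'+j}+\psi_j(t_1,\dots,t_{m'})$ in your first fix), the components $H_{2m'+j}(u,u',v)=f_{2m'+j}(u,v)-f_{2m'+j}(u',v)$ are, up to $2\varepsilon$, equal to $\psi_j(u)-\psi_j(u')$, which is \emph{independent of $v$}: for a fixed $(u,u')$ the value is the same on the two opposite $v_j$-faces of $A\times B\times[0,1]^{2m-n}$, so no choice of $\psi_j$ can produce the sign reversal Poincar\'e--Miranda requires there; your first repair is impossible as stated. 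Your second repair is also not sound as written: the two sheets have codimension $n-m$ each, so ``complementary total codimension'' holds only when $n=2m$; for $m<n<2m$ the model intersection is a $(2m-n)$-dimensional locus, and $C^0$-stability of a positive-dimensional intersection of two sheets-with-boundary is exactly the nontrivial point that needs proof --- it is not an off-the-shelf fact in the form you invoke.

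For comparison, the paper closes this hole with a different device: it takes a product-of-planar-folds target $g:[0,1]^m\to\mathbb{R}^{2m}$ (each pair $(g_{2k-1},g_{2k})$ is a self-intersecting polyline in $t_k$ alone), lets the network approximate only the first $n$ components $g^*=(g_1,\dots,g_n)$, and then \emph{augments the network output with the exact remaining target components}, $h=(\Phi_1,\dots,\Phi_n,g_{n+1},\dots,g_{2m})$. This $h$ is still injective (injectivity is inherited from the network coordinates alone) and is $\varepsilon$-close to $g$ whenever $\Phi$ is $\varepsilon$-close to $g^*$, so the full $2m$-dimensional Lemma~\ref{lem_appro_indimn} applies directly, with the sign conditions holding exactly in the appended coordinates; no shared-$v$ slicing is needed. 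If you want to stay within your own framework instead, you would have to redesign the tail of $g^\star$ so that the two sheets depend on $v$ with opposite orientation --- e.g.\ $g^\star_{2m'+j}(t)=\eta(t_1,\dots,t_{m'})\,(t_{m'+j}-\tfrac12)$ with $\eta\equiv 1$ on $A$ and $\eta\equiv -1$ on $B$ --- which makes $H_{2m'+j}\approx 2(v_j-\tfrac12)$ and restores the $v$-face sign change; but that is a different construction from the one you proposed, so as it stands the proof is incomplete.
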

\begin{proof}
	By Theorem \ref{thm_INNapprox}, it is sufficient to discuss whether a $\Phi_\sigma \in IN(\sigma; m, n, \cdots, n)$ can approximate the target continuous function in $C(\mathbb{R}^m, \mathbb{R}^{n})$.
	
	When $m< n\leq2m$, we denote any $\Phi_\sigma\in IN(\sigma; m, n, \cdots, n)$ by
	 \begin{align*}
	 	\Phi_\sigma(x)=(\Phi_1(x), \cdots, \Phi_n(x)), x=(x_1, x_2,\cdots, x_m)\in \mathbb{R}^{m},
	 \end{align*}
	where $\sigma$ is an injective activation function. Then we can construct 
	\begin{align}\label{eq_g}
		g(x)= (g_1(x), g_2(x), \cdots, g_{2m}(x)),
	\end{align}
	where the $g_i:[0, 1]^m \rightarrow \mathbb{R}$ are defined as
	\begin{align*}
			\scriptsize{
				g_{2k-1}(t_1, \cdots, t_m)=
			\left\{
			\begin{aligned}
				10t_k-1 \quad (0\leq t_k\leq \frac{3}{10})\\
				2 \quad (\frac{3}{10} \leq t_k \leq \frac{1}{2})\\
				7-10t_k \quad (\frac{1}{2}\leq t_k \leq \frac{7}{10})\\
				0 \quad (\frac{7}{10}\leq t_k \leq 1)
			\end{aligned}
			\right.
			}&,\\
			\scriptsize{
				g_{2k}(t_1, \cdots, t_m)=
			\left\{
			\begin{aligned}
				0 \quad (0\leq t_k\leq \frac{3}{10})\\
				10t_k-3 \quad (\frac{3}{10} \leq t_k \leq \frac{1}{2})\\
				2 \quad (\frac{1}{2}\leq t_k \leq \frac{7}{10})\\
				9-10t_k \quad (\frac{7}{10}\leq t\leq 1)
			\end{aligned}.
			\right.
		}&
	\end{align*}
	For $h:\mathbb{R}^m \rightarrow \mathbb{R}^{2m}$ defined by
	\begin{align*}
		h(x)=(\Phi_1(x), \cdots, \Phi_n(x), g_{n+1}(x), \cdots, g_{2m}(x)),
	\end{align*}
	since any $\Phi_\sigma\in IN(\sigma; m, n, \cdots, n)$ is a topological embedding, it is evident that $h$ is also a topological embedding.
	
	Let the neighborhood $U$ in Lemma \ref{lem_appro_indimn} be $\{x\in \mathbb{R}^{2m}\big\vert \vert\vert x-(0, \cdots, 0)\vert\vert_\infty < 1\}$,
	then we know that the corresponding parameters are $[a_{i1}, a_{i2}]=[0, \frac{1}{5}], [b_{j1}, b_{j2}]=[\frac{4}{5}, 1]$, $i=1, 2, \cdots, m, j=1, 2, \cdots, m$. By checking the product condtions, it is evident that when $\varepsilon$ is extremely small, any injective continuous map $h:\mathbb{R}^m \rightarrow \mathbb{R}^{2m}$ can not approximate the $g=(g_1, g_2, \cdots, g_{2m-1}, g_{2m})$ with the error under $\varepsilon$, which implies that $IN(\sigma; m, n, \cdots, n)$ can not approximate the 
\begin{align}\label{eq_gstar}
	g^*=(g_1, g_2, \cdots, g_{n}).
\end{align}
\end{proof}
The self-intersections can be visualized in relatively low-dimensional cases. Figure \ref{fig_self-intersection} depicts that for any network with an injective activation function, the image $\Phi_\sigma([0, 1])$ has a self-intersection point when $\Phi_\sigma$ attempts to approximate $g=(g_1, g_2)$. 
\begin{figure}[htbp]
	\centering
	\includegraphics[width=0.6\linewidth, height=0.6\linewidth]{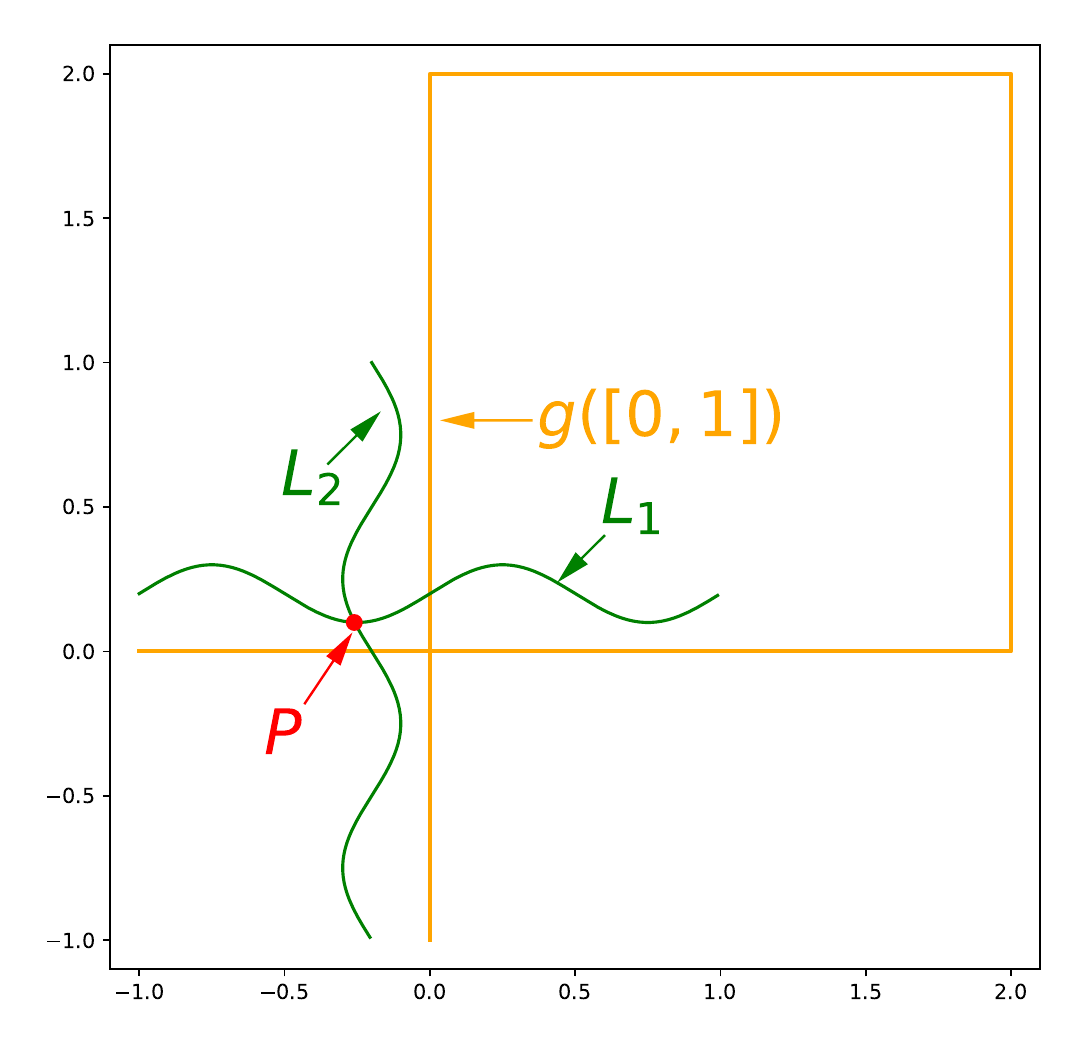}
	\caption{Self-intersection structure of $\Phi_\sigma$. The \textcolor{orange}{orange} polyline represents $g([0, 1])$, the \textcolor{green}{green} curves $L_1$ and $L_2$ denote $\Phi_\sigma([0, \frac{1}{5}])$ and $\Phi_\sigma([\frac{4}{5}, 1])$, respectively. Then $L_1$ must intersects with $L_2$ at the \textcolor{red}{red} point $P$, which contradicts the property that the network $\Phi_\sigma$ is a topological embedding.}
	\label{fig_self-intersection}
\end{figure}

By Theorem \ref{thm_m<n<=2m}, we determine the lower bound of the minimum width of networks with injective activation functions (e.g., ELU, LeakyReLU, Softplus) and those can be uniformly approximated by a sequence of injective functions (e.g., ReLU, HardTanh (\cite{HardTanh}), ReLU6 (\cite{ReLU6})), as stated in the following theorem.
\begin{theorem}[Minimum Width for Injective Activation Functions]\label{thm_injectivewidth}
	For activation function $\sigma$ that is injective or uniformly approximated by a sequence of injective functions, we have
	\begin{align*}
		w_{min}(m, n, \sigma) \geq n+\mathbf{1}_{m<n\leq 2m}.
	\end{align*}
\end{theorem}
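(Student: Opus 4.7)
The plan is to split along the indicator $\mathbf{1}_{m<n\leq 2m}$ and handle the two ranges separately. When $m < n \leq 2m$ fails, the inequality reduces to $w_{min}(m,n,\sigma)\geq n$, which is already subsumed by Cai's bound $w_{min}\geq\max(m,n)$ cited in Subsection \ref{sec_1.1}, so no new argument is required there. The substantive content lies in the middle range $m < n \leq 2m$, where the bound strengthens to $w_{min}(m,n,\sigma)\geq n+1$, i.e., width exactly $n$ is insufficient.

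For an injective $\sigma$ in this middle range, Theorem \ref{thm_m<n<=2m} directly gives $C(\mathbb{R}^m, \mathbb{R}^n)\nprec N^{\sigma}_{m,n,n}$, and by the definition of $w_{min}$ this is precisely $w_{min}(m,n,\sigma)>n$. The remaining task is to transfer this conclusion to an activation function $\sigma$ that is the uniform limit of a sequence of injective functions $\{\sigma_k\}$. I would argue by contradiction: suppose $C\prec N^{\sigma}_{m,n,n}$, so the specific obstruction $g^*\in C([0,1]^m,\mathbb{R}^n)$ built in Equation (\ref{eq_gstar}) is approximable by some $\Phi_\sigma\in N^{\sigma}_{m,n,n}$ within error $\varepsilon/3$, where $\varepsilon$ is the threshold produced by Lemma \ref{lem_appro_indimn} for the associated $2m$-dimensional extension $g$. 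Appealing to Theorem \ref{thm_INNapprox}, we may further assume $\Phi_\sigma\in IN(\sigma;m,n,\ldots,n)$.

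Now form $\Phi_{\sigma_k}$ by keeping the same weights and biases but replacing every $\sigma$ by $\sigma_k$. Since $[0,1]^m$ is compact and the weight matrices and bias vectors are fixed, an inductive layer-by-layer bound shows that the pre-activation values stay in a uniformly bounded region independent of $k$; propagating the uniform convergence $\sigma_k\to\sigma$ through finitely many Lipschitz affine maps and activations gives $\sup_{x\in[0,1]^m}\|\Phi_{\sigma_k}(x)-\Phi_\sigma(x)\|_\infty\to 0$. Hence for $k$ sufficiently large, $\|\Phi_{\sigma_k}-g^*\|_\infty<\varepsilon$. Because $\sigma_k$ is injective and $m\leq n$, any $\Phi_{\sigma_k}\in IN(\sigma_k;m,n,\ldots,n)$ is a topological embedding; appending the fixed deterministic coordinates $g_{n+1},\ldots,g_{2m}$ to its output yields an injective continuous map that approximates $g$ within $\varepsilon$, which directly contradicts the conclusion of Lemma \ref{lem_appro_indimn}. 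Combining the two ranges proves the theorem.

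The main obstacle I anticipate is rigorously controlling the error $\|\Phi_{\sigma_k}-\Phi_\sigma\|_\infty$, since a uniform perturbation of the activation at each of $L+1$ layers must be tracked through subsequent affine compositions. The right way to handle this is to inductively certify, layer by layer, a compact region in which the pre-activations of both $\Phi_\sigma$ and $\Phi_{\sigma_k}$ lie, and then combine the uniform continuity of the affine maps restricted to that region with the uniform closeness $\|\sigma_k-\sigma\|_\infty\to 0$ on that region. Once this propagation estimate is in hand, the rest of the argument is a straightforward application of Theorem \ref{thm_m<n<=2m} and Lemma \ref{lem_appro_indimn}.
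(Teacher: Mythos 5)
Your proof is correct and follows essentially the same route as the paper: Cai's bound $w_{min}\geq\max(m,n)$ handles the cases outside $m<n\leq 2m$, and Theorem \ref{thm_m<n<=2m} (built on Lemma \ref{lem_appro_indimn} and the obstruction $g^*$) handles the middle range. The only difference is that you spell out the layer-by-layer perturbation argument for activations that are uniform limits of injective functions, a step the paper's proof dismisses as evident; your elaboration is sound and, if anything, more rigorous on that point.
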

\begin{proof}
	We suppose that $\sigma$ is injective, and discuss it in different dimension cases.
	
	Case 1: $n\leq m$ or $n\geq 2m+1$.
	
	In this case, \cite{Cai} proved that for an arbitrary activation function $\sigma$, $w_{min}(m, n, \sigma)\geq \max(m, n)$, so it is obvious that $w_{min}(m, n, \sigma) \geq n$.
	
	Case 2: $m< n\leq 2m$.
	
	By Theorem \ref{thm_m<n<=2m}, $N_{m, n, n}^\sigma$ can not approximate any $f: \mathbb{R}^m\rightarrow \mathbb{R}^n$, such that $w_{min}(m, n,\sigma)\geq n+1$.
	
	When $\sigma$ can be uniformly approximated by a sequence of injective functions $\{f_i\}_{i\in \mathbb{N}^+}$, it is evident that for any compact set $K$ and $\varepsilon>0$, there exists an index $i\in \mathbb{N}^+$ such that $\sup\limits_{x\in K}\vert\vert \sigma(x)- f_i(x) \vert\vert<\varepsilon$, which implies that the conclusion remains true.
\end{proof}

\subsection{Minimum Width of  Networks with Variants of ReLU}\label{sec_3.2}
Due to the formula of ReLU, it suffers from the vanishing gradient problem for negative input values. Since variants of ReLU  (LeakyReLU, ELU, CELU, SELU and Softplus) introduce smooth curves for both positive and negative inputs and improve the gradient, they have attracted significant research interest. In this subsection, we show our main minimum width results of variants of ReLU, by analyzing the images of one-dimension input under variants of ReLU.

Geonho Hwang (\cite{Hwang}) researched the upper bound of $w_{min}(m, n, LeakyReLU)$. In fact, we can utilize the geometric property that LeakyReLU can bend a strainght line into a polyline, by considering the one-dimensional input. Thus another LeakyReLU with different parameter can be approximated after finitely many bendings, under an arbitrary approximation accuracy tolerance $\varepsilon>0$. Therefore, when all layers employ the same activation function $LeakyReLU_\beta$, with $\beta\in (0, 1)\cup (1+\infty)$ fixed, the minimum width remains unchanged.
\begin{theorem}[Equivalence between LeakyReLU Networks]\label{thm_Leaky_equivalent}
	For any fixed $\beta\in (0, 1)\cup(1, +\infty)$, we have $N_{m, n, k}^{LeakyReLU_\beta}\sim N_{m, n, k}^{LeakyReLU}$. It implies that 
	\begin{align*}
		w_{min}(m, n, LeakyReLU_\beta) = w_{min}(m, n, LeakyReLU).
	\end{align*}
\end{theorem}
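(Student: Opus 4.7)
The plan is to establish the equivalence $N_{m,n,k}^{LeakyReLU_\beta}\sim N_{m,n,k}^{LeakyReLU}$ by proving the two compact-approximation inclusions separately. The inclusion $N_{m,n,k}^{LeakyReLU_\beta}\prec N_{m,n,k}^{LeakyReLU}$ is essentially free, since $\{LeakyReLU_\beta\}\subset LeakyReLU$ implies $N_{m,n,k}^{LeakyReLU_\beta}\subset N_{m,n,k}^{LeakyReLU}$, so every network on the left is exactly representable on the right. All the work therefore goes into the opposite direction $N_{m,n,k}^{LeakyReLU}\prec N_{m,n,k}^{LeakyReLU_\beta}$: given any $\Psi\in N_{m,n,k}^{LeakyReLU}$, a compact domain $K$, and $\varepsilon>0$, I must produce $\Phi\in N_{m,n,k}^{LeakyReLU_\beta}$ with $\sup_{x\in K}\|\Phi(x)-\Psi(x)\|_\infty<\varepsilon$.

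The strategy is a layer-by-layer replacement. Since $\Psi$ has finitely many layers and $K$ is compact, Proposition~\ref{prop_expandNN} and continuity guarantee that the pre-activation at each layer ranges over a compact set $I_j\subset\mathbb{R}^{k_j}$, and a standard inductive continuity argument shows that if every $LeakyReLU_{\beta_j}$ acting on $I_j$ is replaced by a sub-network using only $LeakyReLU_\beta$ with componentwise uniform error smaller than some small $\delta=\delta(\varepsilon,\Psi)$, then the full composition is within $\varepsilon$ of $\Psi$ on $K$. So everything reduces to a one-dimensional approximation lemma: for every $\beta'\in\mathbb{R}_+$, every compact interval $I\subset\mathbb{R}$, and every $\delta>0$, there is a scalar network of the form $g=T_\ell\circ LeakyReLU_\beta\circ T_{\ell-1}\circ\cdots\circ LeakyReLU_\beta\circ T_0$ such that $\sup_{x\in I}|g(x)-LeakyReLU_{\beta'}(x)|<\delta$.

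For the 1D lemma I would exploit the geometric picture suggested in the paper: one application of $LeakyReLU_\beta$ bends an affine function at a chosen breakpoint, and interposing affine maps between successive $LeakyReLU_\beta$'s lets us place breakpoints and control slopes freely. Two useful identities anchor the construction. First, $LeakyReLU_\beta^{\circ n}=LeakyReLU_{\beta^n}$, and since $\beta\in(0,1)\cup(1,\infty)$ we can make $\beta^n$ arbitrarily close to $0$ or arbitrarily large, yielding uniform approximations of $ReLU$ (resp.\ of a reflected/scaled version) on $I$. Second, the algebraic identity $LeakyReLU_{\beta'}(x)=\frac{\beta'-\beta}{1-\beta}\,x+\frac{1-\beta'}{1-\beta}LeakyReLU_\beta(x)$ (valid for $\beta\ne1$) expresses the target as an affine combination of $x$ and $LeakyReLU_\beta(x)$. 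To execute this affine combination inside a purely sequential width-$k$ network, I exploit the fact that $LeakyReLU_\beta$ is the identity on any set of positive reals: after the first hidden layer, a parallel "carry" slot holding $x+C$ with $C$ chosen large enough to make $x+C>0$ on $I$ passes through every subsequent $LeakyReLU_\beta$ as the identity, so a single final affine map recombines the carry and the nonlinearly-transformed component to produce $LeakyReLU_{\beta'}(x)$. Together with Theorem~\ref{thm_INNapprox} (approximation by full-rank networks) and the inductive layer-replacement scheme above, this gives the desired $\Phi$.

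The main obstacle is width control, because componentwise $LeakyReLU_{\beta'}$ on a $k_j$-dimensional hidden vector nominally wants to keep both the original vector and its $LeakyReLU_\beta$-image in memory, which looks like $2k_j$ slots. Resolving this cleanly is the delicate point: one must re-use slots across several sub-layers, processing components sequentially by alternately shifting a target coordinate into the "positive identity regime" of $LeakyReLU_\beta$ while actively transforming the others, and relying on the fact that the overall maximum hidden width $k$ of the original network already bounds all the intermediate widths we need. Once this bookkeeping is made precise, continuity of the whole composition propagates the componentwise $\delta$-errors into a global $\varepsilon$-error, and the proof of $w_{min}(m,n,LeakyReLU_\beta)=w_{min}(m,n,LeakyReLU)$ follows immediately from Definition~\ref{defn_minimumwidth}.
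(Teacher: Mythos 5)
Your reduction is the same as the paper's: the easy direction by inclusion, then layer-by-layer replacement of each $LeakyReLU_{\alpha_i}$ applied componentwise, reducing everything to a scalar lemma $LeakyReLU_\alpha\prec N_{1,1,1}^{LeakyReLU_\beta}$ plus a continuity/compactness argument. The gap is in your mechanism for the scalar lemma. Your concrete tool is the exact identity $LeakyReLU_{\alpha}(x)=\tfrac{\alpha-\beta}{1-\beta}x+\tfrac{1-\alpha}{1-\beta}LeakyReLU_\beta(x)$ executed with a ``carry'' slot holding a shifted copy of $x$. That carry slot costs one extra coordinate per actively processed component, so the construction needs width $2$ for a scalar, and in the network setting it needs at least one spare slot in every hidden layer. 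But a network in $N_{m,n,k}^{LeakyReLU}$ may have hidden layers of width exactly $k$ (and after the padding of Proposition~\ref{prop_expandNN} they all are), so there is no spare slot, and the theorem's whole content is that the width bound $k$ must not be exceeded. Your proposed fix --- sequentially parking other coordinates in the positive identity regime --- only protects the \emph{inactive} coordinates; the active coordinate still needs $x_i$ and $LeakyReLU_\beta(x_i)$ simultaneously, and there is nowhere to put the second copy. Moreover this cannot be repaired by cleverer in-place algebra: any width-$1$ composition of affine maps and $LeakyReLU_\beta$ is piecewise linear with adjacent-slope ratios in $\{\beta^j : j\in\mathbb{Z}\}$, so the identity can never be realized exactly in one slot when $\alpha\notin\beta^{\mathbb{Z}}$; and your other tool, $LeakyReLU_\beta^{\,n}=LeakyReLU_{\beta^n}$, likewise only reaches slopes in $\beta^{\mathbb{Z}}$ (approximating $ReLU$ does not help either, since width-$1$ $ReLU$ compositions flatten one side and cannot approximate $LeakyReLU_\alpha$ on a compact set containing negative inputs).

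What is missing is precisely the nontrivial step the paper supplies: a genuinely \emph{approximate, in-place, width-$1$} construction. After reducing to $\alpha\in(0,1]$ via $LeakyReLU_{1/\alpha}=T_{-1/\alpha,0}\circ LeakyReLU_\alpha\circ T_{-1,0}$, the paper picks $\beta_1,\beta_2\in\{\beta^j\}$ with $0<\beta_1<\alpha<\beta_2$ and alternately composes $LeakyReLU_{\beta_2/\beta_1}$ and $LeakyReLU_{\beta_1/\beta_2}$ (each itself a width-$1$ $LeakyReLU_\beta$ network) conjugated by shifts of size $O(\varepsilon)$, producing a zig-zag piecewise-linear function that stays within $\varepsilon$ of $LeakyReLU_\alpha$ on $[-k\varepsilon/\alpha,+\infty)$, with $k$ as large as needed to cover the given compact set. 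Because this approximant uses only scalar affine maps and $LeakyReLU_\beta$, it can be applied diagonally (via $\lambda I_{k\times k}$ and constant bias vectors) to all $k$ coordinates at once, so no extra width is ever needed. Unless you replace your carry-slot identity by such an in-place approximation scheme, the width bookkeeping you flag as ``the delicate point'' does not close, and the proof of $N_{m,n,k}^{LeakyReLU}\prec N_{m,n,k}^{LeakyReLU_\beta}$ is incomplete.
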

\begin{proof}
	The proof is provided in Appendix \ref{appendix_Leaky_equivalent}.
\end{proof}
For an approximation accuracy tolerance $\varepsilon=0.3$, Figure \ref{fig_LR_to_LR} illustrates the process that how a sequence of LeakyReLU NNs $\{\Phi_i\vert i\in\mathbb{N}_+\}$ in $N_{1, 1, 1}^{LeakyReLU_{0.2}}$ satisfy the approximation condition that $\vert \Phi_i-LeakyReLU_{0.1}\vert <\varepsilon$ holds for any $i\in \mathbb{N}_+$ and a set $K_i=[-3i, +\infty)$. We know that $K_i\subset K_{i+1}$ holds for all $i\in \mathbb{N}_+$, and for any compact set $K\subset\mathbb{R}$, there exists $K_i$ such that $K\subset K_i$. By Appendix \ref{appendix_Leaky_equivalent}, for any $\varepsilon>0$ and fixed $\alpha, \beta\in(0, 1)$, we have $LeakyReLU_\alpha\prec N_{1, 1, 1}^{LeakyReLU_\beta}$, which implies that $N_{m, n, k}^{LeakyReLU}\prec N_{m, n, k}^{LeakyReLU_\beta}$.
 \begin{figure}[htbp]
 	\centering
 	\subfigure[Diagram of $\Phi_2$]{
 		\includegraphics[width=0.7\linewidth, height=0.28\linewidth]{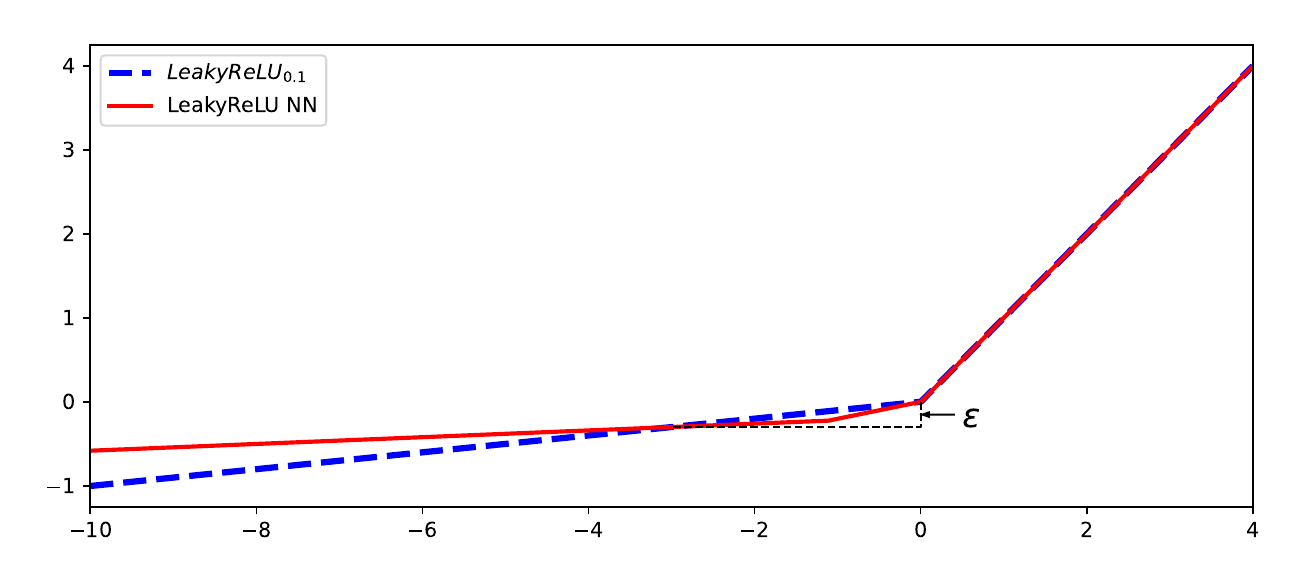}
 	}
 	\subfigure[Diagram of $\Phi_4$]{
 		\includegraphics[width=0.7\linewidth, height=0.28\linewidth]{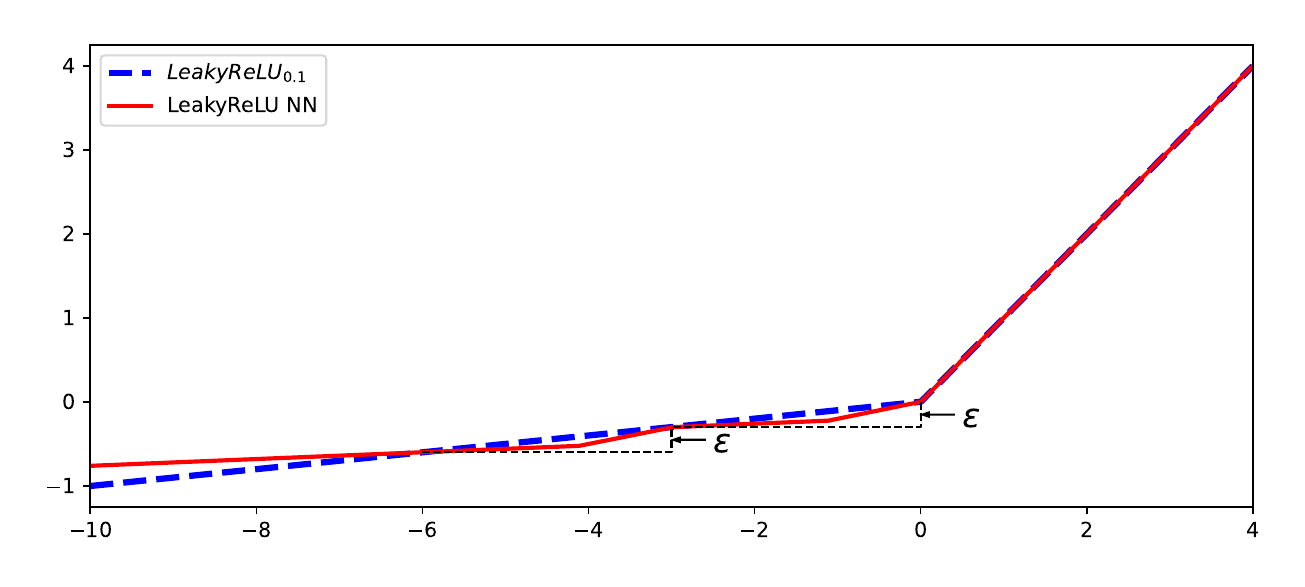}
 	}
 	\subfigure[Diagram of $\Phi_6$]{
 		\includegraphics[width=0.7\linewidth, height=0.28\linewidth]{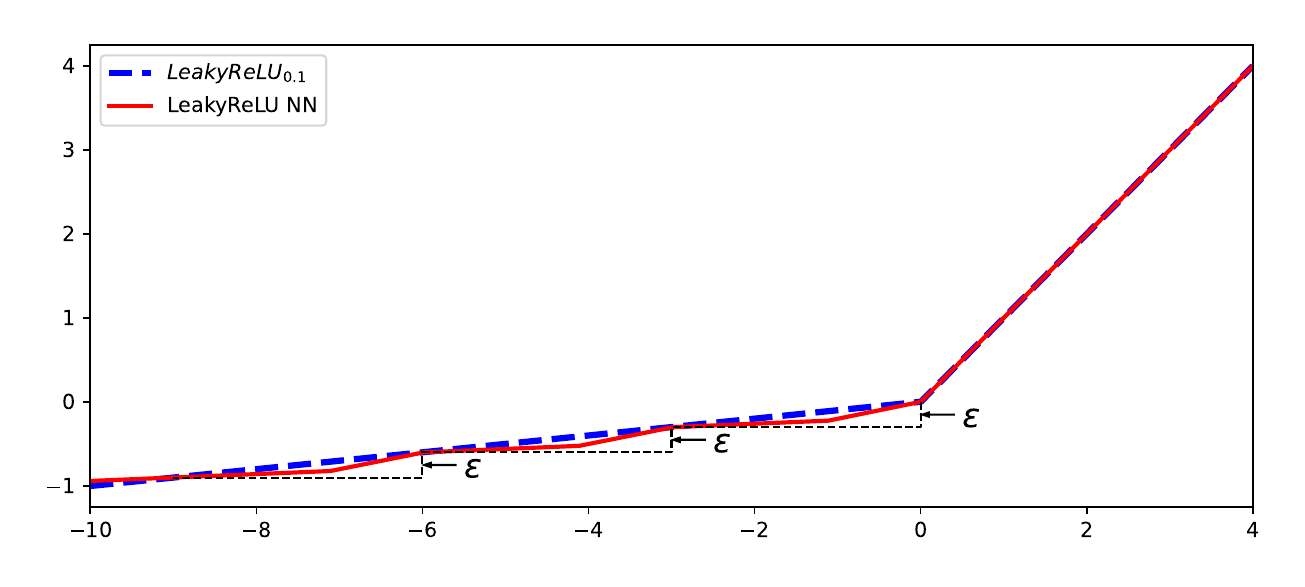}
 	}
 	\caption{The diagram of the geometric transformations of LeakyReLU layers. Subfigure (a), Subfigure (b) and Subfigure (c) adopt the construction of $\Phi_2, \Phi_4, \Phi_6$, respectively, which are mentioned in Appendix \ref{appendix_Leaky_equivalent}. The parameters are $\alpha=0.1$, $\beta=0.2$, $\beta_1=\beta^2=0.04$, $\beta_2=\beta^1=0.2$. They achieve the desired accuracy over $K_1=[-3, +\infty), K_2=[-6, +\infty)$ and $K_3=[-9, +\infty)$, respectively. }
 	\label{fig_LR_to_LR}
 \end{figure}
 
However, though they requires the same network width, the depth may differ significantly. By the definitions of $N_{m, n, k}^{LeakyReLU}$ and $N_{m, n, k}^{LeakyReLU_\beta}$, it is evident that the depth required by $N_{m, n, k}^{LeakyReLU}$ does not exceed that required by $N_{m, n, k}^{LeakyReLU_\beta}$. By a simple example, the following lemma shows that $N_{m, n, k}^{LeakyReLU_\beta}$ requires greater network depth than that required by $N_{m, n, k}^{LeakyReLU}$ in certain cases.

\begin{lemma}[Disadvantage of Fixed Parameters]\label{lem_Leaky_depth}
	There exists $F:[-1, 1]\rightarrow\mathbb{R}$ that $F\prec N(LeakyReLU; 1, 1, 1)$, and $F\nprec N(LeakyReLU_{0.1}; 1, 1, 1)$, which implies that a LeakyReLU network with fixed parameters across all layers requires greater network depth in certain cases.
\end{lemma}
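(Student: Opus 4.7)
The plan is to take $F := LeakyReLU_{1/2}$, the piecewise-linear function on $\mathbb{R}$ with a single kink at $0$ and slopes $1/2$ on $(-\infty,0]$ and $1$ on $[0,\infty)$. The positive claim $F\prec N(LeakyReLU;1,1,1)$ is immediate: choose the single hidden-layer activation to be $LeakyReLU_{1/2}$ (legitimate since $1/2\in\mathbb{R}_+$) and both affine maps to be the identity; the resulting network equals $F$ pointwise, so it approximates $F$ on every compact set.

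For the negative claim, I would first characterize the set. Any $g\in N(LeakyReLU_{0.1};1,1,1)$ has the form $g(x)=w_1\cdot LeakyReLU_{0.1}(w_0x+b_0)+b_1$, so $g$ is continuous and piecewise affine on $[-1,1]$ with at most one kink at $x^\star=-b_0/w_0$. A direct slope computation shows that whenever $x^\star\in[-1,1]$, the left and right slopes $(s_-,s_+)$ of $g$ at $x^\star$ satisfy $s_+/s_-\in\{1/10,10\}$ (with the case depending on the sign of $w_0$); otherwise $g$ is affine on $[-1,1]$. The goal is then to produce an explicit $\varepsilon_0>0$ with
\[
\inf_{g\in N(LeakyReLU_{0.1};1,1,1)}\ \sup_{x\in[-1,1]}|g(x)-F(x)|\ \ge\ \varepsilon_0,
\]
which contradicts $F\prec N(LeakyReLU_{0.1};1,1,1)$ on the compact set $[-1,1]$.

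The key tool is the elementary fact that two affine functions with slopes $s\ne s'$ differ in sup-norm on $[a,b]$ by at least $\tfrac12|s-s'|(b-a)$. I would split into three cases. Case (i), $g$ affine on $[-1,1]$: a single slope $s$ must simultaneously approximate $1/2$ on $[-1,0]$ and $1$ on $[0,1]$; optimizing $s$ and the additive constant yields a minimax error of exactly $1/8$. Case (ii), $g$ has a kink at $x^\star\in[-1,1]\setminus\{0\}$: then $g$ is affine across the kink of $F$ on one of $[-1,0]$ or $[0,1]$, forcing the corresponding slope of $g$ near $1/2$ or near $1$, while the ratio constraint $s_+/s_-\in\{1/10,10\}$ pushes the other slope far from its target; combining the two side-errors via the slope inequality produces a lower bound uniform in $x^\star$. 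Case (iii), $g$ has a kink at $x^\star=0$: then $(s_-,s_+)$ has ratio in $\{1/10,10\}$ while $F$ has ratio $1:2$, and comparing values at $x=-1,0,+1$ under this constraint yields a quantitative gap (a short computation gives $2/11$).

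The main obstacle is case (ii) when $x^\star$ is close to $0$: the subinterval on which $F$'s kink lies interior to a single affine piece of $g$ shrinks, so the local slope-mismatch estimate becomes weak in isolation. The resolution is that the rigid ratio constraint $s_+/s_-\in\{1/10,10\}$ prevents $(s_-,s_+)$ from being simultaneously close to $(1/2,1)$ regardless of the position of $x^\star$, so at least one of the two complementary side-errors on $[-1,0]$ and $[0,1]$ remains bounded below by a positive constant independent of $x^\star$. Combining with the length factor of the longer side produces the required uniform bound, and taking the minimum of the three case constants delivers $\varepsilon_0>0$, completing the proof.
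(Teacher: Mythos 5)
Your proposal is correct and rests on the same core obstruction as the paper: a one-hidden-neuron $LeakyReLU_{0.1}$ network is continuous, piecewise affine with at most one kink, and its two slopes are locked to the ratio $10$ (or $1/10$), which is incompatible with uniformly approximating a target kink whose slope ratio is different. The differences are in the witness and in how the quantitative gap is extracted. The paper takes $F=LeakyReLU_{0.2}$ on $[-1,1]$, notes that $\Phi-F$ is piecewise affine with breakpoints among $-1,c,0,1$ so the sup-norm is controlled by its values there, and then derives a contradiction for small $\varepsilon$ by combining the resulting four inequalities in the parameters $a,b,c$ (splitting on the sign of the weight and on $c\in[-1,0]$ versus $c\in[0,1]$). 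You instead take $F=LeakyReLU_{1/2}$ and argue by a case split on the kink location $x^\star$, using the elementary bound that two affine functions with slopes $s\neq s'$ differ by at least $\tfrac12|s-s'|(b-a)$ in sup-norm on $[a,b]$; your explicit constants ($1/8$ in the affine case, $2/11$ when $x^\star=0$) check out, and your resolution of the delicate sub-case $x^\star$ near $0$ is sound, though it should be written out: the long affine side containing $[0,1]$ (or $[-1,0]$) pins one slope near its target, the ratio constraint then forces the other slope to roughly $1/10$ or $10$ times it, hence far from the remaining target slope $1/2$ (or $1$), and one of the two sub-intervals on the other side of $0$ has length at least $1/2$, yielding a lower bound uniform in $x^\star$. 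What each approach buys: the paper's four-point evaluation is shorter algebra once the breakpoints are listed, while your slope-based case analysis produces explicit uniform constants and transfers essentially verbatim to any fixed parameter $\beta$ and any target $LeakyReLU_\alpha$ with $\alpha\notin\{\beta,1/\beta,1\}$, making the rigidity phenomenon more transparent.
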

\begin{proof}
	Let $F:[-1, 1]\rightarrow \mathbb{R}$ be defined as:
	$$F(x)=\left\{
	\begin{aligned}
		x, & \quad x\in [0, 1],\\
		0.2x, & \quad x\in [-1, 0].
	\end{aligned}
	\right.
	$$
	It is easy to verify that any $\Phi(x)\in N(LeakyReLU_{0.1}; 1, 1, 1)$ takes one of the following two forms:
	\begin{enumerate}[(1)]
		\item $$
		\Phi(x)=
		\left\{
		\begin{aligned}
			&a x + b ,& -1\leq x\leq c,\\
			&10a x +  b - 9 ac,&  c\leq x\leq 1,\\
		\end{aligned}
		\right.
		$$
		\item $$
		\Phi(x)=
		\left\{
		\begin{aligned}
			&10a x + b ,& -1\leq x\leq c,\\
			&a x +  b + 9 ac,&  c\leq x\leq 1,\\
		\end{aligned}
		\right.
		$$
	\end{enumerate}
	where $a, b, c\in \mathbb{R}$ ($c=-1$ and $c=1$ are allowed). Then $\Phi(x)-F(x)$ is a continuous piecewise linear function defined on at most 3 disjoint intervals. Without loss of generality, we assume that $\Phi(x)$ has the form in the case (1), then the proof of the case (2) is analogous. We suppose that $-1\leq c \leq 0 \leq 1$. Since $\Phi(x)-F(x)$ is piecewise linear and continuous, it follows that
	\begin{align*}
		\vert \Phi(x) - F(x)\vert < \varepsilon, \enspace x\in [-1, 1] \Leftrightarrow \vert \Phi(x)-F(x) \vert< \varepsilon, \enspace x=-1, c, 0, 1.
	\end{align*}
	Consequently, for any $\varepsilon>0$, $F\prec \Phi$ means there exist $a, b, c\in \mathbb{R}$ such that
	\begin{enumerate}[(1)]
		\item $-\varepsilon <-a+b+0.2 < \varepsilon$,
		\item $ -\varepsilon <ac+b-0.2c < \varepsilon$,
		\item $ -\varepsilon < b-9ac < \varepsilon$,
		\item $-\varepsilon < 10a+b-9ac- 1 <\varepsilon$.
	\end{enumerate}
	Since $-22\varepsilon<10*[(-a+b+0.2) - (ac+b-0.2c)] + [(b-9ac)-(ac+b-0.2c)]<22\varepsilon$, we have $-22\varepsilon\leq -1.8c-2\leq 22 \varepsilon$. When $\varepsilon<\frac{1}{220}$, it contradicts the assumption $-1\leq c\leq 0$.
	
	The proof of that $-1\leq 0\leq c\leq 1$ is analogous. Hence, it suffices to verify that for any $c\in[-1, 1]$, it is impossible that $\vert \Phi(x)-F(x)\vert <\varepsilon$ holds for $x=-1, 0, 1, c$. Therefore, $F\nprec\Phi$, which implies that a LeakyReLU network with fixed parameters across all layers requires greater network depth in this case.
\end{proof}
When the network width is fixed, restricting the depth results in fewer parameters, lower computational complexity, and faster training. Therefore, in the following discussion, we focus on the LeakyReLU NNs in $N(LeakyReLU; m, d_1, \cdots, d_L, n)$ rather than those in $N(LeakyReLU_\beta; m, d_1, \cdots, d_L, n)$.  

Motivated by the proof of Theorem \ref{thm_Leaky_equivalent}, we can utilize the geometric property that ELU and SELU activation function can bend a curve, when considering the one-dimensional input. Therefore, we can adjust the parameters of activation function, such that the networks with width 1 can approximate another activation function. 
\begin{theorem}[Equivalence between Variants of ReLU]\label{thm_variant_ReLU_sim}
	$N_{m, n, k}^{ELU}\sim N_{m, n, k}^{LeakyReLU}\sim N_{m, n, k}^{SELU}$
\end{theorem}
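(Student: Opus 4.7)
The plan is to establish the two pairwise equivalences $N_{m,n,k}^{ELU}\sim N_{m,n,k}^{LeakyReLU}$ and $N_{m,n,k}^{SELU}\sim N_{m,n,k}^{LeakyReLU}$ separately; the chain of equivalences in the statement then follows from the symmetry and transitivity of $\sim$, which are both immediate from its definition. Each pairwise equivalence in turn splits into two width-preserving compact-approximation statements: one showing that on any compact set and to any desired accuracy every $\sigma\in\{ELU, SELU\}$ can be replaced by a chain of width-1 LeakyReLU sub-networks acting coordinatewise, and the reverse showing that LeakyReLU can be simulated by a width-1 ELU (resp.\ SELU) sub-network together with surrounding affine maps. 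In both directions the substitution is applied componentwise inside every hidden layer of a given network of width $k$, so the replacement preserves the maximum hidden width.

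\textbf{Step 1: LeakyReLU approximating ELU and SELU by iterated bending.} Fix a compact interval $K\subset\mathbb{R}$ and $\varepsilon>0$. The restriction of ELU (or SELU) to $K$ is $C^{1}$, hence by uniform continuity it can be uniformly approximated on $K$ by a continuous piecewise linear function with finitely many breakpoints and arbitrary prescribed slopes on each piece. The bending mechanism already exploited in the proof of Theorem \ref{thm_Leaky_equivalent}---each additional width-1 LeakyReLU layer introduces exactly one new breakpoint and allows the two post-breakpoint slopes to be chosen as essentially arbitrary rescalings of the pre-breakpoint slope---lets one realize any such piecewise linear function exactly by a finite composition of width-1 LeakyReLU layers, hence approximate ELU (or SELU) uniformly within $\varepsilon$ on $K$. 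Because the substitution is coordinatewise, replacing each componentwise ELU (or SELU) activation in a given width-$k$ network by $k$ parallel copies of this chain yields a LeakyReLU network of the same maximum width $k$, and a standard compactness-plus-induction argument on the layer index bounds the propagated error.

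\textbf{Step 2: ELU and SELU approximating LeakyReLU by scaling, and the main obstacle.} For the reverse direction, use the scaling identity: for $c>0$, the composition $T_{c,0}\circ ELU_{\alpha}\circ T_{1/c,0}$ acts as the identity on $x\ge 0$ and as $x\mapsto c\alpha(e^{x/c}-1)$ on $x<0$; since $c\alpha(e^{x/c}-1)\to\alpha x$ uniformly on any compact set as $c\to+\infty$, this composition uniformly approximates $LeakyReLU_{\alpha}$ on $K$ within any prescribed $\varepsilon$. The identical argument applies to SELU because SELU is a fixed positive scalar multiple of an ELU and can be rescaled away by the adjacent affine maps. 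Inserting this width-1 sub-network in place of each LeakyReLU activation, coordinatewise, gives the two inclusions $N_{m,n,k}^{LeakyReLU}\prec N_{m,n,k}^{ELU}$ and $N_{m,n,k}^{LeakyReLU}\prec N_{m,n,k}^{SELU}$. The main obstacle, as in Step 1, is controlling the accumulated error across $L$ successive layer substitutions: a perturbation of size $\varepsilon_i$ at layer $i$ is amplified downstream by the operator norms of the subsequent weight matrices and by the local Lipschitz constants of the activations. The remedy is to fix a slightly larger compact set containing all hidden-layer outputs of the original network (possible by continuity and compactness of the input domain), choose the per-layer tolerances $\varepsilon_i$ geometrically small, and verify inductively that the approximating network's hidden outputs stay inside this enlarged compact set so the Lipschitz constants remain controlled. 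Once this uniform-on-compacts error bookkeeping is in place, all four $\prec$-inclusions are established and the theorem follows.
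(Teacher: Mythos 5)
Your proposal is correct in substance, but it follows a genuinely different route from the paper's proof on both directions. For the direction $N^{ELU}_{m,n,k}\prec N^{LeakyReLU}_{m,n,k}$ (and likewise for SELU), the paper simply invokes Corollary 11 of Hwang et al., which already gives $N^{\sigma}_{m,n,k}\prec N^{LeakyReLU}_{m,n,k}$ for every continuous increasing activation, whereas you reprove this from scratch via piecewise-linear interpolation on a compact interval plus exact realization of the interpolant by a width-$1$ LeakyReLU chain; your argument is self-contained but longer, and one phrase needs care: a width-$1$ LeakyReLU composition is strictly monotone, so it can only realize increasing piecewise-linear functions with positive slopes, not ones with ``arbitrary prescribed slopes'' --- this is harmless here because ELU and SELU are strictly increasing, so their interpolants are admissible, but the claim as stated is too strong. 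For the reverse direction $N^{LeakyReLU}_{m,n,k}\prec N^{ELU}_{m,n,k}$, the two constructions differ more substantially: the paper builds an iterated sequence of ELU layers $\Phi_n = T_{1,-(n-1)\varepsilon}\circ ELU_{k_n}\circ T_{1,(n-1)\varepsilon}\circ\Phi_{n-1}$ whose parameters are tuned so that $\Phi_k$ agrees with $LeakyReLU_\alpha$ at the grid points $-i\varepsilon/\alpha$ and, by monotonicity, stays within $\varepsilon$ on $[-k\varepsilon/\alpha,+\infty)$, with depth growing as the compact set grows; you instead use the single-layer scaling limit $T_{c,0}\circ ELU_\alpha\circ T_{1/c,0}\to LeakyReLU_\alpha$ uniformly on compacts as $c\to\infty$, which is correct (the negative branch $c\alpha(e^{x/c}-1)$ converges uniformly to $\alpha x$ on bounded sets) and is both simpler and more depth-efficient, trading depth for a large weight. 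Your treatment of SELU by absorbing the outer factor $\lambda$ into the adjacent affine maps matches the paper's observation that $SELU_{(1,\beta)}=ELU_\beta$, and your layerwise error-propagation bookkeeping (enlarged compact sets, geometrically small per-layer tolerances) is the standard argument the paper leaves implicit, so the overall proof goes through.
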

\begin{proof}
	The proof is provided in Appendix \ref{appendix_variant_ReLU_sim}.
\end{proof}
For an approximation accuracy tolerance $\varepsilon=0.3$, Figure \ref{fig_ELU_to_LR} illustrates the process that how a sequence of ELU NNs $\{\Phi_i\vert i\in\mathbb{N}_+\}$ in $N_{1, 1, 1}^{ELU}$ satisfys the approximation condition, that $\vert \Phi_i-LeakyReLU_{0.1}\vert <\varepsilon$ holds for any $i\in \mathbb{N}_+$ and a set $K_i=[-3i,+\infty)$. We know that $K_i\subset K_{i+1}$ holds for any $i\in \mathbb{N}_+$, and for any compact set $K\subset\mathbb{R}$, there exists a $K_i$ such that $K\subset K_i$. By Appendix \ref{appendix_variant_ReLU_sim}, for any $\varepsilon>0$ and fixed $\alpha\in(0, +\infty)$, we have $LeakyReLU_\alpha\prec N_{1, 1, 1}^{ELU}$, which implies that $N_{m, n, k}^{LeakyReLU}\prec N_{m, n, k}^{ELU}$.
\begin{figure}[htbp]
	\centering
	\subfigure[Diagram of $\Phi_1$]{
		\includegraphics[width=0.7\linewidth, height=0.28\linewidth]{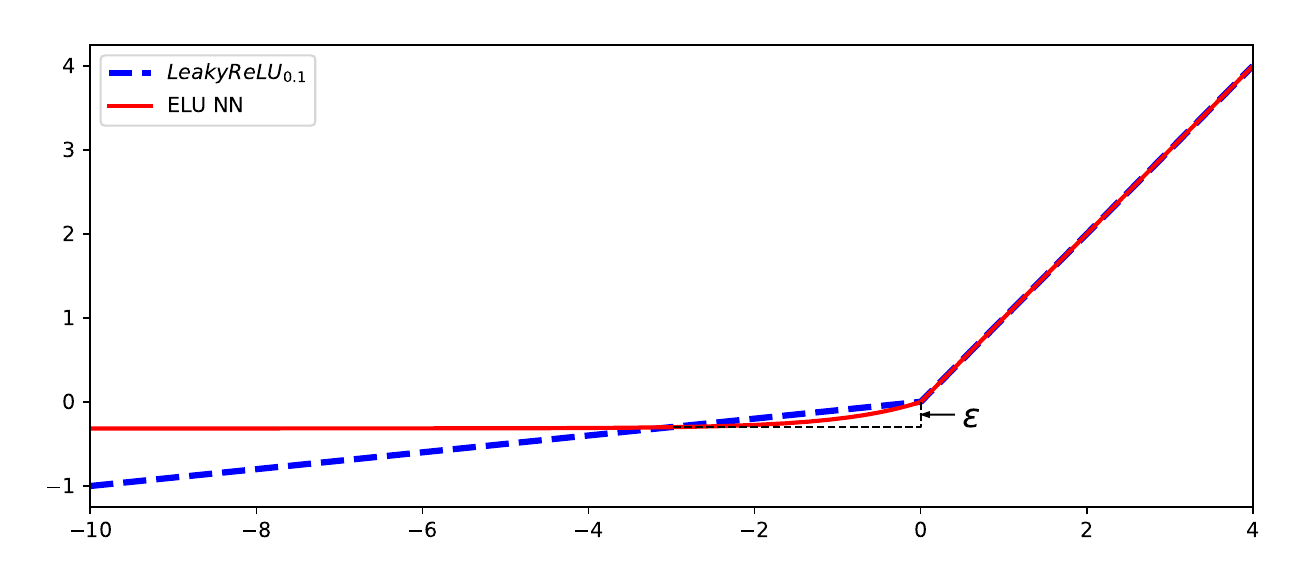}
	}
	\subfigure[Diagram of $\Phi_2$]{
		\includegraphics[width=0.7\linewidth, height=0.28\linewidth]{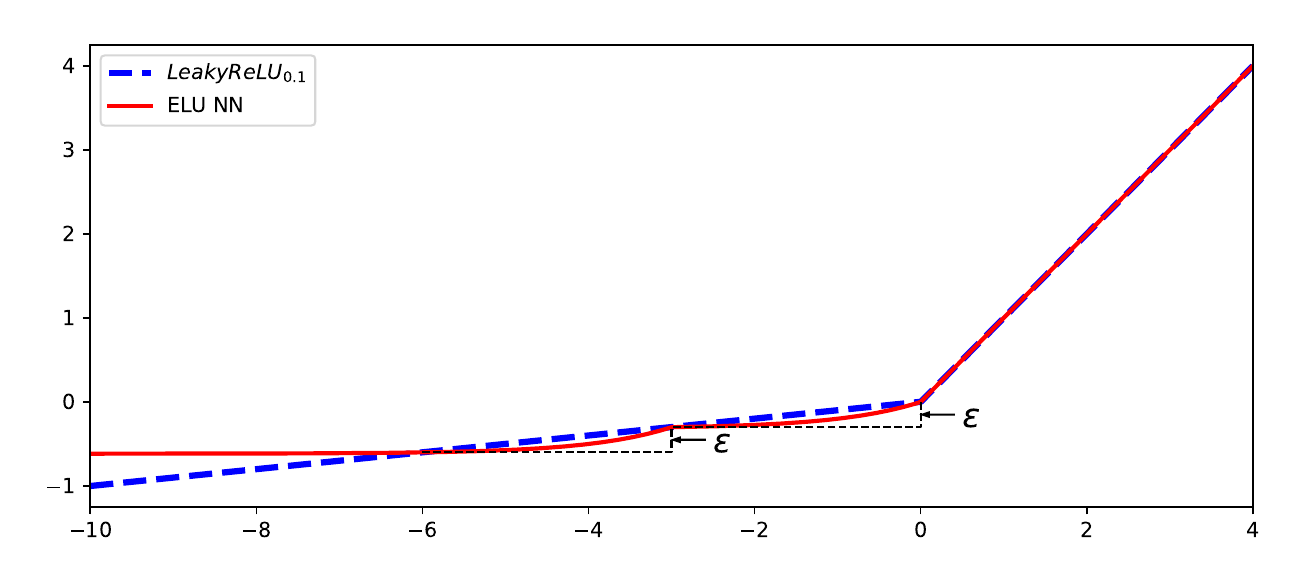}
	}
	\subfigure[Diagram of $\Phi_3$]{
		\includegraphics[width=0.7\linewidth, height=0.28\linewidth]{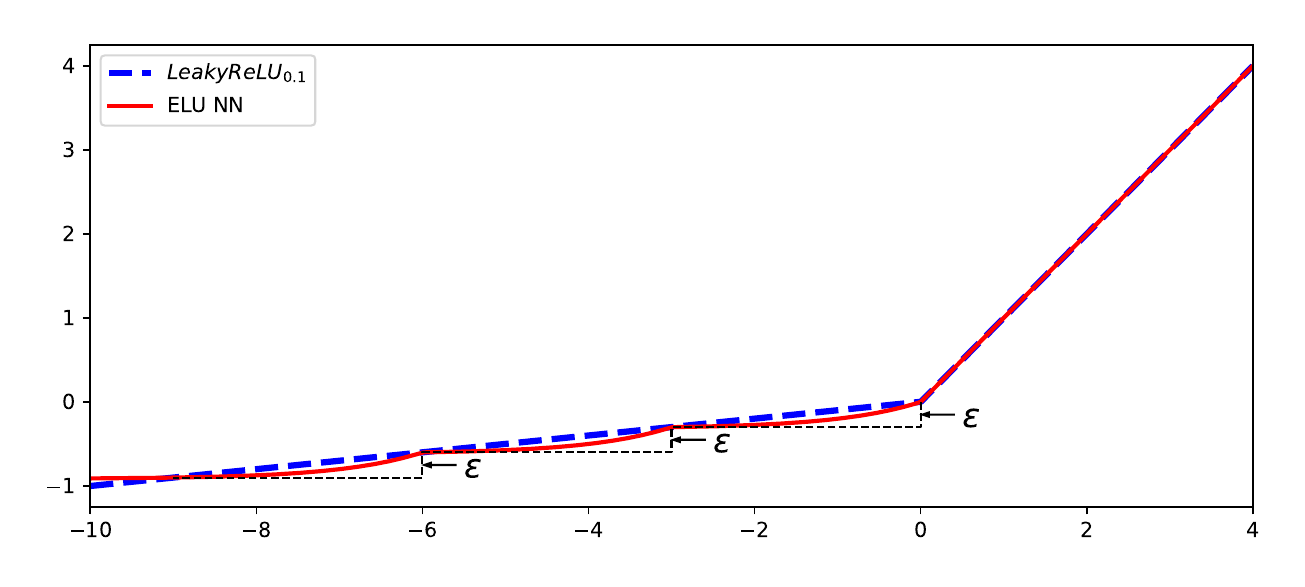}
	}
	\caption{The diagram of the geometric transformations of ELU layers. Subfigure (a), Subfigure (b) and Subfigure (c) adopt the construction of $\Phi_1, \Phi_2, \Phi_3$, respectively, which are mentioned in Appendix \ref{appendix_variant_ReLU_sim}. They achieve the desired accuracy over $K_1=[-3, +\infty), K_2=[-6, +\infty)$ and $K_3=[-9, +\infty)$, respectively. }
	\label{fig_ELU_to_LR}
\end{figure}

By Theorem \ref{thm_injectivewidth}, we prove the upper bound of minimum width of network with ELU and SELU, which refines the results in (\cite{Hwang}).
\begin{theorem}[Upper Width Bounds for Variants of ReLU]\label{thm_variant_LR_width}
	For the activation function $LeakyReLU_\beta \left(\beta\in(0, 1)\cup (1, +\infty)\right)$, $ELU$, $SELU$, we have
	\begin{align*}
		w_{min}(m, n, \sigma)\leq \max(2m+1, n).
	\end{align*}
\end{theorem}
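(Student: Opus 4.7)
The plan is to obtain this upper bound as a direct corollary of three facts already available in the paper: Hwang's known bound $w_{min}(m, n, LeakyReLU) \leq \max(2m+1, n)$ (recorded in Table \ref{table_summary_results}), the equivalence $N_{m,n,k}^{LeakyReLU_\beta} \sim N_{m,n,k}^{LeakyReLU}$ from Theorem \ref{thm_Leaky_equivalent}, and the equivalence $N_{m,n,k}^{ELU} \sim N_{m,n,k}^{LeakyReLU} \sim N_{m,n,k}^{SELU}$ from Theorem \ref{thm_variant_ReLU_sim}. All the geometric work (the bending-line construction that lets one activation simulate another with width $1$) has been deferred to the equivalence theorems, so what remains here is just assembly.

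First I would set $k := \max(2m+1, n)$ and apply Hwang's theorem to obtain
\begin{equation*}
C(\mathbb{R}^m, \mathbb{R}^n) \prec N_{m, n, k}^{LeakyReLU}.
\end{equation*}
Next, for each target activation $\sigma \in \{LeakyReLU_\beta, ELU, SELU\}$, I would invoke the relevant equivalence theorem to conclude
\begin{equation*}
N_{m, n, k}^{LeakyReLU} \prec N_{m, n, k}^{\sigma}.
\end{equation*}
Finally I would chain the two compact-approximation relations. This uses transitivity of $\prec$, which is a routine $\varepsilon/2$ argument: given $f \in C(K, \mathbb{R}^n)$ and $\varepsilon > 0$, first choose a LeakyReLU network $g \in N_{m,n,k}^{LeakyReLU}$ with $\sup_{x \in K} \|f(x) - g(x)\|_\infty < \varepsilon/2$; then, applying the second $\prec$ on the same compact $K$, choose $h \in N_{m,n,k}^\sigma$ with $\sup_{x \in K} \|g(x) - h(x)\|_\infty < \varepsilon/2$; the triangle inequality yields $\sup_{x \in K} \|f(x) - h(x)\|_\infty < \varepsilon$. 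Hence $C(\mathbb{R}^m, \mathbb{R}^n) \prec N_{m,n,k}^\sigma$, which is exactly $w_{min}(m, n, \sigma) \leq k = \max(2m+1, n)$.

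The potentially delicate point, and what I would expect to be the main obstacle if one tried to prove the theorem from scratch, is that approximating one activation by another changes the function computed by every layer, so a naive layer-by-layer replacement requires uniform control on the image of $K$ under every intermediate composition. However, since Theorems \ref{thm_Leaky_equivalent} and \ref{thm_variant_ReLU_sim} already state equivalence at the level of the whole network class $N_{m,n,k}^{\sigma}$ on arbitrary compact input sets, this burden is already discharged there and nothing new needs to be verified at this step. The proof of Theorem \ref{thm_variant_LR_width} should therefore be no more than a short paragraph of bookkeeping.
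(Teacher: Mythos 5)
Your proposal is correct and follows essentially the same route as the paper: Hwang's bound for LeakyReLU combined with the equivalences in Theorem \ref{thm_Leaky_equivalent} and Theorem \ref{thm_variant_ReLU_sim}, with the transitivity of $\prec$ (your explicit $\varepsilon/2$ argument) being exactly what lets the paper transfer the bound to $LeakyReLU_\beta$, $ELU$, and $SELU$. Nothing further is needed.
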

\begin{proof}
	By Theorem 20 in (\cite{Hwang}), we have $w_{min}(m, n, LeakyReLU)\leq \max(2m+1, n)$. By Theorem \ref{thm_Leaky_equivalent}, we have $w_{min}(m, n, LeakyReLU_\beta)=w_{min}(m, n, LeakyReLU)\leq \max(2m+1, n)$. By Theorem \ref{thm_variant_ReLU_sim}, it can be concluded that $w_{min}(m, n, ELU)=w_{min}(m, n, SELU)=w_{min}(m, n, LeakyReLU)\leq \max(2m+1, n)$.
\end{proof}
Based on Theorem \ref{thm_m<n<=2m}, we can determine when the upper bound in Theorem \ref{thm_variant_LR_width} is attained.
\begin{theorem}[Conditions for Reaching the Upper Bounds]\label{thm_LRwidth}
	For the activation function $\sigma=LeakyReLU$, $LeakyReLU_\beta \left(\beta\in(0, 1)\cup (1, +\infty)\right)$, $ELU$, $SELU$, we have
	\begin{align*}
		w_{min}(m, 2m, \sigma)=2m+1.
	\end{align*}
\end{theorem}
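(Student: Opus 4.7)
The plan is to sandwich $w_{min}(m, 2m, \sigma)$ between matching upper and lower bounds, both of which are already supplied by previous theorems in the paper, so no new construction is needed. I expect the argument to be short and essentially a specialization of two existing results.

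First I would derive the upper bound. Substituting $n = 2m$ into Theorem \ref{thm_variant_LR_width} gives
\begin{align*}
w_{min}(m, 2m, \sigma) \;\leq\; \max(2m+1,\, 2m) \;=\; 2m+1
\end{align*}
for each $\sigma \in \{LeakyReLU,\ LeakyReLU_\beta,\ ELU,\ SELU\}$, with $\beta \in (0,1)\cup(1,+\infty)$. This covers the $\leq$ direction in one line.

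Next I would establish the matching lower bound via Theorem \ref{thm_injectivewidth}. The prerequisite is that each of the listed activation functions is injective (or uniformly approximable by injective functions). This is the only verification step, and it is routine: $LeakyReLU_\beta$ with $\beta > 0,\ \beta \neq 1$ is strictly monotonic on $\mathbb{R}$ (two positive slopes); ELU and SELU are strictly monotonic and continuous on $\mathbb{R}$ (strictly increasing through the origin, with strictly positive derivative except possibly at a single gluing point); and plain LeakyReLU with $\beta \in (0,1)\cup(1,+\infty)$ is handled identically. Hence Theorem \ref{thm_injectivewidth} applies, and substituting $n = 2m$ (which for $m \geq 1$ satisfies $m < 2m \leq 2m$, so $\mathbf{1}_{m<n\leq 2m} = 1$) yields
\begin{align*}
w_{min}(m, 2m, \sigma) \;\geq\; 2m + \mathbf{1}_{m < 2m \leq 2m} \;=\; 2m + 1.
\end{align*}

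Combining the two bounds gives the equality $w_{min}(m, 2m, \sigma) = 2m+1$. There is no genuine obstacle; the main content of the statement is already packed into Theorems \ref{thm_variant_LR_width} and \ref{thm_injectivewidth}, and the role of this theorem is simply to observe that, precisely at the boundary case $n = 2m$, the upper envelope $\max(2m+1,n)$ and the lower envelope $n + \mathbf{1}_{m<n\leq 2m}$ coincide at $2m+1$, pinning the minimum width down exactly. If any care is needed, it is only in confirming the injectivity of each listed activation function so that the hypothesis of Theorem \ref{thm_injectivewidth} is met.
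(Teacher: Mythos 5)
Your proposal is correct and follows essentially the same route as the paper: sandwiching $w_{min}(m,2m,\sigma)$ between the lower bound $2m+\mathbf{1}_{m<2m\leq 2m}=2m+1$ from Theorem \ref{thm_injectivewidth} and the upper bound $\max(2m+1,2m)=2m+1$ from Theorem \ref{thm_variant_LR_width} (i.e., Hwang's result). The only cosmetic difference is that you verify injectivity and apply Theorem \ref{thm_injectivewidth} directly to each listed activation, whereas the paper establishes both bounds for $LeakyReLU$ and then transfers them to $LeakyReLU_\beta$, $ELU$ and $SELU$ via the equivalence results (Theorems \ref{thm_Leaky_equivalent} and \ref{thm_variant_ReLU_sim}); both variants are valid.
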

\begin{proof}
	According to Theorem \ref{thm_injectivewidth}, we know that $w_{min}(m, 2m, LeakyReLU)\geq 2m+1$. Based on Hwang's results (\cite{Hwang}), we have $w_{min}(m, 2m, LeakyReLU)\leq 2m+1$. Therefore, we can show that $w_{min}(m, 2m, LeakyReLU)=w_{min}(m, 2m, LeakyReLU_\beta)=w_{min}(m, 2m, ELU)=w_{min}(m, 2m, SELU)=2m+1$ by Theorem \ref{thm_Leaky_equivalent} and Theorem \ref{thm_variant_ReLU_sim}.
\end{proof}
For certain activation functions, we show that the iterates of $\sigma$ can approximate $ReLU$ when they satisfy some conditions, from the geometric perspective of the image of one-dimensional input under ReLU, as stated in the following lemma.
\begin{lemma}[Approximation with Iterates of Activation Functions]\label{lem_iterate_approReLU}
	When the activation function $\sigma: \mathbb{R}\rightarrow \mathbb{R}$ satisfies
	\begin{enumerate}[(1)]
		\item $\sigma(x)=x$ holds for $x\geq 0$, and $\sigma(x)<0$ holds for $x<0$,
		\item $\forall c<0$, there exists $0\leq b<1$, such that for any $x\in (-\infty, c]$, we have $0\leq \frac{\sigma(x)}{x}\leq b<1$.
	\end{enumerate}
	Then for any fixed $x<0$, the iterated function satisfies that $\lim\limits_{n\rightarrow +\infty}\sigma^n(x)= 0$. Therefore, for any fixed $x\in \mathbb{R}$, $\lim\limits_{n\rightarrow +\infty}\sigma^n(x)=ReLU(x)$. 
\end{lemma}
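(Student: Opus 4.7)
The plan is to split on the sign of $x$ and handle the negative case by showing the iterates $x_n := \sigma^n(x)$ form a monotone bounded sequence converging to $0$.

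For $x \geq 0$, condition (1) immediately gives $\sigma(x) = x$, hence $\sigma^n(x) = x = ReLU(x)$ for every $n$, so nothing remains to prove on this side. For $x < 0$, I would first observe that condition (1) forces $\sigma(y) < 0$ whenever $y < 0$, so the orbit $\{x_n\}$ stays in $(-\infty, 0)$. Next I would use condition (2) with $c = x_n$ at each step: for every $x_n < 0$, there exists $b_n \in [0,1)$ with $0 \leq \sigma(x_n)/x_n \leq b_n$, which translates (after multiplying by the negative number $x_n$) to $b_n x_n \leq \sigma(x_n) < 0$, hence $|x_{n+1}| \leq b_n |x_n| < |x_n|$. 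Therefore $\{|x_n|\}$ is strictly decreasing and bounded below by $0$, so it converges to some limit $L \geq 0$; equivalently, $x_n \uparrow -L$ with $x_n \leq -L$ for all $n$.

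The main step is to rule out $L > 0$. Assuming for contradiction that $L > 0$, I would apply condition (2) with the single fixed threshold $c = -L$: there exists $b \in [0,1)$ such that $|\sigma(y)| \leq b|y|$ holds uniformly for all $y \leq -L$. Since $x_n \leq -L$ for every $n$, this yields $|x_{n+1}| \leq b|x_n|$ with the \emph{same} $b$. Passing to the limit $n \to \infty$ gives $L \leq bL$, and since $1 - b > 0$ this forces $L \leq 0$, contradicting $L > 0$. Hence $L = 0$, i.e., $\sigma^n(x) \to 0 = ReLU(x)$.

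Combining both cases, $\sigma^n(x) \to ReLU(x)$ pointwise for every $x \in \mathbb{R}$. The main obstacle to watch out for is the direction of the inequality when dividing or multiplying by $x < 0$ in condition (2); careful sign bookkeeping is what turns the ratio bound $\sigma(x)/x \leq b$ into the contraction estimate $|\sigma(x)| \leq b|x|$. The second subtle point is that the $b$ furnished by condition (2) depends on the threshold $c$, so the contraction cannot be iterated with a single constant over the whole sequence; the argument above sidesteps this by using the orbit's monotonicity to produce one uniform $b$ valid from some point onward (indeed, for all $n$, once we know $x_n \leq -L$).
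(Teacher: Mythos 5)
Your proof is correct and takes essentially the same route as the paper's: both establish monotone convergence of the negative orbit and then rule out a nonzero limit by invoking condition (2) at the fixed threshold determined by that hypothetical limit, which yields a uniform contraction factor $b<1$. The only cosmetic difference is that the paper reaches the contradiction via the geometric bound $\sigma^n(x)\geq b^n x\to 0$, whereas you pass to the limit in $|x_{n+1}|\leq b|x_n|$ to get $L\leq bL$.
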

\begin{proof}
	For any fixed $x<0$, we have $0> \sigma^n(x)=\sigma(\sigma^{n-1}(x))>\sigma^{n-1}(x)$, which implies the existence of the limit $a=\lim\limits_{n\rightarrow +\infty}\sigma^n(x)\leq 0$. Suppose, for the sake of contradiction, that $a=\lim\limits_{n\rightarrow +\infty}\sigma^n(x) <0$, then $\sigma^n(x)\leq a<0$ holds for all $n\in \mathbb{N}_+$. Since there exists $0\leq b<1$ such that $\sigma^n(x)\geq b\sigma^{n-1}(x)$ holds for all $n\in \mathbb{N}_+$, it follows that $\sigma^n(x)\geq b^n x$. Therefore, there exists $N\in \mathbb{N}_+$ such that $a<\sigma^n(x)<0$ holds for all $n>N$, which leads to a contradiction. Consequently, for any fixed $x\in\mathbb{R}$, we have $\lim\limits_{n\rightarrow +\infty}\sigma^n(x)= ReLU(x)$.
\end{proof}
In fact, there are a lot of activation funcions satisfy the conditions in Lemma \ref{lem_iterate_approReLU}, such as several variants of ReLU: $LeakyReLU_\beta\enspace(\beta\in (0, 1))$, $ELU_\beta\enspace(\beta\in(0, 1))$, $SELU_{(1, \beta)}\enspace(\beta\in(0, 1))$ (equivalent to ELU up to a multiplicative factor), etc. By Lemma \ref{lem_iterate_approReLU} and Lemma 23 in (\cite{Kim}), we establish the approximation relation for variants of ReLU.
\begin{theorem}[Approximation Relation for Varaints of ReLU]\label{thm_variant_ReLU_appro}
	For the activation function $\sigma_1=Softplus$, $CELU$, and $\sigma_2=LeakyReLU$, $ELU$, $SELU$, we have
	\begin{align*}
		N_{m, n, k}^{ReLU} \prec N_{m, n, k}^{\sigma_1} \prec N_{m, n, k}^{\sigma_2}.
	\end{align*}
\end{theorem}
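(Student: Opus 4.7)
I will split the target into two compact approximation claims and handle each by reducing to a single-variable activation-replacement argument. The unifying technical device is that whenever a scalar activation $\tau$ can be uniformly approximated on a compact interval by a single-variable subnetwork $h$ using activation $\eta$ (with width $1$), the substitution $\tau\mapsto h$ applied componentwise via diagonal weight matrices preserves the overall width bound $k$ and only inflates the depth. Consequently, everything reduces to producing these scalar approximations and controlling their propagation through the original network.

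For the first chain $N^{ReLU}_{m,n,k}\prec N^{\sigma_1}_{m,n,k}$, I would invoke the parameter limits $\tfrac{1}{\beta}\operatorname{Softplus}(\beta x)\to \operatorname{ReLU}(x)$ (as $\beta\to\infty$) and $\operatorname{CELU}_\alpha(x)\to \operatorname{ReLU}(x)$ (as $\alpha\to 0^+$), both uniform on compact intervals. In any ReLU network $\Phi=T_{W_L,b_L}\circ \operatorname{ReLU}\circ\cdots\circ \operatorname{ReLU}\circ T_{W_0,b_0}$ of width $\leq k$, I rescale $(W_i,b_i)$ feeding the $i$-th activation by $\beta$ and absorb the compensating $1/\beta$ into $W_{i+1}$, so that the substitution requires no extra neurons. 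A telescoping bound, using Lipschitz continuity of the affine maps and of $\sigma_1$ on the relevant bounded range, converts the per-activation uniform error into an overall uniform error $\varepsilon$ on any prescribed compact input set; the CELU case is identical with $\alpha$ playing the role of $1/\beta$.

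For the second chain $N^{\sigma_1}_{m,n,k}\prec N^{\sigma_2}_{m,n,k}$, I would route through $N^{ReLU}_{m,n,k}$. Lemma \ref{lem_iterate_approReLU} gives, for $\sigma_2\in\{LeakyReLU,\,ELU,\,SELU\}$ with an appropriate parameter, iterates $\sigma_2^n\to \operatorname{ReLU}$ pointwise; monotonicity of the iterates on $\{x<0\}$ together with continuity upgrades this to uniform convergence on compact $1$D sets. Substituting a deep enough stack of $\sigma_2$'s componentwise in place of each ReLU gives $N^{ReLU}_{m,n,k}\prec N^{\sigma_2}_{m,n,k}$, and Theorem \ref{thm_variant_ReLU_sim} extends the conclusion to all three candidates for $\sigma_2$. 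For the other half, I would invoke Lemma~23 of \cite{Kim}, which produces width-$1$ ReLU approximations to the scalar Softplus and CELU on any compact interval, yielding $N^{\sigma_1}_{m,n,k}\prec N^{ReLU}_{m,n,k}$. Chaining these inclusions completes the proof.

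The main obstacle I anticipate is propagating a scalar per-activation approximation through many compositions of affine maps without violating the compactness on which the approximation was certified. A uniform bound $|\tilde\tau(x)-\tau(x)|<\delta$ on a fixed compact set can fail once the expansion of intermediate layers pushes the argument outside that set. To handle this I would first trace the image of the input compact set through the original $\sigma_1$- or ReLU-network to obtain an enlarged compact set at every layer, then choose the scaling parameter $\beta$ (chain (I)), the iterate count $n$ for $\sigma_2^n$ (first half of chain (II)), or the depth of the width-$1$ ReLU approximant for $\sigma_1$ (second half of chain (II)) small enough so that the error, amplified by the known operator norms of the $W_i$'s, telescopes below $\varepsilon$. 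Once this bookkeeping is carried out carefully the componentwise substitution preserves width $k$ and the required compact approximations follow.
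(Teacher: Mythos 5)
Your first displayed relation, $N_{m,n,k}^{ReLU}\prec N_{m,n,k}^{\sigma_1}$, is handled correctly and close to the paper: for Softplus the paper uses exactly the rescaling $T_{\frac{1}{N}I,0}\circ Softplus_\beta\circ T_{NI,0}$, which is the same device as your $\tfrac{1}{\beta}Softplus(\beta x)\to ReLU(x)$ limit with the scalings absorbed into the adjacent affine layers, and for CELU your direct bound $\sup_x\vert CELU_\alpha(x)-ReLU(x)\vert\le\alpha$ as $\alpha\to 0^+$ is even simpler than the paper's route through the iterate Lemma \ref{lem_iterate_approReLU}; the telescoping/compact-range bookkeeping you describe is the standard and adequate way to finish, and your Dini-type upgrade of Lemma \ref{lem_iterate_approReLU} to uniform convergence for $N_{m,n,k}^{ReLU}\prec N_{m,n,k}^{\sigma_2}$ is also fine.

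The genuine gap is in the second displayed relation, $N_{m,n,k}^{\sigma_1}\prec N_{m,n,k}^{\sigma_2}$, which you factor through ReLU networks of the \emph{same} width. The step $N_{m,n,k}^{\sigma_1}\prec N_{m,n,k}^{ReLU}$ fails: your width-preserving substitution needs a width-$1$ ReLU approximation of the scalar maps $Softplus_\beta$ and $CELU_\beta$ on a compact interval, and no such approximation exists. A width-$1$ ReLU network restricted to an interval is monotone and, by induction on depth (each ReLU only collapses an end segment of the current image interval), is constant on an initial segment, affine on a middle segment, and constant on a terminal segment; such ``clipped affine'' functions cannot approximate the strictly convex Softplus (or CELU on $x<0$) to arbitrary accuracy, the error on $[-2,2]$ being bounded below by a positive constant. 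In particular $N_{1,1,1}^{Softplus}\nprec N_{1,1,1}^{ReLU}$, so the intermediate inclusion you need is false already for $k=1$, and Lemma 23 of \cite{Kim} cannot supply width-$1$ ReLU approximants of these activations. The paper avoids ReLU in this direction precisely because ReLU is non-injective: it routes through LeakyReLU via Corollary 11 of \cite{Hwang} (any width-$k$ network with a continuous increasing activation is approximable by width-$k$ LeakyReLU networks, which works because width-$1$ LeakyReLU networks can bend a line repeatedly without losing information) and then transfers to ELU and SELU by Theorem \ref{thm_variant_ReLU_sim}. Replacing your detour through ReLU by this LeakyReLU route repairs the proof.
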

\begin{proof}
	The proof is provided in Appendix \ref{appendix_variant_ReLU_appro}.
\end{proof}
Subsequently, the approximation relation for variants of ReLU immediately leads to the following result of minimum width.
\begin{theorem}[Minimum Width for Variants of ReLU]\label{thm_variant_ReLU_width}
	For the activation function $\sigma=LeakyReLU$, $ELU$,  $SELU$, $Softplus$, $CELU$, we have
	\begin{align*}
		m+1\leq w_{min}(m, n, \sigma)\leq m+n.
	\end{align*}
\end{theorem}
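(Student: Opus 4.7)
The plan is to derive the two inequalities separately, combining classical results (Hanin for the upper bound, Johnson for the lower bound) with the approximation chain already established in Theorem \ref{thm_variant_ReLU_appro}. Essentially no new machinery is required; the proof is an assembly of pieces already in place.

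For the upper bound $w_{min}(m,n,\sigma)\leq m+n$, I would start from Hanin's theorem, which says $C(\mathbb{R}^m,\mathbb{R}^n)\prec N^{ReLU}_{m,n,m+n}$. Theorem \ref{thm_variant_ReLU_appro} gives the chain
\begin{align*}
N^{ReLU}_{m,n,m+n}\;\prec\;N^{\sigma_1}_{m,n,m+n}\;\prec\;N^{\sigma_2}_{m,n,m+n},
\end{align*}
with $\sigma_1\in\{Softplus,\,CELU\}$ and $\sigma_2\in\{LeakyReLU,\,ELU,\,SELU\}$. I would first verify that $\prec$ is transitive: given $\mathcal{C}\prec\mathcal{B}\prec\mathcal{A}$, for any $f\in\mathcal{C}$, compact $K$, and $\varepsilon>0$, pick $g\in\mathcal{B}$ with $\sup_K\|f-g\|_\infty<\varepsilon/2$, then $h\in\mathcal{A}$ with $\sup_K\|g-h\|_\infty<\varepsilon/2$, and conclude by the triangle inequality. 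Applying transitivity to the chain above gives $C(\mathbb{R}^m,\mathbb{R}^n)\prec N^{\sigma}_{m,n,m+n}$ for every $\sigma$ in the list, which by Definition \ref{defn_minimumwidth} means $w_{min}(m,n,\sigma)\leq m+n$.

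For the lower bound $w_{min}(m,n,\sigma)\geq m+1$, I would invoke Johnson's result: networks with injective continuous activation functions of width at most $m$ cannot uniformly approximate an arbitrary continuous scalar function on a compact $K\subset\mathbb{R}^m$. Each of LeakyReLU, ELU, SELU, Softplus, CELU is strictly monotonic (hence injective) and continuous on $\mathbb{R}$, so Johnson's hypothesis applies to each. To pass from output dimension $1$ to general $n$, I would use a projection argument: suppose for contradiction that $w_{min}(m,n,\sigma)=w\leq m$. For any $g\in C(K,\mathbb{R})$, set $\tilde g(x)=(g(x),0,\ldots,0)\in C(K,\mathbb{R}^n)$; by assumption there is a width-$w$ network $\Phi\in N^{\sigma}_{m,n,w}$ approximating $\tilde g$ uniformly on $K$ within $\varepsilon$. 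Post-composing $\Phi$ with the first-coordinate projection $\pi_1\in\mathrm{Aff}_{n,1}$ absorbs into the final affine layer of $\Phi$ without changing any hidden-layer dimension, yielding a width-$w$ network in $N^{\sigma}_{m,1,w}$ approximating $g$ within $\varepsilon$. This contradicts Johnson's bound $w_{min}(m,1,\sigma)\geq m+1$.

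I do not expect a significant obstacle: the two nontrivial approximation facts (Hanin's upper bound for ReLU and the approximation chain to the variants) are already cited or proved. The only things to be careful about are (i) transitivity of $\prec$, (ii) the harmless fact that a projection can be merged into the output affine map so the width is preserved, and (iii) confirming that each of the five activation functions is injective and continuous, which follows from strict monotonicity in each case (LeakyReLU with $\beta\neq 1$, ELU and SELU with positive slope, CELU, and the strictly increasing smooth Softplus). Combining the two bounds produces $m+1\leq w_{min}(m,n,\sigma)\leq m+n$ for each $\sigma$ in the list.
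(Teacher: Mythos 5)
Your proposal is correct and follows essentially the same route as the paper: the lower bound is obtained from Johnson's result for injective continuous activations, and the upper bound from Hanin's ReLU bound combined with the approximation chain of Theorem \ref{thm_variant_ReLU_appro} (plus transitivity of $\prec$). The paper states this in two sentences; your additional details (transitivity of $\prec$, absorbing the projection into the final affine map, and checking strict monotonicity of each activation) are just explicit verifications of steps the paper leaves implicit.
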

\begin{proof}
	The left inequality can be proved by the results in (\cite{Johnson}). The right inequality can be concluded by Theorem \ref{thm_variant_ReLU_appro} and the results in (\cite{Hanin}). 
\end{proof}
In practice, Theorem \ref{thm_variant_LR_width} and Theorem \ref{thm_variant_ReLU_width} provide guidance for determining the minimum width when we design a neural network. As an example, for the continuous functions $g_k$ ntroduced in the following proposition, we trained networks with a specified width to show that $g_k\prec N_{2, 2, 4}^{ELU}$.

\begin{proposition}[Application of the Minimum Width Properties]\label{prop_disk}
	Let $rot_k$ denotes the $k$-rotation map $rot_k$:
	\begin{align}
		rot_k(r\cos\theta,r\sin\theta) = (r\cos k\theta,r\sin k\theta) \label{eq_rotk},
	\end{align}
	then for $g_k=rot_k\vert_{B^2}$, where $B^2=\{(x, y)\big\vert x^2+y^2\leq 1\}$, $g_k\prec N_{2, 2, 4}^{ELU}$ for any $k\in \mathbb{N}^{+}$.
\end{proposition}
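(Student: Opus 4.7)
The plan is to derive this as an immediate corollary of Theorem \ref{thm_variant_ReLU_width} applied to $m=n=2$ and $\sigma = ELU$. First, I would verify that $g_k \in C(B^2,\mathbb{R}^2)$. The polar-coordinate formula (\ref{eq_rotk}) is well-defined: at $r=0$ the angular ambiguity is harmless since $rot_k(0,0)=(0,0)$, while for $r>0$ the $2\pi$-periodicity of $\cos$ and $\sin$ absorbs the ambiguity in $\theta$. Identifying $B^2$ with the closed unit disk in $\mathbb{C}$, one may write $g_k(z)=|z|^{\,1-k}z^{k}$ for $z\neq 0$ and $g_k(0)=0$; since $|g_k(z)|=|z|$, continuity at the origin is immediate, and continuity elsewhere is clear from the formula.

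Given continuity and the fact that $B^2 \subset \mathbb{R}^2$ is compact with $m = d_x = 2$ and $n = d_y = 2$, I would then invoke Theorem \ref{thm_variant_ReLU_width} to obtain
\begin{equation*}
w_{min}(2,2,ELU) \le m+n = 4.
\end{equation*}
By Definition \ref{defn_minimumwidth}, this inequality means precisely that $C(\mathbb{R}^2,\mathbb{R}^2) \prec N_{2,2,4}^{ELU}$, i.e.\ for every compact $K\subset\mathbb{R}^2$, every continuous $f\in C(K,\mathbb{R}^2)$, and every $\varepsilon>0$, there exists an ELU network of width at most $4$ that uniformly $\varepsilon$-approximates $f$ on $K$. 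Specialising to $K=B^2$ and $f=g_k$ yields $g_k \prec N_{2,2,4}^{ELU}$ for every $k\in\mathbb{N}_+$, which is the claim.

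There is essentially no obstacle: the proposition is stated as an application illustrating the practical relevance of the width bound $m+n$. The only non-cosmetic step is checking that the $k$-rotation map is continuous (rather than merely defined) at the origin, which the modulus identity $|g_k(z)|=|z|$ settles at once. The rest is a direct unpacking of Definition \ref{defn_minimumwidth} in light of Theorem \ref{thm_variant_ReLU_width}; in particular, the universal nature of the bound means that the argument is uniform in $k$, so the single inequality $w_{min}(2,2,ELU)\le 4$ handles the entire family $\{g_k\}_{k\in\mathbb{N}_+}$ simultaneously.
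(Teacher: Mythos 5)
Your proposal is correct and follows essentially the same route as the paper: the paper's proof also consists of applying Theorem \ref{thm_variant_ReLU_width} with $m=n=2$ to get $w_{min}(2,2,ELU)\leq 4$ and then reading off $g_k\prec N_{2,2,4}^{ELU}$ from Definition \ref{defn_minimumwidth}. Your explicit check that $g_k$ is continuous at the origin is a small extra detail the paper leaves implicit, but it does not change the argument.
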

\begin{proof}
	$w_{min}(2, 2, ELU)\leq 4$ can be directly concluded by Theorem \ref{thm_variant_ReLU_width}. To further validate this minimum width result, we conducted numerical experiments, as shown in Subsection \ref{sec_4.2}.
\end{proof}
\section{Numerical Experiments}\label{sec_4}
Subsection \ref{sec_4.1} introduces the implementation details for training a neural network. The experiments in Subsection \ref{sec_4.2} aim to determine whether a given mapping can be approximated solely by increasing the network depth $d\in \mathbb{N}$ while keeping the network width $w\in \mathbb{N}$ fixed. For the convenience of network training, unless otherwise specified, the ELU activation function considered in this section is fixed to $ELU_1$.
\subsection{Implementation details}\label{sec_4.1}
In this subsection, we present the details of training the neural networks, including the datasets, loss functions, and the criteria for determining the termination and success of the experiments.
\subsubsection{Constructions of datasets and loss functions}
A dataset is defined as the collection of input-target pairs by convention (\cite{petersen_DL}), i.e.,
\begin{align*}
	\mathcal{D}=\{(x_i, y_i)\vert x_i\in \mathcal{A}, y_i\in \mathcal{B}, i\in\mathbb{N}_+\}.
\end{align*}
Here, $\mathcal{A}$ and $\mathcal{B}$ denote the input set and the target set, respectively. For DNNs, the output set $\mathcal{C}$ of a NN $\Phi_\sigma$ is defined by $\mathcal{C}=\{\Phi_\sigma(x)\vert x\in \mathcal{A}\}$.

The dataset $\mathcal{D}$ is partitioned into two subsets: the training dataset $\mathcal{D}_T$ and the validation dataset $\mathcal{D}_V$. The training loss $L$ is defined as the mean squared error between the network output and the target over the training dataset, while the validation loss $\widetilde{L}$ is the mean squared error on the validation dataset, which evaluates the model's performance on unseen data. For any function $f_\theta:\mathbb{R}^m \rightarrow \mathbb{R}^n$, the loss functions are given by
\begin{align*}
	L(f_\theta)=\frac{1}{N_1\times n} \sum_{(x_i, y_i)\in \mathcal{D}_T}\vert\vert y_i - f_\theta(x_i) \vert\vert_2^2,\\
	\widetilde{L}(f_\theta)=\frac{1}{N_2\times n} \sum_{(x_j, y_j)\in \mathcal{D}_V}\vert\vert y_j - f_\theta(x_j) \vert\vert_2^2,
\end{align*}
where $\theta$ represents the parameters controlling $f_\theta$ (weight matrices and bias vectors), and $N_1, N_2$ are the batch sizes of $\mathcal{D}_T$ and $\mathcal{D}_V$, respectively. The training loss $L$ and the validation loss $\widetilde{L}$ are key metrics used to monitor the model's performance and generalization ability. 
\subsubsection{Training process and criterion of successful training}
In the subsequent experiments, we restrict our attention to network models as defined in Definition \ref{defn_NN}. These models consist solely of fully connected neurons and exclude additional architectural components such as convolutional layers (\cite{CNN_dropout}) and skip connections (residual (\cite{Resnet})). Moreover, all hidden layers are designed to contain the same number of neurons, ensuring a uniform width across the network.

During training, networks attempt to minimize the training loss $L$ by adjusting their weights matrices $W_i$ and the bias vectors $b_j$ using the Adam optimizer (\cite{Adam}). The learning rate was initialized to $10^{-4}$ and decayed by $5\times 10^{-6}$ every 2 million steps. Training is terminated after at most 20 million steps. The success criterion is formally defined such that both $L$ and $\widetilde{L}$ must fall below the error threshold $10^{-4}$, thereby reducing the risk of overfitting.
\subsection{Experiments on the Approximation of $k$-rotation Map $rot_k$}\label{sec_4.2}
For the $k$-rotation map defined in Equation (\ref{eq_rotk}), we attempted to verify that $rot_k\vert_{B^2}\prec N_{2, 2, 4}^{ELU} (k\in \{2, 3, 4, 5, 6\})$, by training networks to approximate the map $rot_k\vert_{B^2}$. In addition, we conducted experiments to show that $rot_2\vert_{B^2}\nprec N_{2, 2, 3}^{ELU_1}$ by constraining the network width to 3.
\subsubsection{DISK Dataset}
The grid-based, discretely sampled 2-dimensional unit disk $D_1^2$ with its image under $k$-rotation map $rot_k(D_1^2)$ are combined to form the training dataset $\mathcal{D}_T$. For each $k\in \mathbb{N}$, they are defined as
\begin{align*}
	D_1^2&=\{(x,y)\vert x^2+y^2\leq 1, x=-1+0.02i, y=-1+0.02j, i, j\in \{0, 1, \cdots, 100\}\}, \\
	\mathcal{D}_T^{disk}&=\{(x, rot_k(x))\vert x\in D_1^2 \},
\end{align*}
where $rot_k$ is given by Equation (\ref{eq_rotk}). A distinct validation dataset $\mathcal{D}_V$ is constructed for each $k\in \mathbb{N}$, ensuring no overlap with the training points, i.e., 
\begin{align*}
	D_2^2&=\{(x,y)\vert x^2+y^2\leq 1, x=-1+0.05i, y=-1+0.05j, i, j\in \{0, 1, \cdots, 40\}\}\backslash D_1^2,\\
	\mathcal{D}_V^{disk}&=\{(x, rot_k(x))\vert x\in D_2^2 \}.
\end{align*}
Subsequently, the dataset denoted by $DISK$, is given by $DISK = \mathcal{D}_T^{disk} \cup \mathcal{D}_V^{disk}.$
\subsubsection{Experimental Results}
To investigate the approximation ability of ELU NNs, two experiments were conducted. 

In the first experiment, the network width was fixed at 4. The results indicate that a ELU NN with $ELU_1$ activation function, width 4 and depth 4 could successfully approximate the 2-rotation map $rot_2:\mathbb{R}^2\rightarrow \mathbb{R}^2$. The target and output sets of this experiment are depicted in Figure \ref{fig_disk_w4}, and the parameters for this ELU NN are provided in Appendix \ref{appendix_diskNN}. To investigate the relationship between the minimum depth $d_{min}$ and the $k$-rotation map $rot_k:\mathbb{R}^2\rightarrow \mathbb{R}^2$, the width was fixed at $w=4$, and the depth $d$ was gradually increased until the output set of the network sucessfully approximated the target set for specfic $k\in\mathbb{N}$, then we got the minimum depth $d_{min}$. The detailed results are presented in Table \ref{table_disk_w4}.
\begin{figure}[htbp]
	\centering
	\subfigure[Target set]{
		\includegraphics[width=0.4\linewidth, height=0.35\linewidth]{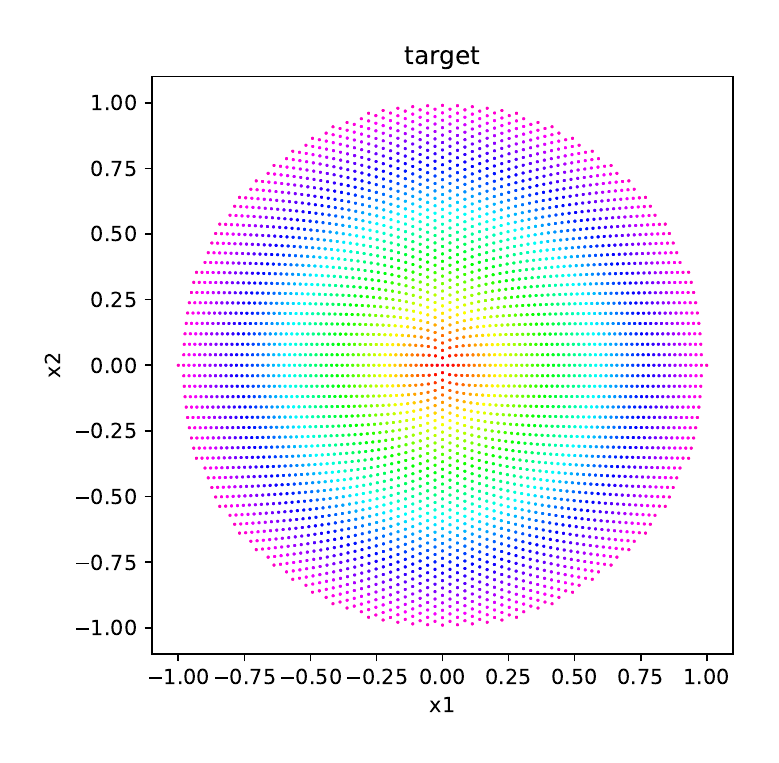}
	}
	\subfigure[Output set]{
		\includegraphics[width=0.4\linewidth, height=0.35\linewidth]{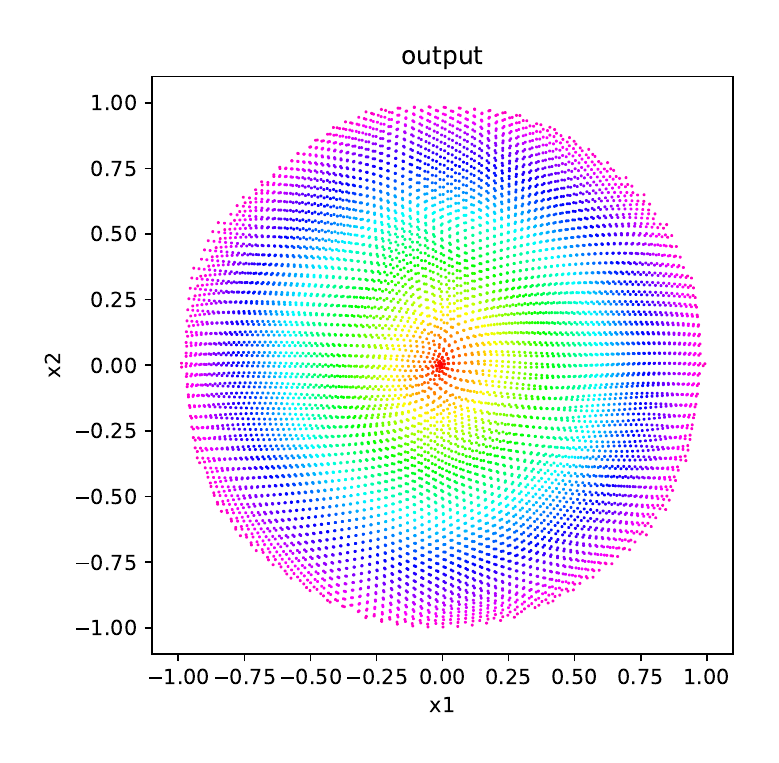}
	}
	\caption{Comparison between the target and output sets when using an ELU network with width 4 and depth 4 to approximate the 2-rotation map $rot_2$}
	\label{fig_disk_w4}
\end{figure} 
\begin{table}[htbp]
	\centering
	\caption{A brief description of the relation between rotation parameter $k$ and minimum depth $d_{min}$. All the networks used the $ELU_1$ activation function, and width $w$ was fixed to 4. $L$ and $\widetilde{L}$ represent the training loss and validation loss, respectively.}
	\label{table_disk_w4}
	\begin{tabular}{c | c |c c c c}
		\hline
		\multirow{2}{*}{$\sigma$} & \multirow{2}{*}{$w$} & \multirow{2}{*}{$k$} & \multirow{2}{*}{$d_{min}$} & \multirow{2}{*}{$L$} & \multirow{2}{*}{$\widetilde{L}$}\\
		& & & & & \\
		\hline
		\multirow{5}{*}{$ELU_1$} & \multirow{5}{*}{4} & 2 & 4 & $6.03\times 10^{-5}$ & $5.99\times 10^{-5}$\\
		\cline{3-6}
		& & 3 & 6 & $7.98\times 10^{-5}$ & $7.94\times 10^{-5}$\\
		\cline{3-6}
		& & 4 & 8 & $8.08\times 10^{-5}$ & $8.02\times 10^{-5}$\\
		\cline{3-6}
		& & 5 & 10 & $9.48\times 10^{-5}$ & $9.72\times 10^{-5}$\\
		\cline{3-6}
		& & 6 & 12 & $9.20\times 10^{-5}$ & $8.46\times 10^{-5}$\\
		\hline
	\end{tabular}
\end{table}

To show the impact of network width, we conducted a comparative experiment. The width was fixed at 3 in the second experiment, while the depth $d$ was varied from 1 to 30 to approximate the 2-rotation map $rot_2:\mathbb{R}^2\rightarrow \mathbb{R}^2$. Figure \ref{fig_disk_w3_loss} illustrates the relationship between the training loss $L$ and the depth $d$, with the dashed line denoting the error threshold. It is shown that only when the depth reached 12 did the training loss $L$ fell below the error threshold of $10^{-4}$. However, the corresponding validation loss remained above the error threshold, so we needed to further examine this case. Figure \ref{fig_disk_w3} shows the target and output sets of this experiment, for the case that $w=3$ and $d=12$. The comparison between the target and output sets reveals that the output set exhibited a hole, as shown in the black box in Figure \ref{fig_disk_w3}, whereas the target set did not not have such defects. With the analysis of loss functions and the output of the network, it is indicated that this network suffered from overfitting. In conclusion, our results indicate that the networks with width 3 and $ELU_1$ activation function can not accurately approximate the given mapping, regardless of depth. This experiment showed that $rot_2\nprec N_{2, 2, 3}^{ELU_1}$, and presented the essential role of adequate network width in function approximation.
\begin{figure}[htbp]
	\centering
	\subfigure[Training loss $L$]{
		\includegraphics[width=0.4\linewidth, height=0.4\linewidth]{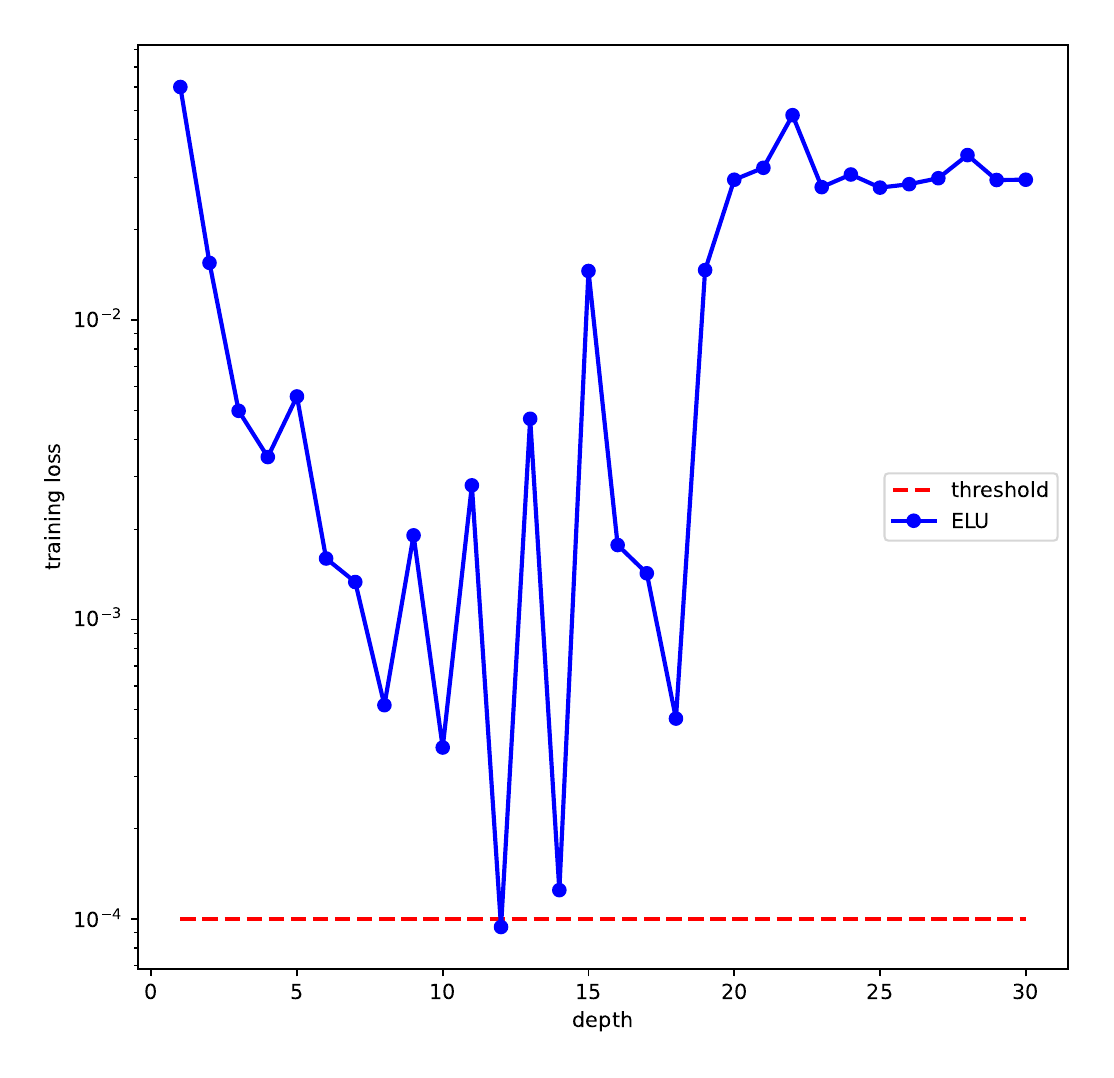}
	}
	\subfigure[Validation loss $\widetilde{L}$]{
		\includegraphics[width=0.4\linewidth, height=0.4\linewidth]{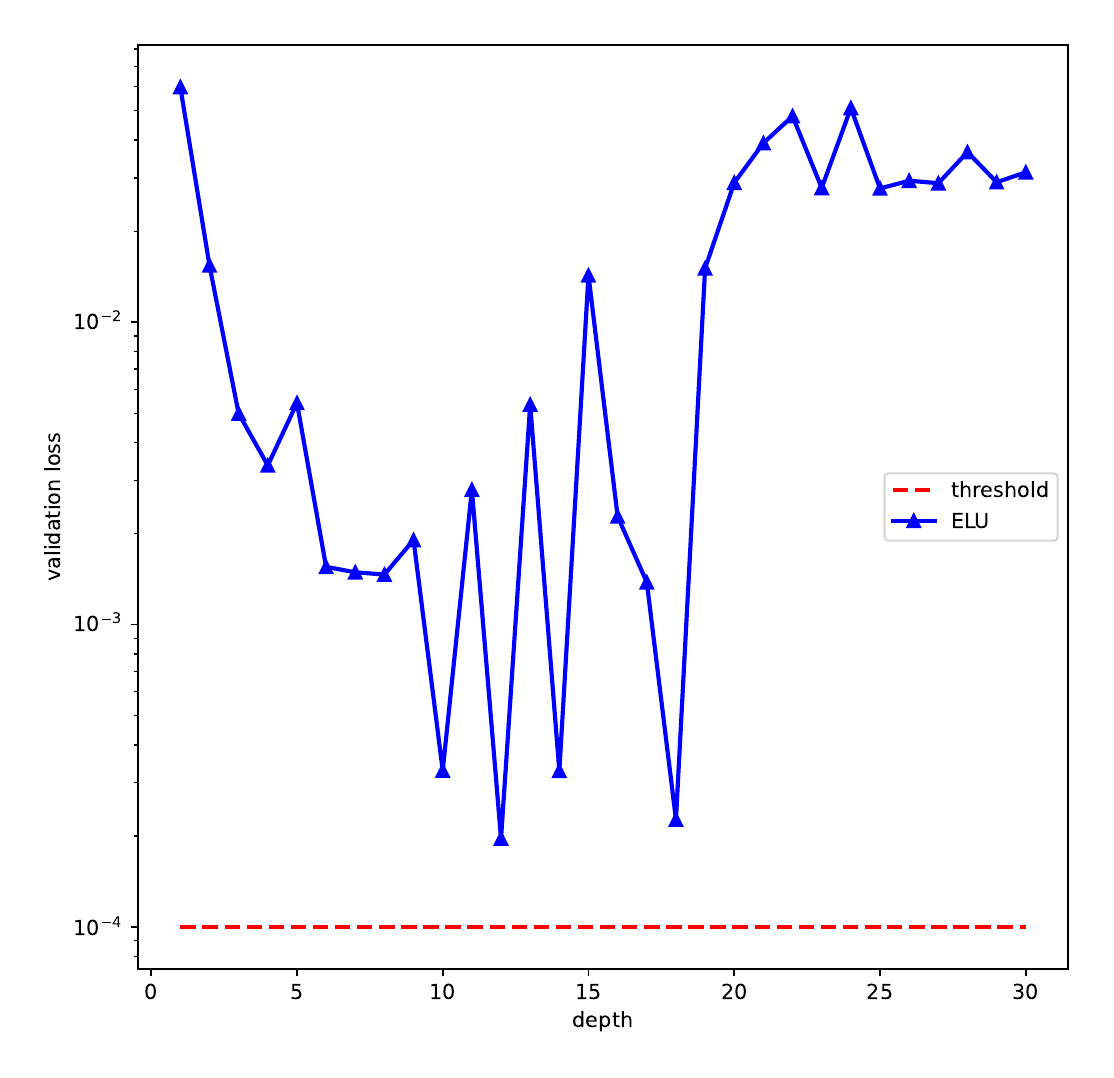}
	}
	\caption{The digram of the loss functions for the experiment on the DISK dataset.}
	\label{fig_disk_w3_loss}
\end{figure} 
\begin{figure}[htbp]
	\centering
	\subfigure[Target set]{
		\includegraphics[width=0.35\linewidth, height=0.35\linewidth]{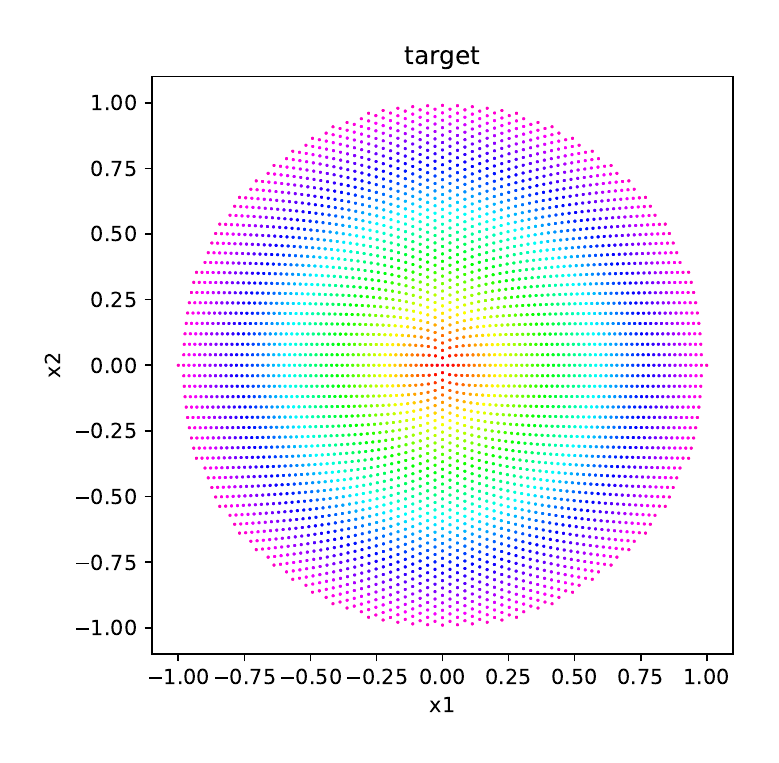}
	}
	\subfigure[Output set]{
		\includegraphics[width=0.51\linewidth, height=0.35\linewidth]{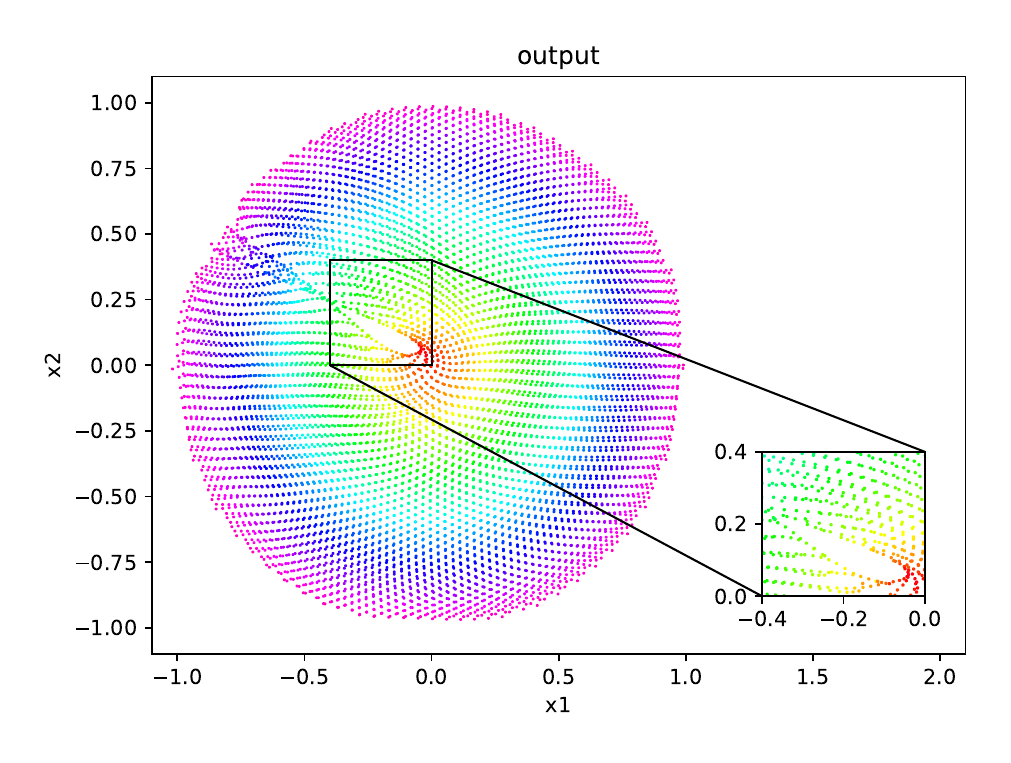}
	}	
	\caption{The case that the network width and depth were 3 and 12, respectively. The black box shows the hole in the output.}
	\label{fig_disk_w3}
\end{figure} 

\section{Discussion}\label{sec_5}
In light of the above discussions, an elegant fact about the DNNs is that the upper and lower bounds of the minimum width $w_{min}$ depend on multiple factors, including the choice of activation functions, the geometric constraints (e.g., self-intersection) and the input-output dimensions. The progress made so far bring more hope to deeper investigation of deep narrow networks. In particular, we are looking forward to further refine the minimum width $w_{min}$ for a broader class of activation functions under specific dimensional conditions. In the future, we will explore how narrow networks can be trained more effectively from the perspective of experimental feasibility, for instance, by adjusting network initialization (\cite{initialization_narrowNN}).

Moreover, the emerging field of geometric and topological approaches for neural networks (e.g., persistent homology (\cite{PH_NN}), discrete curvature (\cite{Ricci_cur})) opens up additional exciting new directions. For instance, in this paper, the Poincar\'e-Miranda Theorem serves as a key analytical tool, enabling us to quantitatively determine when self-intersections occur and thereby violate the geometric properties of networks. For future work, we anticipate that such techniques will provide deeper theoretical insights into the essential characteristics of DNNs.

\acks{
	Funding in direct support of this work: GPUs donated by the HPC Platform of Huazhong University of Science and Technology., scholarship by Huazhong University of Science and Technology (Wuhan, 430074, China) and the Hubei Key Laboratory of Engineering Modeling and Scientific Computing, Huazhong University of Science and Technology (Wuhan 430074, China)}

\newpage

\appendix
	\section{Supplements}
\subsection{Activation Function Formulae}\label{appendix_activationfunc}
The common activation functions are defined as follows (\cite{petersen_DL}; \cite{Kim}).
\begin{itemize}
	\item ReLU:	
	$$
	\scriptsize{ReLU(x)=\left\{
		\begin{aligned}
			x,&\enspace x\geq0,\\
			0,&\enspace x<0. 
		\end{aligned}
		\right.}
	$$
	\item LeakyReLU: for $\beta\in (0, +\infty)$,
	$$
	\scriptsize{LeakyReLU_\beta(x)=\left\{
		\begin{aligned}
			x,&\enspace x\geq0,\\
			\beta x,&\enspace x<0. 
		\end{aligned}
		\right.}
	$$
	\item Exponential Linear Unit (ELU): for $\beta\in(0, +\infty)$,
	$$
	\scriptsize{ELU_{\beta}(x)=\left\{
		\begin{aligned}
			x,&\enspace x\geq0,\\
			\beta(e^x-1),&\enspace x<0.
		\end{aligned}
		\right.}
	$$
	\item Continuously differentiable Exponential Linear Unit (CELU): for $\beta\in (0, +\infty)$,
	$$
	\scriptsize{CELU_{\beta}(x)=\left\{
		\begin{aligned}
			x,&\enspace x\geq0,\\
			\beta(e^{(\frac{x}{\beta})}-1),&\enspace x<0.
		\end{aligned}
		\right.}
	$$
	\item Scaled Exponential Linear Unit (SELU): for $\lambda\in (0, +\infty)$ and $\beta\in (0, +\infty)$,
	$$
	\scriptsize{SELU_{(\lambda, \beta)}(x)=
		\lambda\times\left\{
		\begin{aligned}
			x,&\enspace x\geq0,\\
			\beta(e^{x}-1),&\enspace x<0.
		\end{aligned}
		\right.}
	$$
	\item Softplus: for $\beta\in (0, +\infty)$,
	$$
	\scriptsize{Softplus_{\beta}(x)=\frac{1}{\beta}\ln(1+e^{\beta x}).}
	$$
	\item HardTanh:
		$$
	\scriptsize{HardTanh(x)=\left\{
		\begin{aligned}
			1,&\enspace x>1,\\
			x, &\enspace -1\leq x\leq 1,\\
			-1,&\enspace x<-1.
		\end{aligned}
		\right.}
	$$
	\item ReLU6:
	$$
	\scriptsize{ReLU6(x)=\left\{
		\begin{aligned}
			6,&\enspace x\geq6,\\
			x,&\enspace 0\leq x\leq6,\\
			0,&\enspace x<0. 
		\end{aligned}
		\right.}
	$$
\end{itemize}

\section{Proofs}

\subsection{Proof of Lemma \ref{lem_appro_indimn}}\label{appendix_appro_indimn}
Denote $D=[a_{11}, a_{12}]\times [b_{11}, b_{12}]\times \cdots\times [a_{m1}, a_{m2}]\times[b_{m1}, b_{m2}]$, $x=(x_1, x_2, \cdots, x_m)$, $y=(y_1, y_2, \cdots, y_m)$. Let $h: D\rightarrow \mathbb{R}^{2m}$, $h=(h_1, h_2, \cdots, h_{2m})$ be defined by:
$$
\begin{aligned}
	h(x_1, y_1, x_2, y_2, \cdots, x_m, y_m)= f(x) - f(y),\enspace (x, y)\in D.
\end{aligned}
$$
We can define
$$
\scriptsize{
\begin{aligned}
M= \max\Big\{
\max_{k\in\{1, 2, \cdots, m\}}
\sup_{(x, y)\in D} [g_{2k-1}(x_1, \cdots, a_{k1}, \cdots, x_m) - g_{2k-1}(y)]*
	[g_{2k-1}(x_1, \cdots, a_{k2}, \cdots, x_m)-g_{2k-1}(y)],\\
	\max_{k\in\{1, 2, \cdots, m\}}
\sup_{(x, y)\in D} [g_{2k}(x) - g_{2k}(y_1, \cdots, b_{k1}, \cdots, y_m)]*
	[g_{2k}(x)-g_{2k}(y_1, \cdots, b_{k2},\cdots, y_m)]
\Big\}.
\end{aligned}
}
$$
By the product conditions, we have $M<0$. Then we choose a sufficiently small $\varepsilon>0$ satisfies that
$$
\scriptsize{
	\begin{aligned}
	\varepsilon<\min
	\Bigg\{
	\frac{1}{2}, \min_{k\in\{1, 2,\cdots, m\}} \frac{-M}{2\sup\limits_{(x, y)\in D}\Big(
		\vert g_{2k-1}(x_1, \cdots, a_{k1}, \cdots, x_m)- g_{2k-1}(y)\vert + \vert g_{2k-1}(x_1, \cdots, a_{k2}, \cdots, x_m)-g_{2k-1}(y) \vert  \Big)+1},\\
		\min_{k\in\{1, 2,\cdots, m\}} \frac{-M}{2\sup\limits_{(x, y)\in D}\Big(
			\vert g_{2k}(x)- g_{2k}(y_1, \cdots, y_k, \cdots, y_m)\vert + \vert g_{2k}(x)-g_{2k}(y_1, \cdots, y_k, \cdots, y_m) \vert  \Big)+1}
	 \Bigg\}.
\end{aligned}
}
$$
When $\sup_{t\in [0, 1]^m}\vert\vert f(t)-g(t)\vert \vert_{\infty}<\varepsilon$, it holds that
$$\scriptsize{
	\begin{aligned}
	&h_{2k-1}(x_1, y_1, \cdots, a_{k1}, y_k, \cdots, x_m, y_m)*h_{2k-1}(x_1, y_1, \cdots, a_{k2}, y_k, \cdots, x_m, y_m)\\
&=\left[g_{2k-1}(x_1, \cdots, a_{k1}, \cdots, x_m) -g(y)\right]*
\left[g_{2k-1}(x_1, \cdots, a_{k2}, \cdots, x_m)-g(y)
\right] +\\ 
&\left[ f_{2k-1}(x_1, \cdots, a_{k1}, \cdots, x_m)-f_1(y)-g_{2k-1}(x_1, \cdots, a_{k1}, \cdots, x_m) + g_{2k-1}(y)
\right]*
\left[g_{2k-1}(x_1, \cdots, a_{k2}, \cdots, x_m)-g_{2k-1}(y)
\right]+\\ 
&\left[ f_{2k-1}(x_1, \cdots, a_{k2}, \cdots, x_m)-f_1(y)-g_{2k-1}(x_1, \cdots, a_{k2}, \cdots, x_m) + g_{2k-1}(y)
\right]*
\left[g_{2k-1}(x_1, \cdots, a_{k1}, \cdots, x_m)-g_{2k-1}(y)
\right]+\\
&\left[ f_{2k-1}(x_1, \cdots, a_{k1}, \cdots, x_m)-f_1(y)-g_{2k-1}(x_1, \cdots, a_{k1}, \cdots, x_m) + g_{2k-1}(y)
\right]*\\
&\left[ f_{2k-1}(x_1, \cdots, a_{k2}, \cdots, x_m)-f_1(y)-g_{2k-1}(x_1, \cdots, a_{k2}, \cdots, x_m) + g_{2k-1}(y)
\right]\\
&<M+2\varepsilon\vert g_{2k-1}(x_1, \cdots, a_{k1}, \cdots, x_m)-g_{2k-1}(y)\vert +
2\varepsilon\vert g_{2k-1}(x_1, \cdots, a_{k2}, \cdots, x_m)-g_{2k-1}(y)\vert+4\varepsilon^2\\
&<M+\Big(2\sup_{(x, y)\in D} \left(\vert g_{2k-1}(x_1, \cdots, a_{k1}, \cdots, a_m)-g_{2k-1}(y)\vert +\vert g_{2k-1}(x_1, \cdots, a_{k2}, \cdots, x_m)-g_{2k-1}(y)\vert\right)+1\Big)\varepsilon\\
&<0.
\end{aligned}
}
$$
Then we can similarly prove that 
$$
\begin{aligned}
	h_{2k}(x_1, y_1, \cdots, x_k, b_{k1}, \cdots, x_m, y_m)*h_{2k}(x_1, y_1, \cdots, x_k, b_{k2}, \cdots, x_m, y_m)<0.
\end{aligned}
$$
Therefore, by Poincare-Miranda Theorem, there exists a $t\in D$ such that $h(t)=0$. Hence, we can conclude that the map $f:[0, 1]^m\rightarrow \mathbb{R}^{2m}$ can not be injective.

\subsection{Proof of Theorem \ref{thm_Leaky_equivalent}}\label{appendix_Leaky_equivalent}
For a fixed parameter $\beta\in (0, 1)\cup (1, +\infty)$, Corollary 11 in (\cite{Hwang}) shows that $N_{m, n, k}^{LeakyReLU_\beta}\prec N_{m, n, k}^{LeakyReLU}$. 

Since 
\begin{align*}
	LeakyReLU_\frac{1}{\alpha}=T_{-\frac{1}{\alpha} I_{k\times k}, 0_{k\times k}}\circ LeakyReLU_\alpha \circ T_{-I_{k\times k}, 0_{k\times k}},
\end{align*} 
where $I_{k\times k}$ and $0_{k\times k}$ are $k\times k$ identity matrix and $k\times k$ zero matrix, respectively. It is evident that $N_{m, n, k}^{LeakyReLU_\alpha}=N_{m, n, k}^{LeakyReLU_{\frac{1}{\alpha}}}$, so we only need to consider the cases with $\alpha\in (0, 1]$. 

To prove the reverse relation $N_{m, n, k}^{LeakyReLU}\prec N_{m, n, k}^{LeakyReLU_\beta}$, for $\Phi_{LeakyReLU}\in N_{m, n, k}^{LeakyReLU}$ represented by
\begin{align*}
	\Phi_{LeakyReLU}= T_{W_L, b_L} \circ LeakyReLU_{\alpha_L} \circ \cdots \circ LeakyReLU_{\alpha_1} \circ T_{W_0, b_0},
\end{align*}
we can prove that $LeakyReLU_\alpha \prec N_{1, 1, 1}^{LeakyReLU_\beta}$ for any $\alpha\in(0, 1]$. Then for any $\varepsilon>0$ and a compact set $K$, we can substitute $LeakyReLU_{\alpha_i}$ with a $g_{LR_\beta}^{(i)}\in N_{k, k, k}^{LeakyReLU_\beta}$ represented by
\begin{align*}
	g_{LR_\beta}^{(i)} &=T_{\lambda_j^{(i)} I_{k\times k}, b_j^{(i)} I_k}\circ LeakyReLU_\beta \circ  \cdots \circ LeakyReLU_\beta \circ T_{\lambda_0^{(i)} I_{k\times k}, b_0^{(i)} I_k},
\end{align*}
such that
\begin{align*}
	\vert\vert \Phi_{LeakyReLU}(x) - T_{W_L, b_L} \circ g_{LR_\beta}^{(L)} \circ \cdots \circ g_{LR_\beta}^{(1)}\circ T_{W_0, b_0}(x)\vert\vert_{\infty}<\varepsilon
\end{align*}
holds for any $x\in K$, where $I_k$ in $g_{LR_\beta}^{(i)}$ is the $k$-dimensional column vector whose elements are all 1.

To prove that $LeakyReLU_\alpha \prec N_{1, 1, 1}^{LeakyReLU_\beta} \enspace (\alpha\in(0, 1])$, when $\alpha=1$, $\Phi_1(x)=T_{1, 0}(x)\in N_{1, 1, 1}^{LeakyReLU_\beta}$ equals to $LeakyReLU_1$. When $\alpha\in(0, 1)$, for any $n\in\mathbb{N}_+$, since we have
 \begin{align*}
	LeakyReLU_{\beta^n}&=T_{1, 0}\circ LeakyReLU_\beta\circ \cdots \circ LeakyReLU_\beta \circ T_{1, 0},\\
	LeakyReLU_{\beta^{-n}}&=(T_{-\frac{1}{\beta}, 0}\circ LeakyReLU_\beta \circ T_{-1, 0})\circ \cdots \circ  (T_{-\frac{1}{\beta}, 0}\circ LeakyReLU_\beta \circ T_{-1, 0}),
\end{align*}
such that for any $k\in \mathbb{Z}$, $LeakyReLU_{\beta^k}$ can be expressed by a NN activated by $LeakyReLU_\beta$. Without loss of generality, we assume that $\alpha\notin\{\beta^k\vert k\in \mathbb{Z}\}$. It is easy to see that there exist $\beta_1, \beta_2\in \{\beta^k\vert k\in \mathbb{Z}\}$, such that $0<\beta_1<\alpha<\beta_2$. For any $\varepsilon>0$, the sequence of LeakyReLU NNs $\{\Phi_i\vert i\in \mathbb{N}_+\}$ can be defined as follows:
\begin{align*}
	\Phi_1(x)&=T_{1, 0}\circ LeakyReLU_{\beta_2} \circ T_{1, 0}(x),\\
	\Phi_2(x)&=T_{1, -b} \circ LeakyReLU_{\frac{\beta_1}{\beta_2}} \circ T_{1, b}\circ \Phi_1(x),\\
	&\vdots\\
	\Phi_{2k-1}(x)&=T_{1, -(k-1)\varepsilon} \circ LeakyReLU_{\frac{\beta_2}{\beta_1}} \circ T_{1, (k-1)\varepsilon} \circ \Phi_{2k-2}(x),\\
	\Phi_{2k}(x)&=T_{1, -(k-1)\varepsilon-b}\circ LeakyReLU_{\frac{\beta_1}{\beta_2}}\circ T_{1, (k-1)\varepsilon+b}\circ \Phi_{2k-1}(x),
\end{align*}
where $b$ is a constant 
\begin{align*}
	b=\frac{\beta_1^{-1}-\alpha^{-1}}{\beta_1^{-1}-\beta_2^{-1}} * \varepsilon.
\end{align*}
We can verify the approximation property that $\vert\Phi_{2k}(x)-LeakyReLU_\alpha\vert \leq \varepsilon$ holds for any $x\in [-\frac{k\varepsilon}{\alpha}, +\infty)$.

Consequently, for any compact set $K\subset \mathbb{R}$, any $\varepsilon>0$ and a LeakyReLU activation function $LeakyReLU_\alpha$, there exists a NN $\Phi(x)$ activated by $LeakyReLU_\beta$ such that $\vert LeakyReLU_\alpha(x)-\Phi(x)\vert <\varepsilon$, which implies that $N_{m, n, k}^{LeakyReLU}\prec N_{m, n, k}^{LeakyReLU_\beta}$.

\subsection{Proof of Theorem \ref{thm_variant_ReLU_sim}}\label{appendix_variant_ReLU_sim}
Corollary 11 in (\cite{Hwang}) implies that $N_{m, n, k}^{\sigma}\prec N_{m, n, k}^{LeakyReLU}$ for any continuous increasing activation function $\sigma$. We now show the property $N_{m, n, k}^{LeakyReLU}\prec N_{m, n, k}^{ELU}$, which implies the relation $N_{m, n, k}^{ELU}\sim N_{m, n, k}^{LeakyReLU}$.

To prove this result, for $\Phi_{LeakyReLU}\in N_{m, n, k}^{LeakyReLU}$ represented by
\begin{align*}
	\Phi_{LeakyReLU}= T_{W_L, b_L} \circ LeakyReLU_{\alpha_L} \circ \cdots \circ LeakyReLU_{\alpha_1} \circ T_{W_0, b_0},
\end{align*}
we can prove that $LeakyReLU_\alpha\prec N_{1, 1, 1}^{ELU}$ for any $\alpha>0$. Then for any $\varepsilon>0$ and compact set $K$, we can substitute $LeakyReLU_{\alpha_i}$ with a $g_{ELU}^{(i)}\in N_{k, k, k}^{ELU}$ represented by
\begin{align*}
	g_{ELU}^{(i)} &=T_{\lambda_j^{(i)} I_{k\times k}, b_j^{(i)} I_k} \circ ELU_j \circ  \cdots \circ ELU_1 \circ T_{\lambda_0^{(i)} I_{k\times k}, b_0^{(i)} I_k},
\end{align*}
such that
\begin{align*}
	\vert\vert \Phi_{LeakyReLU}(x) - T_{W_L, b_L} \circ g_{ELU}^{(L)} \circ \cdots \circ g_{ELU}^{(1)}\circ T_{W_0, b_0}(x)\vert\vert_{\infty}<\varepsilon
\end{align*}
holds for any $x\in K$, where $I_{k\times k}$, $I_k$ and $ELU_j$ in $g_{ELU}^{(i)}$ are $k\times k$ identity matrix, $k$-dimensional column vector whose elements are all 1 and ELU activation function, respectively.

To prove that $LeakyReLU_\alpha\prec N_{1, 1, 1}^{ELU}\enspace (\alpha>0)$, for any compact set $K=[a, b]$ with $a<0<b$, we can construct a sequence of ELU NNs $\{\Phi_i\vert i\in \mathbb{N}_+\}$, such that for any $\varepsilon>0$ and fixed $\alpha>0$, there exists a $\Phi_N\in \{\Phi_i\vert i\in \mathbb{N}_+\}$ such that $\vert \Phi_N(x)-LeakyReLU_\alpha(x) \vert<\varepsilon \enspace(\forall x\in [a, b])$. The networks $\Phi_i(x)\enspace(i\in \mathbb{N}_+)$ are defined as follows:
\begin{align*}
   \Phi_1(x)&=T_{1, 0}\circ ELU_{k_1} \circ T_{1, 0}, \enspace k_1=\frac{-\varepsilon}{e^{\frac{-\varepsilon}{\alpha}}-1},\\
   \Phi_2(x)&=T_{1, -\varepsilon} \circ ELU_{k_2}\circ T_{1, \varepsilon}\circ \Phi_1(x), \enspace k_2=\frac{-\varepsilon}{e^{\Phi_1(\frac{-2\varepsilon}{\alpha})+\varepsilon}-1},\\
   &\vdots\\
   \Phi_n(x)&=T_{1, -(n-1)\varepsilon} \circ ELU_{k_n} \circ T_{1, (n-1)\varepsilon}\circ \Phi_{n-1}(x), \enspace k_n=\frac{-\varepsilon}{e^{\Phi_{n-1}(\frac{-n\varepsilon}{\alpha})+(n-1)\varepsilon}-1}.
\end{align*}
We can find that $\Phi_k$ and $LeakyReLU_{\alpha}$ strictly increase in $\mathbb{R}$, and $\Phi_{k}^{\prime}>0$ holds for every interval $(\frac{-i\varepsilon}{\alpha}, \frac{-(i-1)\varepsilon}{\alpha})\enspace(i\in \{1, 2, \cdots, k\})$, and we have
\begin{align*}
	\Phi_k\left(\frac{-i\varepsilon}{\alpha}\right)=LeakyReLU_\alpha\left(\frac{-i\varepsilon}{\alpha}\right) \enspace (\forall i \in \{1, 2, \cdots, k\}),
\end{align*}
then for every $x\in [\frac{-k \varepsilon}{\alpha}, 0]$, we have
\begin{align*}
    0 \leq LeakyReLU_{\alpha}(x)-\Phi_{k}(x)\leq LeakyReLU_{\alpha}\left(\frac{-i \varepsilon}{\alpha}\right)-\Phi_{k}\left(\frac{-(i+1)\varepsilon}{\alpha}\right)\leq\varepsilon.
\end{align*}
Since $LeakyReLU_\alpha(x)=\Phi_k(x)$ holds for every $x\in [0, +\infty)$, for any compact set $K\subset \mathbb{R}$, we can choose a ELU NN $\Phi_N$ such that the approximation property $\vert \Phi_N(x)-LeakyReLU_\alpha(x)\vert \leq \varepsilon$ holds for every $x\in K$. Therefore, we have $LeakyReLU_\alpha\prec N_{1, 1, 1}^{ELU}$, which leads to the property $N^{LeakyReLU}_{m, n, k}\prec N^{ELU}_{m, n, k}$.

For SELU activation function, since $SELU_{(1, \alpha)}=ELU_{\alpha}$, we have $ELU\subset SELU$, which immdeiately leads to the property $N^{LeakyReLU}_{m, n, k}\prec N^{SELU}_{m, n, k}$. Consequently, we have $N_{m, n, k}^{LeakyReLU}\sim N_{m, n, k}^{SELU}$.
\subsection{Proof of Theorem \ref{thm_variant_ReLU_appro}}\label{appendix_variant_ReLU_appro}
Suppose that the NN with ReLU is represented as
\begin{align*}
	\Phi_{ReLU} &= T_{W_L, b_L} \circ ReLU \circ \cdots \circ ReLU \circ T_{W_0, b_0},
\end{align*}
we can prove that $N_{m, n, k}^{ReLU} \prec N_{m, n, k}^{\sigma}$ when $\sigma=CELU, Softplus$.

For $\sigma=CELU_\beta\enspace(\beta\in (0, +\infty))$, by Lemma \ref{lem_iterate_approReLU}, we can prove that for any compact domain $K$ and $\varepsilon>0$, there exists $\delta>0$ and $N\in \mathbb{N}_+$, the $g_\sigma$ which is obtained by 
\begin{align*}
	g_\sigma &=T_{I_{k\times k}, 0_{k\times k}} \circ \sigma \circ \cdots \circ \sigma \circ T_{I_{k \times k}, 0_{k\times k}}
\end{align*}
satisfies the uniform approximation property $\vert \vert ReLU(x) - g_\sigma(x)\vert\vert_\infty<\varepsilon \enspace (x\in K)$, where $I_{k\times k}$ and $0_{k\times k}$ are the $k\times k$ identity matrix and $k\times k$ zero matrix, respectively. 

For the case that $\sigma=Softplus_\beta\enspace(\beta\in(0, +\infty))$, we consider $f_n$:
\begin{align*}
	f_n(x)=\frac{Softplus_\beta(nx)}{n}=\frac{\ln(1+e^{n\beta x})}{nx}.
\end{align*}
When $x\geq 0$, by the Mean Value Theorem, we have $0\leq f_n(x)-ReLU(x)\leq \frac{1}{n \beta}$. When $x<0$, we have $0\leq f_n(x)-ReLU(x)=\frac{\ln(1+e^{n\beta x})}{n\beta}\leq \frac{2}{n\beta}$. Therefore, for any $\varepsilon>0$ and compact domain $K$, there exists $N\in\mathbb{N}_+$, then the $g_\sigma$ which is obtained by
\begin{align*}
	g_\sigma=T_{\frac{1}{N} I_{k\times k}, 0_{k\times k}} \circ Softplus_\beta \circ T_{N I_{k\times k}, 0_{k\times k}}
\end{align*}
satisfies that $\vert\vert ReLU(x)-g_\sigma\vert\vert_{\infty}<\varepsilon\enspace (x\in K)$.

Through the substitution of ReLU in $\Phi_{ReLU}$ with $g_\sigma$, for any $\varepsilon>0$ and compact set $K$, we can construct a new NN $\Phi_\sigma$ activated by $\sigma$, which is represented as
\begin{align*}
	\Phi_\sigma=T_{W_L, b_L}\circ g_\sigma \circ \cdots \circ g_\sigma \circ T_{W_0, b_0},
\end{align*}
such that $\vert\vert\Phi_{ReLU}-\Phi_\sigma(x)\vert\vert_{\infty}<\varepsilon \enspace (x\in K)$, when the network depth is suffcient.

To prove that $N_{m, n, k}^{\sigma} \prec N_{m, n, k}^{LeakyReLU}$ holds for $\sigma=CELU, Softplus$, we can accomplish this proof by the Corollary 11 of (\cite{Hwang}). Theorem \ref{thm_variant_ReLU_sim} implies that $N_{m, n, k}^{\sigma} \prec N_{m, n, k}^{ELU}$ and $N_{m, n, k}^{\sigma} \prec N_{m, n, k}^{SELU}$, then we finish the proof.

\section{Parameter Settings of Networks}\label{appendix_diskNN}
In this section, the parameters for the network in Fig \ref{fig_disk_w4} are presented. The weight matrices $W_i$ and bias vectors $b_j$ are defined according to Equation (\ref{eq_xk}). Elements in matrices are retained to four decimal places when their absolute values exceed $10^{-4}$.

The NN is activated by $ELU_1$, the weight matrices $\{W_i\}$ and bias vectors $\{b_i\}$ are as follows.
$$
\begin{array}{ll}
	&\scriptsize{
		W_0=\left(
		\begin{array}{cc}
			0.2242 & -1.4544\\
			-0.2940 & -1.1861\\
			-1.0563 & -0.8876\\
			-0.4160 & 0.2796
		\end{array}
		\right),
		W_1=\left(
		\begin{array}{cccc}
			2.2738 & -1.9410 & 1.6917 & 2.0169\\
			1.2273 & -1.0533 & 0.6385 & -1.4016\\
			-0.6659 &  0.6353 & 0.2787 & 0.3609\\
			0.9003 & -1.4725 & 0.3804 & 0.2183
		\end{array}
		\right),
	}\\
	&\scriptsize{
		W_2=\left(
		\begin{array}{cccc}
			-0.0287 & 0.5450 & -1.1640 & -0.6562\\
			0.6860 & 0.7093 & -0.1849 & -0.7604\\
			1.0787 & -0.7736 & -5.2229 & -0.9054\\
			1.1337 & -1.7195 & -2.2162 & -1.9987
		\end{array}
		\right),
		W_3=\left(
		\begin{array}{cccc}
			-2.3787 & 0.5654 & 0.7364 & -1.0082\\
			5.3874 & -1.9893 & -0.2412 & 4.3542\\
			-0.9789 & 0.3109 & -0.5157 & -4.2987\\
			-0.2643 & 0.2798 & -1.4433 & 3.0057
		\end{array}
		\right),}\\
	&\scriptsize{
		W_4=\left(
		\begin{array}{cccc}
			-1.5792 & -0.3459 & 0.6400 & 0.3576\\
			1.0109 & -0.8013 & 0.7878 & 0.0865
		\end{array}
		\right),}
\end{array}
$$
$$
\begin{array}{ll}
	&\scriptsize{
		b_0=\left(
		\begin{array}{c}
			-1.1365\\
			0.1036\\
			-0.1027\\
			-0.0013
		\end{array}
		\right),
		b_1=\left(
		\begin{array}{c}
			0.8346\\
			0.4269\\
			-0.1510\\
			-0.1793
		\end{array}
		\right),
		b_2=\left(
		\begin{array}{c}
			-0.4180\\
			-0.7109\\
			-0.3882\\
			-0.9424
		\end{array}
		\right),
		b_3=\left(
		\begin{array}{c}
			0.5022\\
			0.4852\\
			-0.3491\\
			0.2299
		\end{array}
		\right),
		b_4=\left(
		\begin{array}{c}
			1.4647\\
			-0.7255
		\end{array}
		\right).}
\end{array}
$$
\vskip 0.2in
\bibliography{ReferenceFile}

@article{UAP_sigmoidal,
	title={Approximation by superpositions of a sigmoidal function},
	author={Cybenko, George},
	journal={Mathematics of control, signals and systems},
	volume={2},
	number={4},
	pages={303--314},
	year={1989},
	publisher={Springer}
}

@article{UAP_nonpoly,
	title={Multilayer feedforward networks with a nonpolynomial activation function can approximate any function},
	author={Leshno, Moshe and Lin, Vladimir Ya and Pinkus, Allan and Schocken, Shimon},
	journal={Neural networks},
	volume={6},
	number={6},
	pages={861--867},
	year={1993},
	publisher={Elsevier}
}

@article{Poincare-Miranda,
	title={Generalizations of Bolzano intermediate value theorem for balls and convex domains},
	author={Yang, Xiao-Song},
	journal={Fixed Point Theory},
	volume={24},
	number={1},
	year={2023}
}

@Article{zeroset_func,
	author = {Mityagin, B.S},
	year = {2020},
	month = {03},
	pages = {529–530},
	title={The zero set of a real analytic function},
	volume = {107},
	number= {3},
	journal={Mathematical notes},
    isbn = {1573-8876},
	doi={https://doi.org/10.1134/S0001434620030189}
}

@inproceedings{Kidger,
	title={Universal approximation with deep narrow networks},
	author={Kidger, Patrick and Lyons, Terry},
	booktitle={Conference on learning theory},
	pages={2306--2327},
	year={2020},
	organization={PMLR}
}

@inproceedings{Kim,
	author = {Kim, Namjun and Min, Chanho and Park, Sejun},
	booktitle = {International conference on representation learning},
	editor = {B. Kim and Y. Yue and S. Chaudhuri and K. Fragkiadaki and M. Khan and Y. Sun},
	pages = {3097--3130},
	title = {Minimum width for universal approximation using ReLU networks on compact domain},
	url = {https://proceedings.iclr.cc/paper_files/paper/2024/file/0d61c5f5ef91e7e8a091b7b8f72b853c-Paper-Conference.pdf},
	volume = {2024},
	year = {2024}
}

@article{Park,
 	title={Minimum width for universal approximation},
 	author={Park, Sejun and Yun, Chulhee and Lee, Jaeho and Shin, Jinwoo},
 	journal={arXiv preprint arXiv:2006.08859},
 	year={2020}
 }

@article{Johnson,
	title={Deep, skinny neural networks are not universal approximators},
	author={Johnson, Jesse},
	journal={arXiv preprint arXiv:1810.00393},
	year={2018}
}

@article{Cai,
	title={Achieve the minimum width of neural networks for universal approximation},
	author={Cai, Yongqiang},
	journal={arXiv preprint arXiv:2209.11395},
	year={2022}
}

@article{Hanin,
	title={Approximating continuous functions by relu nets of minimal width},
	author={Hanin, Boris and Sellke, Mark},
	journal={arXiv preprint arXiv:1710.11278},
	year={2017}
}

@article{petersen_DL,
	title={Mathematical theory of deep learning},
	author={Petersen, Philipp and Zech, Jakob},
	journal={arXiv preprint arXiv:2407.18384},
	year={2024}
}

@article{Hwang,
	title={Minimum width for deep, narrow mlp: A diffeomorphism approach},
	author={Hwang, Geonho},
	journal={arXiv preprint arXiv:2308.15873},
	year={2023}
}

@book{Munkres,
	author = {James Munkres}, 
	title = {Topology},
	publisher = {Pearson Education},
	year = 2014
}

@article{Adam,
	title={Adam: A method for stochastic optimization}, 
	author={Diederik P. Kingma and Jimmy Ba},
	year={2017},
	journal={arXiv preprint arXiv:2407.18384},
}

@inproceedings{Resnet,
	title={Deep residual learning for image recognition},
	author={He, Kaiming and Zhang, Xiangyu and Ren, Shaoqing and Sun, Jian},
	booktitle={Proceedings of the IEEE conference on computer vision and pattern recognition},
	pages={770--778},
	year={2016}
}

@article{CNN_dropout,
	title={ImageNet classification with deep convolutional neural networks},
	author={Krizhevsky, Alex and Sutskever, Ilya and Hinton, Geoffrey E},
	journal={Communications of the ACM},
	volume={60},
	number={6},
	pages={84--90},
	year={2017},
	publisher={AcM New York, NY, USA}
}

@inproceedings{Yarotsky,
	title={Optimal approximation of continuous functions by very deep ReLU networks},
	author={Yarotsky, Dmitry},
	booktitle={Conference on learning theory},
	pages={639--649},
	year={2018},
	organization={PMLR}
}

@inproceedings{Park_memorization,
	title={Provable memorization via deep neural networks using sub-linear parameters},
	author={Park, Sejun and Lee, Jaeho and Yun, Chulhee and Shin, Jinwoo},
	booktitle={Conference on learning theory},
	pages={3627--3661},
	year={2021},
	organization={PMLR}
}

@article{Vardi,
	title={On the optimal memorization power of relu neural networks},
	author={Vardi, Gal and Yehudai, Gilad and Shamir, Ohad},
	journal={arXiv preprint arXiv:2110.03187},
	year={2021}
}

@article{PH_NN,
title={Topology of deep neural networks},
author={Naitzat, Gregory and Zhitnikov, Andrey and Lim, Lek-Heng},
journal={Journal of machine learning research},
volume={21},
number={184},
pages={1--40},
year={2020}
}

@article{Ricci_cur,
	title={Comparative analysis of two discretizations of Ricci curvature for complex networks},
	author={Samal, Areejit and Sreejith, RP and Gu, Jiao and Liu, Shiping and Saucan, Emil and Jost, J{\"u}rgen},
	journal={Scientific reports},
	volume={8},
	number={1},
	pages={8650},
	year={2018},
	publisher={Nature Publishing Group UK London}
}

@inproceedings{ReLU6,
	title={A Liver Cirrhosis Reorganization in Medical Images Using Enhanced ResNet-50 with Relu6 Activation Function},
	author={Lahotil, Anusha and Yadav, Apoorva and Bhandari, Nimisha},
	booktitle={2024 4th International Conference on Ubiquitous Computing and Intelligent Information Systems (ICUIS)},
	pages={339--346},
	year={2024},
	organization={IEEE}
}

@article{HardTanh,
	title={Exact representation and efficient approximations of linear model predictive control laws via HardTanh type deep neural networks},
	author={Lupu, Daniela and Necoara, Ion},
	journal={Systems \& control letters},
	volume={186},
	pages={105742},
	year={2024},
	publisher={Elsevier}
}

@inproceedings{ICLR2025_narrowbetter,
	author = {Zhang, Zechen and Sompolinsky, Haim},
	booktitle = {International conference on representation learning},
	pages = {48146--48169},
	title = {When narrower is better: the narrow width limit of Bayesian parallel branching neural networks},
	year = {2025}
}

@article{limitation_of_large_width,
	title={The limitations of large width in neural networks: A deep Gaussian process perspective},
	author={Pleiss, Geoff and Cunningham, John P},
	journal={Advances in neural information processing systems},
	volume={34},
	pages={3349--3363},
	year={2021}
}

@article{initialization_narrowNN,
	title={Improved weight initialization for deep and narrow feedforward neural network},
	author={Lee, Hyunwoo and Kim, Yunho and Yang, Seung Yeop and Choi, Hayoung},
	journal={Neural networks},
	volume={176},
	pages={106362},
	year={2024},
	publisher={Elsevier}
}

@book{Measure,
	author = {J. L. Doob}, 
	title = {Measure theory},
	publisher = {Springer New York, NY},
	year = 1993
}

@article{NTK,
	title={Neural tangent kernel: Convergence and generalization in neural networks},
	author={Jacot, Arthur and Gabriel, Franck and Hongler, Cl{\'e}ment},
	journal={Advances in neural information processing systems},
	volume={31},
	year={2018}
}
\end{document}